\documentclass[11pt]{article}

 \oddsidemargin .25in  
\evensidemargin .25in \marginparwidth 0.07 true in
\topmargin -0.5in \addtolength{\headsep}{0.25in}
\textheight 8.5 true in    
\textwidth 6.0 true in    
\widowpenalty=10000 \clubpenalty=10000

\newenvironment{proof}{\par\noindent{\bf Proof\ }}{\hfill\BlackBox\\[2mm]}
\newtheorem{lemma}{Lemma}

\newtheorem{proposition}{Proposition}

   \usepackage[dvips]{graphicx}
\usepackage{amssymb,amsmath,color}

\newcommand{\BEAS}{\begin{eqnarray*}}
\newcommand{\EEAS}{\end{eqnarray*}}
\newcommand{\BEA}{\begin{eqnarray}}
\newcommand{\EEA}{\end{eqnarray}}
\newcommand{\BEQ}{\begin{equation}}
\newcommand{\EEQ}{\end{equation}}
\newcommand{\BIT}{\begin{itemize}}
\newcommand{\EIT}{\end{itemize}}
\newcommand{\BNUM}{\begin{enumerate}}
\newcommand{\ENUM}{\end{enumerate}}
\newcommand{\BA}{\begin{array}}
\newcommand{\EA}{\end{array}}
\newcommand{\diag}{\mathop{\rm diag}}
\newcommand{\Diag}{\mathop{\rm Diag}}

\newcommand{\argmax}{\mathop{\rm argmax}}

\newcommand{\Tr}{\mathop{ \rm tr}}
\newcommand{\tr}{\mathop{ \rm tr}}
\newcommand{\sign}{\mathop{ \rm sign}}
\newcommand{\idm}{I}
\newcommand{\rb}{\mathbb{R}}
\newcommand{\BlackBox}{\rule{1.5ex}{1.5ex}}  

\newcommand{\mysec}[1]{Section~\ref{sec:#1}}
\newcommand{\eq}[1]{Eq.~(\ref{eq:#1})}
\newcommand{\myfig}[1]{Figure~\ref{fig:#1}}

\def \C{ {\mathcal{C}}}
\def \D{ {\mathcal{D}}}
\def \P{ {\mathcal{P}}}
\def \S{ {\mathcal{S}}}
\def \K{ {\mathcal{K}}}
\def \L{ {\mathcal{L}}}

\def \H{ {\mathcal{H}}}

\def \U{ {\mathcal{U}}}
\def \V{ {\mathcal{V}}}

\title{Convex relaxations of structured matrix factorizations}


\author{Francis Bach \\
Sierra Project-team\\
D\'epartement d'Informatique de l'Ecole Normale Sup\'erieure\\
Paris, France
\\{\tt francis.bach@ens.fr}
        }

\begin{document}

\maketitle

\begin{abstract}
We consider the factorization of a rectangular matrix $X  $ into a positive linear combination of rank-one factors of the form $u v^\top$, where $u$ and $v$ belongs to certain sets $\U$ and $\V$, that may encode specific structures regarding the factors, such as positivity or sparsity. In this paper, we show that computing the optimal decomposition is equivalent to computing a certain gauge function of $X$ and we provide a detailed analysis of these gauge functions and their polars. Since these gauge functions are typically  hard to compute, we present semi-definite relaxations and several algorithms that may recover approximate decompositions with approximation guarantees. We illustrate our results with simulations on finding decompositions with elements in $\{0,1\}$. 
As side contributions, we present a detailed analysis of variational quadratic representations of norms as well as a new iterative basis pursuit algorithm that can deal with inexact first-order oracles.
\end{abstract}

 \section{Introduction}

 Structured matrix factorization has many applications in various areas of science engineering, i.e., clustering and principal component analysis~\cite{duda2012pattern,murphy2012machine}, source separation~\cite{lee1999learning,fevotte2009nonnegative}, signal processing~\cite{Aharon2006}, machine learning~\cite{Mairal2010}, and all domains where  reduced representations are desired.
 
 Without any structure, traditional principal component analysis may be solved exactly in polynomial time through a singular value decomposition. However, adding additional structure on the components $U$ and $V$ of the factorization of $X = UV^\top$ (e.g., sparsity, non-negativity or discreteness) is most often done through alternating minimization (with respect to $U$ and $V$). While all steps are usually done through convex optimization, the problem is not jointly convex, and there are typically multiple local minima, and algorithms usually come with no convergence guarantees. For example, in presence of positivity constraints, the problem of non-negative matrix factorization (NMF) may not be solved in polynomial-time in general, and most algorithms for NMF (e.g.,~\cite{lee1999learning}) perform a form a block coordinate descent with no guarantees (see hardness results and particular situations of actual solvability in~\cite{arora2012computing,recht2012factoring}). In this paper, we follow a convex relaxation approach.

We impose some structure on each column of $U$ and $V$ and consider a general convex framework which amounts to computing a certain gauge function (such as a norm) at $X$, from which the decomposition may be obtained (for example, the nuclear norm leads to the usual singular value decomposition). This convex framework corresponds to removing any rank constraint on the decomposition and has appeared under various forms in the literature, as summing norms~\cite{jameson1987summing}, decomposition norms~\cite{bach2008convex} or (a special case of) atomic norms~\cite{chandrasekaran2012convex}. This is presented in details (equivalent representations,  rotation-invariant cases,  weighted nuclear norm formulations) in \mysec{decomposition}. An interesting aspect is that the gauge functions we consider are polar to generalizations of matrix norms, which are commonly used in many areas of applied mathematics, in particular in robust optimization and control~\cite{ben2009robust}.

The statistical and recovery properties of these norms and their relaxations have been studied in several contexts~\cite{chandrasekaran2012convex,recht2010guaranteed}; in this paper, we focus on optimization aspects.
The convex framework we introduce in~\mysec{decomposition} only lead to polynomial-time algorithms in few situations (e.g., the nuclear norm based on the singular value decomposition). In \mysec{relaxation}, we consider computable additional relaxations based on semi-definite programming. These may be used to compute the related gauge functions as well as their polar, with constant-factor approximation guarantees in some cases (see \mysec{guarantees}). The first setting where one can get dimension-independent guarantees has already been studied by~\cite{nesterov1998semidefinite,tal} and corresponds to gauge functions that have variationl  diagonal representations. We also consider a more general setting with dimension-dependent bounds.

A key practical problem is to obtain not only a lower-bound on the value of the gauge function, but also an \emph{explicit} decomposition which preserves the approximation guarantees. We present in \mysec{algorithm}  iterative conditional gradient algorithms and their analysis, which extend existing results in several ways: (a) we obtain convergence guarantees even when the polar gauge function may be approximately computed with a \emph{multiplicative} approximatio ratio---earlier work~\cite{jaggi} considers only additive approximations, (b) following~\cite{SGCG,zaidcg,zhang2012accelerated}, they may be applied to penalized versions of the problem, i.e., to solve a generalized basis pursuit problem, (c) under some additional assumptions, they may find approximate decompositions of $X$ that converge linearly.

Finally, our framework relies on variational representations of gauge functions and in particular norms as maxima and minima of quadratic functions. These representations have been already used in several contexts (machine learning, signal processing, optimization, see, e.g.,~\cite{fot} and references therein). In this paper, we provide in \mysec{quadratic} a thorough analysis of these decompositions (minimal and maximal representations, duality between lower and upper bounds, sufficient and necessary conditions for diagonal   or rotation-invariant representations).

 \paragraph{Notation.} Given a positive integer $d$, we denote by $\mathcal{S}_d$ the vector space of symmetric matrices, and  by $\S_d^+$ the subset of positive-semidefinite matrices. For $x \in \rb^d$ and $ p \in [1,+ \infty]$, $\|x\|_p$ denotes the $\ell_p$-norm of $x$, while for a matrix $X \in \rb^{n \times d}$, $\| X\|_p$ denotes the $\ell_p$-norm of $X$, seen as a vector. That is, if ${\rm vec}(X) \in \rb^{n d}$ denotes the vector obtained by stacking the columns of $X$, $\| X\|_p = \| {\rm vec}(X)\|_p$. The Frobenius norm of $X$ is denoted $\| X\|_F = \sqrt{ \tr X^\top X} = \| {\rm vec}(X)\|_2$, the nuclear norm (a.k.a.~the trace norm) is denoted as $\| X\|_\ast$, and is equal to the sum of the singular values of $X$, while the operator norm (largest singular value of $X$) is denoted $\| X\|_{\rm op}$. Finally, $1_d$ denotes the vector in $\rb^d$ with all components equal to one.

  \section{Review of gauge function theory}
 \label{sec:gauge}
 In this section, we present relevant concepts and results from convex analysis.
 These tools are needed because the type of structure we want to impose go beyond what can be characterized by norms (such as positivity).
  See~\cite{rockafellar97,borwein2006caa} for more details on gauge functions and their properties.
  
  \paragraph{Gauge functions.} 
  Given a closed convex set $\C \subset \rb^d$, the gauge function $\gamma_\C$ is the function
   $$
 \gamma_\C(x) = \inf \{ \lambda \geqslant 0, \ x \in \lambda \C \}.
 $$
 The domain ${\rm dom}(\gamma_\C)$ of $\gamma_\C$ is the cone generated by $\C$, i.e., $\rb_+ \C$ (that is, $\gamma_\C(x) < +\infty$ if and only if $x \in \rb_+ \C$).  The function $\gamma_\C$ is equivalently defined as the homogeneized version of the indicator function $I_\C$ (with values $0$ on $\C$ and $+\infty$ on its complement), i.e., 
$
\gamma_\C(x) = \inf_{\lambda \geqslant 0} \lambda I_\C\Big(\frac{x}{\lambda} \Big).
$
From this interpretation, $\gamma_\C$ is therefore a convex function. Moreover, it is positively homogeneous and has non-negative values. Conversely, any function $\gamma$ which satisfies these three properties is the gauge function of the set $\{ x \in \rb^d, \ \gamma(x) \leqslant 1\}$.

Several closed convex sets $\C$ lead to the same gauge function. However the unique closed convex set containing the origin is $\{ x \in \rb^d, \ \gamma_\C(x) \leqslant 1\}$. In general, we have for any closed convex set $\C$, $\{ x \in \rb^d, \ \gamma_\C(x) \leqslant 1\} = {\rm hull}( \C \cup \{0\})$.

  Classical examples are norms, which are gauge functions coming from their unit balls: norms are gauge functions $\gamma$ which (a) have a full domain, (b) are such that $\gamma(x) = 0 \Leftrightarrow x = 0$, and (c) are even, which corresponds to sets $\C$  which (a) have 0 in its interior, (b) are compact and (c) centrally symmetric. In general, the set $\C$ might neither be compact nor centrally symmetric, for example, when $\C = \{ x \in \rb_d^+, \ 1_d^\top x \leqslant  1\}$. Moreover, a gauge function may take infinite values (such as in the previous case, for any vector with a strictly negative component).

  \paragraph{Polar sets and functions.}
  Given any set $\C$ (not necessarily convex), the polar of~$\C$ is the set $\C^\circ$ defined as
  $$
  \C^\circ = \{ y \in \rb^d, \ \forall x \in \C, \ x^\top y \leqslant 1\}.
  $$
  It is always closed and convex. Moreover, the polar of $\C$ is equal to the polar of the closure of  ${\rm hull}( \C \cup \{0\})$.
  
  When $\C$ is the unit ball of a norm $\Omega$, $\C^\circ$ is the unit ball of the dual norm which we denote $\Omega^\circ$ (instead of the usual definition $\Omega^\ast$, because the Fenchel conjugate of $\Omega$ is not the dual norm, but the indicator function of the dual unit ball).
  
    If $\C$ is a closed convex set containing the origin, then $\C^{\circ \circ} = \C$---more generally, for any set~$\C$, $\C^{\circ \circ}$ is the closure of 
    ${\rm hull}( \C \cup \{0\})$.
The polarity is a one-to-one mapping from closed convex sets containing the origin to themselves. In this paper, we will also consider gauge functions associated with closed potentially non convex sets $\C$, in which case, we mean the gauge function associated to $\C^{\circ\circ} = {\rm hull}( \C \cup \{0\})$, i.e., $\gamma_\C = \gamma_{\C^{\circ\circ}}$.

  The Fenchel conjugate of $\gamma_\C$ is the indicator function of $\C^\circ$, i.e.,
  $
  \gamma_\C^\ast = I_{\C^\circ},
  $
  which is equivalent to $\gamma_\C = I_{\C^\circ}^\ast$, i.e., $\forall x \in \rb^d, \ \gamma_\C(x) = \sup_{y \in \C^\circ} x^\top y$.
  Given a gauge function $\gamma_\C$, we define its polar as the function $\gamma_\C^\circ$ given by
  $$
  \gamma_\C^\circ(y) = \inf \big\{ \lambda \geqslant 0, \  \forall x \in \rb^d, \ x^\top y \leqslant \lambda \gamma_\C(x) \big\}
  = \sup_{ x \in \rb^d} \frac{ x^\top y}{ \gamma_\C(x)},
  $$
  the last inequality being true only if $\gamma_\C(x) = 0 \Leftrightarrow x =0 $ (i.e., $\C$ compact).
  It turns out that 
  $
   \gamma_\C^\circ =  \gamma_{\C^\circ}.
  $
  This implies that $\gamma_{\C^\circ} = I_{ C^{\circ \circ} }^\ast$, i.e., $\gamma_{\C^\circ}(y) = \sup_{ x \in \C^{\circ \circ}} x^\top y  = \sup_{ x \in \C} (x^\top y)_+$. 
  For example, the polar of a norm is its dual norm. We have for all $x,y \in \rb^d$, the inequality that is well known for forms: $x^\top y \leqslant \gamma_\C(x) \gamma_{\C^\circ}(y)$. Finally, the Fenchel-conjugate of $x \mapsto \frac{1}{2} \gamma_\C(x)^2$ is $y \mapsto \frac{1}{2} \gamma_\C^\circ(y)^2$.

\paragraph{Operations on gauge functions.} For two closed convex sets $\C$ and $\D$ containing the origin, then for all $x \in \rb^d$, $ \max \{ \gamma_\C(x),\gamma_\D(x) \} = \gamma_{\C \cap \D}(x)$. Another combination, is the ``inf-convolution'' of $\gamma_\C$ and $\gamma_\D$, i.e.,
$x \mapsto \inf_{x = y+ z} \gamma_\C(z) + \gamma_\D(y)$, which is equal to $\gamma_{ {\rm hull}(C \cup D) }$. Moreover, $\gamma_{\C \cap \D}^\circ = \gamma_{ {\rm hull}(C^\circ \cup D^\circ) }$, or equivalently, $ ( C \cap \D)^\circ =  {\rm hull}(C^\circ \cup D^\circ) $. 

\paragraph{Links with convex hulls.}

Given a compact set $\P$ and its compact convex hull $\C$ (for example, $\P$ might be the set of extreme points of $\C$), we have
$
\P^\circ = \C^\circ,
$
since maxima of linear functions on $\C$ or $\P$ are equal.
 An alternative definition of $\gamma_\C$ is then  
 $$  \gamma_\C(x) = \min \bigg\{ \sum_{i \in I} \eta_i, \ (\eta_i)_{i \in I} \in \rb_+^I, \ (x_i)_{i \in I} \in \P^I,   \ I \mbox{ finite}, \ x = \sum_{i \in I} \eta_i x_i \bigg\}. $$
 Moreover, in the definition above, by Caratheodory's theorem for cones, we may restrict the cardinality of $I$ to be less than or equal to $d$.

\section{Representations of gauge functions through quadratic functions}
\label{sec:quadratic}

In this section, we consider a closed set $\U$ that contains the origin in its convex hull, and explore various representations of the gauge function $\gamma_\U = \gamma_{\U^{\circ\circ}}$ as a maxima or minima of quadratic functions.
This corresponds to respectively inner and outer approximations of the convex hull $\U^{\circ\circ}$ by ellipsoids. Note that unless otherwise stated, $\U$ might not be convex, might not be compact (i.e., $\gamma_\U(x)=0$ even if $x \neq 0$), $\U$ might not be centrally symmetric, and ${\rm dom}(\gamma_\U) = \rb_+ \U$ may be strictly included in $\rb^d$.  Since quadratic variational formulations have to be centrally symmetric, we consider symmetrized versions of gauge functions; we consider two ways of ``symmetrizing'' a gauge function: by intersecting $\U^{\circ\circ}$ and $-\U^{\circ\circ}$, which corresponds to the gauge function $\gamma_{ \U^{\circ\circ} \cap ( - \U^{\circ\circ})}(x) = \max\{ \gamma_\U(x),\gamma_\U(-x)\}$, or by intersecting $\U^{\circ}$ and $-\U^\circ$, which corresponds to $\gamma_{ \U \cup (-\U) } (x) = \gamma_{\U^\circ \cap (- \U^\circ)}^\circ(x) = \inf_{x = x_+ +  x_-} \big\{ \gamma_\U(x_+) + \gamma_\U(- x_-) \big\}$. These two gauge functions have domain ${\rm span}(\U)$, the vector space generated by $\U$. When $\U$ is centrally symmetric, then  these two gauge functions are equal to  $\gamma_\U$.

\subsection{Maxima of quadratic functions}
\label{sec:K}

We first consider \emph{closed convex} sets $\K \subset \S_d^+$ of symmetric positive definite matrices   such that  
\BEQ
\label{eq:K} \forall x \in {\rm span}(\U), \ \gamma_{ \U \cup (-\U) } (x)^2  = \max_{ M \in \, \K} x^\top M x . 
\EEQ
When $\U$ is symmetric, the variational formulation in \eq{K} leads to a representation of $\gamma_\U(x)^2$ as a convex function $I_\K^\ast(xx^\top)$ of $xx^\top$.
Note that in general, $\K$ is not unique. We now show that  there always exists a set $\K$ satisfying \eq{K}, and provide a description of the largest such set.
\begin{proposition}
\label{prop:Klargest}
Let $\P_\U = \{ xx^\top,  \ x \in \U\}$. Then   $\P_\U^\circ \cap \S_d^+$ is the larget closed convex  set $\K \subset \S_d^+$ of positive semidefinite matrices  such that
\eq{K} is satisfied.
\end{proposition}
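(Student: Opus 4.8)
The plan is to check two things in turn: first that $\K_0:=\P_\U^\circ \cap \S_d^+$ is itself a closed convex subset of $\S_d^+$ on which \eq{K} holds, and then that every closed convex $\K \subseteq \S_d^+$ satisfying \eq{K} is contained in $\K_0$. Throughout write $W = \U \cup (-\U)$, so that $\gamma_{\U \cup (-\U)} = \gamma_W = \gamma_{W^{\circ\circ}}$. I will use repeatedly that $W^\circ = (\U \cup (-\U))^\circ = \U^\circ \cap (-\U^\circ) = \{ y \in \rb^d : |u^\top y| \leq 1\ \forall u \in \U \}$ is closed, convex and centrally symmetric, and that, by the identity $\gamma_\C(x) = \sup_{y \in \C^\circ} x^\top y$ from \mysec{gauge} applied to the closed convex set $\C = W^{\circ\circ}$ (whose polar is $W^\circ$), one has $\gamma_W(x) = \sup_{y \in W^\circ} x^\top y$ for all $x$.

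For the inequality $\max_{M \in \K_0} x^\top M x \geq \gamma_W(x)^2$ I would use rank-one certificates. For each $y \in W^\circ$, the matrix $yy^\top$ lies in $\S_d^+$ and satisfies $\tr(yy^\top uu^\top) = (u^\top y)^2 \leq 1$ for every $u \in \U$, hence $yy^\top \in \P_\U^\circ \cap \S_d^+ = \K_0$. Since $x^\top (yy^\top) x = (x^\top y)^2$ and, by central symmetry of $W^\circ$, $\sup_{y \in W^\circ}(x^\top y)^2 = (\sup_{y \in W^\circ} x^\top y)^2 = \gamma_W(x)^2$, taking the supremum over $y \in W^\circ$ gives the claim. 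For $x \in {\rm span}(\U)$ this supremum is attained: one may assume $y \in {\rm span}(\U)$, and on ${\rm span}(\U)$ the region $\{ y : |u^\top y| \leq 1\ \forall u \in \U\}$ is compact, since it is already cut out by the finitely many constraints coming from a basis of ${\rm span}(\U)$ extracted from $\U$. So the ``$\max$'' appearing in \eq{K} is genuine.

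For the matching upper bound, fix $M \in \K_0$; I claim $x^\top M x \leq \gamma_W(x)^2$ for all $x$. Since $M \succeq 0$, the map $x \mapsto x^\top M x$ is convex, so its sublevel set $E_M := \{ x \in \rb^d : x^\top M x \leq 1 \}$ is closed and convex. It contains $0$, and because $M \in \P_\U^\circ$ we have $u^\top M u = \tr(M uu^\top) \leq 1$ for every $u \in \U$, so $E_M$ contains $\U$ and hence all of $W = \U \cup (-\U)$. A closed convex set containing $W \cup \{0\}$ contains $\overline{{\rm hull}(W \cup \{0\})} = W^{\circ\circ} = \{ x : \gamma_W(x) \leq 1 \}$, so $E_M \supseteq \{ x : \gamma_W(x) \leq 1\}$, and positive homogeneity upgrades this to $x^\top M x \leq \gamma_W(x)^2$ for all $x$ (the case $\gamma_W(x) = 0$ following by rescaling). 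Combined with the previous step, this establishes \eq{K} for $\K_0$, which is closed, convex and contained in $\S_d^+$ by construction.

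Finally, for maximality, let $\K \subseteq \S_d^+$ be closed convex with $\gamma_W(x)^2 = \max_{M \in \K} x^\top M x$ for all $x \in {\rm span}(\U)$, and pick $M \in \K$; I must show $M \in \P_\U^\circ$. For any $u \in \U$ we have $u \in {\rm span}(\U)$, so $u^\top M u \leq \max_{M' \in \K} u^\top M' u = \gamma_W(u)^2$, while $u \in W \subseteq W^{\circ\circ}$ gives $\gamma_W(u) \leq 1$; hence $\tr(M uu^\top) = u^\top M u \leq 1$. As this holds for all $u \in \U$, $M \in \P_\U^\circ$, so $M \in \P_\U^\circ \cap \S_d^+ = \K_0$. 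I do not anticipate a real conceptual obstacle: the whole argument rests on the pairing between rank-one positive-semidefinite matrices $yy^\top$ and the points $y$ of the symmetrized polar set $W^\circ$. The only points that need care are the attainment of the ``$\max$'' in \eq{K} (i.e. boundedness of $\K_0$ over ${\rm span}(\U)$) and the need to pass through the closed convex hull $W^{\circ\circ}$ rather than the bare hull of $W \cup \{0\}$---which the sublevel-set argument handles automatically.
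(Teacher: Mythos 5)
Your proof is correct. The overall skeleton coincides with the paper's: the lower bound via the rank-one certificates $yy^\top$ for $y \in \U^\circ \cap (-\U^\circ)$, and the maximality step (evaluating \eq{K} at $x = u \in \U$ to force $u^\top M u \leqslant 1$, i.e.\ $\K \subset \P_\U^\circ$) are exactly the paper's arguments. Where you genuinely diverge is the upper bound $x^\top M x \leqslant \gamma_{\U \cup (-\U)}(x)^2$ for $M \in \P_\U^\circ \cap \S_d^+$. The paper first identifies $\sup_{M \in \P_\U^\circ} x^\top M x$ with $\gamma_{\P_\U}(xx^\top) = \min\{\gamma_\U(x),\gamma_\U(-x)\}^2$ and then argues that the convex function $x \mapsto \sup_{M \in \P_\U^\circ \cap \S_d^+} \sqrt{x^\top M x}$, being a convex lower bound of $\min\{\gamma_\U(x),\gamma_\U(-x)\}$, must lie below its convex envelope $\gamma_{\U \cup (-\U)}$. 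You instead observe that the ellipsoid $\{x : x^\top M x \leqslant 1\}$ is a closed convex set containing $\U \cup (-\U) \cup \{0\}$, hence containing $(\U \cup (-\U))^{\circ\circ} = \{\gamma_{\U\cup(-\U)} \leqslant 1\}$, and conclude by positive homogeneity. Your route is more elementary and sidesteps the (slightly delicate) envelope step; what it does not produce is the identity $\gamma_{\P_\U}(xx^\top) = \min\{\gamma_\U(x),\gamma_\U(-x)\}^2$, which the paper extracts from its proof and reuses in Prop.~\ref{prop:UU}. A minor bonus of your write-up is that you justify the attainment of the maximum in \eq{K} on ${\rm span}(\U)$, which the paper leaves implicit.
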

\begin{proof}
We first show that $\P_\U^\circ \cap \S_d^+$ satisfies  \eq{K}. Let $x \in \rb^d$, we have
$
  \sup_{M \in  \P_\U^\circ} x^\top M x =  I_{\P_\U^\circ}^\ast(xx^\top)    =  \gamma_{\P_\U}(xx^\top) .
$
Since $\P_\U \subset \S_d^+$, $xx^\top$ may only be decomposed as the sum of matrices of the form $\lambda xx^\top$ and 
$\lambda (-x)(-x)^\top$, for $\lambda \geqslant 0 $. This implies that $\gamma_{\P_\U}(xx^\top)   =\min \{ \gamma_\U(x), \gamma_\U(-x) \} ^2$. Note that since $\P_\U^\circ$ does not only include positive semi-definite matrices, it is not incoherent that $x \mapsto  I_{\P_\U^\circ}^\ast(xx^\top)  = \sup_{M \in \P_\U^\circ} x^\top M x = \min \{ \gamma_\U(x), \gamma_\U(-x) \} ^2$ is not convex.

We now have
$
  \sup_{M \in  \P_\U^\circ \cap \S_d^+ } \sqrt{ x^\top M x } \leqslant   \sup_{M \in  \P_\U^\circ} \sqrt{x^\top M x} = \min \{ \gamma_\U(x), \gamma_\U(-x) \}$. Since this is a convex function of $x$, it must be less than its convex envelope, which is $ \gamma_{\U \cup (-\U)}(x)$ (since they have the same Fenchel conjugates).

Moreover, if $v \in \U^\circ \cap (-\U)^\circ$, then $vv^\top \in \P_\U^{\circ }$. This implies that
$  \sup_{M \in  \P_\U^\circ} x^\top M x \geqslant \sup_{ v \in \U^\circ \cap (-\U)^\circ} (v^\top x)^2 = I^\ast_{\U^\circ \cap (-\U)^\circ}(x)^2
= \gamma_{\U^\circ \cap (- \U^\circ)}^\circ(x)^2 = \gamma_{\U \cup (-\U)}(x)^2$. Thus  \eq{K} is indeed satisfied by $\P_\U^\circ \cap \S_d^+$. Finally,
if $\K$ satisfies \eq{K}, then we must have $\K \subset \P_\U^\circ$ by definition of polar sets, hence $\P_\U^\circ \cap \S_d^+$ is the largest. 
\end{proof}

\vspace*{.2500cm}

Note that the set $\P_\U^\circ \cap \S_d^+$ is equal to $\{ M \in \S_d^+, \ \forall u \in \U, u^\top M u \leqslant 1\}$---this representation was already considered in~\cite{fot} for norms.
In certain situations, the largest possible set is desirable (for example when deriving convex relaxations). In other situations (for example when using these representations for optimization), smallest sets are desirable. However, such a notion is not possible. Indeed, for $\gamma_\U = \| \cdot \|_2$, the sets $\K = \{ \idm \}$ and $\K = \{ M \in \S_d^+, \ \| M \|_F \leqslant 1\}$, are two possible sets, and thus there is no single smallest set. One possible small set is 
 the convex hull of the maximal elements of ${\P}_\U^\circ \cap \S_d^+$ (for the positive semi-definite order).

In the proof of Prop.~\ref{prop:Klargest}, we have introduced the gauge function $\gamma_{\P_\U}$.
We now provide a representation of $\gamma_{\P_\U}$ related to factorizations of positive semidefinite matrices (note that in the following proposition, we only assume that $\U$ is closed and contains 0 in its hull).
\begin{proposition}
\label{prop:UU}
Let $\P_\U = \{ xx^\top,  \ x \in \U\} \subset \S_d^+$. We have, for all positive semidefinite matrix $M \in \S_d^+$:
$$\gamma_{\P_\U}(M)= \inf_{r \geqslant 0}  \ \inf_{M  = \sum_{m=1}^r \! x_m x_m^\top } \sum_{m=1}^r \gamma_\U(x_m)^2
= \inf_{r \geqslant 0} \ \inf_{M  = \sum_{m=1}^r  \! \lambda_m x_m x_m^\top, \ x_m \in \U } \sum_{m=1}^r \lambda_m .$$
Moreover,  we may choose $r \leqslant d(d+1)/2$.
\end{proposition}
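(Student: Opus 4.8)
The plan is to unfold the general formula for a gauge function in terms of conic combinations of extreme points (the last display of Section~2, "Links with convex hulls") applied to the set $\P_\U = \{xx^\top, x\in\U\}$, and then reconcile the conic-combination coefficients with the quantities $\gamma_\U(x_m)^2$. First I would observe that $\gamma_{\P_\U}$ is, by definition, the gauge of $\P_\U^{\circ\circ} = {\rm hull}(\P_\U \cup \{0\})$, so by the convex-hull representation of gauges we have
\BEQ
\gamma_{\P_\U}(M) = \inf\Big\{ \sum_{m=1}^r \lambda_m \ : \ \lambda_m \geqslant 0,\ u_m \in \U,\ M = \sum_{m=1}^r \lambda_m u_m u_m^\top,\ r \mbox{ finite} \Big\},
\EEQ
which is exactly the second expression in the statement. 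Note this already requires the observation that $M$ is in the domain (i.e.\ the infimum is finite) precisely when $M$ admits such a decomposition; if no decomposition exists both sides are $+\infty$, so the identity holds trivially there.

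Next I would establish the first equality, $\inf_{M=\sum \lambda_m u_m u_m^\top,\,u_m\in\U}\sum\lambda_m = \inf_{M=\sum x_m x_m^\top}\sum_m \gamma_\U(x_m)^2$, by showing the two infima range over sets that map onto each other with matching objective values. Given a decomposition $M = \sum_m \lambda_m u_m u_m^\top$ with $u_m\in\U$, set $x_m = \sqrt{\lambda_m}\,u_m$; then $x_m x_m^\top = \lambda_m u_m u_m^\top$, so $M = \sum_m x_m x_m^\top$, and by positive homogeneity $\gamma_\U(x_m) = \sqrt{\lambda_m}\,\gamma_\U(u_m) \leqslant \sqrt{\lambda_m}$ since $u_m\in\U\subset\U^{\circ\circ}=\{\gamma_\U\leqslant 1\}$, whence $\sum_m \gamma_\U(x_m)^2 \leqslant \sum_m \lambda_m$. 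Conversely, given $M = \sum_m x_m x_m^\top$ with $\gamma_\U(x_m)<\infty$ for all $m$ (terms with $x_m=0$ or $\gamma_\U(x_m)=0$ need a small separate remark — they can be absorbed/dropped since $x_mx_m^\top$ with $\gamma_\U(x_m)=0$ still contributes to $M$, which forces a limiting argument or the observation that such rank-one terms lie in the recession cone), write $\lambda_m = \gamma_\U(x_m)^2$ and $u_m = x_m/\gamma_\U(x_m)$, so $u_m\in\U^{\circ\circ}$; since $\U^{\circ\circ}={\rm hull}(\U\cup\{0\})$ one may further decompose each $u_m u_m^\top$ — or, more cleanly, argue directly at the level of $\P_\U^{\circ\circ}$ rather than $\U$. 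This back-and-forth gives equality of the two infima.

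Finally, for the cardinality bound $r\leqslant d(d+1)/2$, I would invoke Carathéodory's theorem for cones exactly as flagged in the excerpt: $\P_\U^{\circ\circ} = {\rm hull}(\P_\U\cup\{0\})$ lives in the ambient vector space $\S_d$, which has dimension $d(d+1)/2$, so any point of the generated cone is a conic combination of at most $d(d+1)/2$ elements of $\P_\U\cup\{0\}$; dropping the zero terms yields a decomposition of $M$ into at most $d(d+1)/2$ rank-one terms $x_mx_m^\top$ with $x_m\in\rb_+\U$.

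The main obstacle I anticipate is the careful handling of the non-compact / non-pointed cases: when $\gamma_\U(x_m)=0$ for some $x_m\neq 0$ (i.e.\ $\U$ unbounded in the direction $x_m$), the substitution $u_m = x_m/\gamma_\U(x_m)$ is undefined, and one must argue that such rank-one terms can be added to $M$ "for free," so that the infimum is still correctly represented — this is where the identification $\gamma_\U = \gamma_{\U^{\circ\circ}}$ and a recession-cone argument (or an explicit $\varepsilon$-perturbation pushing $u_m$ slightly inside $\U$) does the work. Everything else is bookkeeping with positive homogeneity and the definitions already recalled in Section~2.
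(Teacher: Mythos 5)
Your overall route is the same as the paper's: its proof is a one-line appeal to the convex-hull representation of gauge functions together with the identity $\gamma_{\P_\U}(xx^\top)=\min\{\gamma_\U(x),\gamma_\U(-x)\}^2$ from the proof of Prop.~\ref{prop:Klargest}, and your derivation of the second expression, your direction ``second infimum $\geqslant$ first infimum'' via $x_m=\sqrt{\lambda_m}\,u_m$, and your Carath\'eodory argument in $\S_d$ are exactly that, spelled out. Those parts are fine.

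There is, however, a genuine gap in your reverse direction. You set $u_m = x_m/\gamma_\U(x_m)\in\U^{\circ\circ}$ and assert that, since $\U^{\circ\circ}={\rm hull}(\U\cup\{0\})$, ``one may further decompose each $u_mu_m^\top$.'' That implication is false: the map $u\mapsto uu^\top$ is not affine, so $u\in{\rm hull}(\U\cup\{0\})$ does not give $uu^\top\in{\rm hull}(\P_\U\cup\{0\})$. Concretely, for $\U=\{0,e_1,e_2\}\subset\rb^2$ and $u=(e_1+e_2)/2\in\U^{\circ\circ}$, the matrix $uu^\top$ has a nonzero off-diagonal entry while ${\rm hull}(\P_\U\cup\{0\})$ contains only diagonal matrices, so $uu^\top$ admits no decomposition over $\P_\U$ at all. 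The step that actually closes this direction is the one the paper cites: because $\P_\U\subset\S_d^+$, any conic decomposition of the rank-one matrix $x_mx_m^\top$ over $\P_\U$ can only use dyads built from multiples of $x_m$, whence $\gamma_{\P_\U}(x_mx_m^\top)=\min\{\gamma_\U(x_m),\gamma_\U(-x_m)\}^2$; combined with subadditivity of $\gamma_{\P_\U}$ this gives $\gamma_{\P_\U}(M)\leqslant\sum_m\gamma_{\P_\U}(x_mx_m^\top)\leqslant\sum_m\gamma_\U(x_m)^2$. (Even that identity silently requires the ray $\rb_+x_m$ to meet $\U$ at the boundary point $x_m/\gamma_\U(x_m)$ of $\U^{\circ\circ}$ --- the example above violates this --- so part of the imprecision is inherited from the paper; but the specific repair you propose is not a valid one.) Your flagging of the degenerate cases $\gamma_\U(x_m)=0$ is sensible and goes beyond what the paper records.
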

\begin{proof}
This is a direct application of the representation of gauge functions and and the property $\gamma_{\P_\U}(xx^\top)   =\min \{ \gamma_\U(x), \gamma_\U(-x) \} ^2$, that was shown in the proof of Prop.~\ref{prop:Klargest}.
\end{proof}

\vspace*{.2500cm}

Note that $\rb_+ \U = { \rm dom }(\gamma_\U)$ may not be equal to $\rb^d$, and that the domain of $\gamma_{\P_\U}$ may not be equal to $\S_d^+$, i.e., some positive matrices may not be decomposed as positive linear combinations of dyads $uu^\top$ obtained from elements $u$ of $\U$; for example, when $\U =\{ x \in \rb_+^d, \ 1_d^\top x =1\}$ is the simplex, then ${\rm dom}(\gamma_\U)$ is the set of completely positive matrices (see, e.g.,~\cite{berman2003completely}).

The last proposition provides a structured decomposition framework for positive semi-definite matrices, that will be considered for rectangular matrices in \mysec{decomposition}. Obtaining  explicitly the decomposition $M = UU^\top$ from $M$ may be done with the iterative algorithms presented in  \mysec{algorithms}. Note that by considering a representation of $U$ as $U = M^{1/2} S$ where $SS^\top = \idm$,  computing $\gamma_{\P_\U}$ may be seen as a factorization problem with two factors (which can then be used in alternating minimization procedures).

\subsection{Minima of quadratic functions}
\label{sec:L}

We now consider closed convex sets $\L \subset \S_d^+$ such that
\BEQ
\label{eq:L} \forall  x \in {\rm span}(\U) , \ \max\{\gamma_\U(x),\gamma_\U(-x)\}^2 = \inf_{ M \in \, \L} x^\top M^{-1} x . 
\EEQ
Here, we define $x^\top M^{-1} x$ as $x^\top M^{-1} x = \inf t $ such that 
$\Big( \!\begin{array}{cc} M & x \\ x^\top & t  \end{array} \! \Big) \succcurlyeq 0 
\Leftrightarrow   t M  \succcurlyeq xx^\top$. This implies that the value may be finite even when $M$ is not invertible.

When $\U$ is symmetric, the variational formulation in \eq{L} leads to a representation of $\gamma_\U(x)^2$ as a concave $ \inf_{M \in \L}  \tr M^{-1} xx^\top$ of $xx^\top$.
This is to be contrasted with the fact that it is also a \emph{convex} function of $xx^\top$ because of the representation discussed in \mysec{K}.
The two properties are in fact related through a duality argument:
\begin{proposition}
\label{prop:eqKL}
Let $\L \subset \S_d^+$ be a closed convex set. Then the following two properties are equivalent:
\BEAS
(a) &  \ \ & \forall x  \in {\rm span}(\U) , \ \inf_{ M \in \, \L} x^\top M^{-1} x =  \max \{  \gamma_\U (x),
  \gamma_\U (-x) \}^2  , \\
(b) &\ \  &   \forall y   \in {\rm span}(\U^\circ), \  \sup_{ M \in \, \L} y^\top M y=   \gamma_{\U \cup (-\U)} (y)^2.
\EEAS
This implies that the largest set $\L$ such that (a) is valid is $\P^\circ_{\U^\circ} \cap \S_d^+$   defined in Prop.~\ref{prop:Klargest}.
\end{proposition}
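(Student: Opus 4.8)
\emph{Proof idea.} The plan is to recognize (a) and (b) as the two halves of one Fenchel–conjugacy statement. Two facts carry the argument. First, the identity recalled in \mysec{gauge}: the Fenchel conjugate of $x\mapsto\frac12\gamma_{\mathcal C}(x)^2$ is $y\mapsto\frac12\gamma_{\mathcal C}^\circ(y)^2$. Second, the classical conjugacy of a positive semidefinite quadratic: for each $M\in\S_d^+$ the functions $y\mapsto\frac12 y^\top M y$ and $x\mapsto\frac12 x^\top M^{-1}x$ are Fenchel conjugate, where $x^\top M^{-1}x$ is read through the Schur complement exactly as in \eq{L}; the entire point of that convention is that it makes this conjugacy hold for singular $M$ as well, the conjugate of $\frac12 y^\top M y$ being $+\infty$ off ${\rm range}(M)$.

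Set $\sigma:=\max\{\gamma_\U(\cdot),\gamma_\U(-\cdot)\}=\gamma_{\U^{\circ\circ}\cap(-\U^{\circ\circ})}$, a gauge with domain ${\rm span}(\U)$, and $p_\L(x):=\inf_{M\in\L}\frac12 x^\top M^{-1}x$. Since $(x,M)\mapsto\frac12 x^\top M^{-1}x$ is jointly convex on $\rb^d\times\S_d^+$ and $\L$ is convex, $p_\L$ is convex; it is also nonnegative and positively homogeneous of degree two. Exchanging the two suprema gives, for every $y$,
\[
p_\L^\ast(y)=\sup_x\ \sup_{M\in\L}\big[x^\top y-\tfrac12 x^\top M^{-1}x\big]=\sup_{M\in\L}\tfrac12\,y^\top M y=:q_\L(y),
\]
and $q_\L$, being a supremum of continuous functions, is closed, so $q_\L^\ast=p_\L^{\ast\ast}={\rm cl}\,p_\L$. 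Now (a) says precisely that $p_\L=\frac12\sigma^2$ (the equality on ${\rm span}(\U)={\rm dom}(\sigma)$ together with the value $+\infty$ off that subspace, which on the left comes from $M\in\S_d^+$): since $\frac12\sigma^2$ is closed, if (a) holds then $p_\L={\rm cl}\,p_\L$ and $q_\L=p_\L^\ast=\big(\frac12\sigma^2\big)^\ast=\frac12(\sigma^\circ)^2$; conversely, if $q_\L=\frac12(\sigma^\circ)^2$ then ${\rm cl}\,p_\L=q_\L^\ast=\frac12(\sigma^{\circ\circ})^2=\frac12\sigma^2$ (the set $\U^{\circ\circ}\cap(-\U^{\circ\circ})$ being closed, convex and containing the origin), which returns (a) once $p_\L$ itself is seen to be closed. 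Finally, by the operations on gauges of \mysec{gauge} and the symmetrization identities opening \mysec{quadratic}, $\sigma^\circ=\gamma_{(\U^{\circ\circ}\cap(-\U^{\circ\circ}))^\circ}=\gamma_{{\rm hull}(\U^\circ\cup(-\U^\circ))}=\gamma_{\U^\circ\cup(-\U^\circ)}$, the symmetrized polar gauge of $\U$, whose domain is ${\rm span}(\U^\circ)$; so $q_\L=\frac12(\sigma^\circ)^2$ is exactly property (b). Hence (a)~$\Leftrightarrow$~(b).

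For the last assertion: rewritten through this equivalence, (a) is nothing other than \eq{K} with $\U$ replaced by $\U^\circ$ — namely, for all $y\in{\rm span}(\U^\circ)$, $\gamma_{\U^\circ\cup(-\U^\circ)}(y)^2=\sup_{M\in\L}y^\top M y$. Proposition~\ref{prop:Klargest} applied to $\U^\circ$ states that the largest closed convex subset of $\S_d^+$ satisfying this is $\P^\circ_{\U^\circ}\cap\S_d^+$ (every admissible $\L$ lies in $\P^\circ_{\U^\circ}$ by definition of the polar, and that set is itself admissible); since (a) and (b) are equivalent for each fixed closed convex $\L\subset\S_d^+$, the same set is the largest $\L$ for which (a) holds.

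The step I expect to be the real obstacle is the degenerate bookkeeping rather than any inequality: turning the Schur–complement form of $x^\top M^{-1}x$ into a bona fide Fenchel conjugate for non‑invertible $M$; verifying that $p_\L=\inf_{M\in\L}\frac12 x^\top M^{-1}x$ is itself closed — an infimal projection of a closed convex function need not be, and one must argue via the recession cone of $\L$, or restrict to the case where the infimum is attained — which is what is needed for ``(b)$\Rightarrow$(a)'' to return (a) and not merely ${\rm cl}\,p_\L$; and matching the two ambient subspaces, ${\rm span}(\U)$ on one side and ${\rm span}(\U^\circ)$ on the other, so that the ``$+\infty$ off the span'' clauses correspond under conjugation. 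Everything else is the routine gauge/polar translation already recorded in \mysec{gauge} and at the start of \mysec{quadratic}.
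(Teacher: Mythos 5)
Your argument is the same as the paper's: both reduce the equivalence to the fact that $x\mapsto\frac12\inf_{M\in\L}x^\top M^{-1}x$ and $y\mapsto\frac12\sup_{M\in\L}y^\top My$ are Fenchel conjugates of each other, paired with the conjugacy between $\frac12\max\{\gamma_\U(\cdot),\gamma_\U(-\cdot)\}^2$ and $\frac12\gamma_{\U^\circ\cup(-\U^\circ)}(\cdot)^2$ (your reading of (b) as the symmetrized \emph{polar} gauge is the intended one---the paper's own proof writes it as the inf-convolution of $\gamma_\U^\circ$), and both obtain the largest $\L$ by applying Prop.~\ref{prop:Klargest} with $\U$ replaced by $\U^\circ$. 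The one loose end you flag---closedness of $p_\L$ in the direction (b)$\Rightarrow$(a)---needs no recession-cone argument: since ${\rm cl}\,p_\L=\frac12\sigma^2$ has domain ${\rm span}(\U)$, a linear subspace and hence relatively open, and a proper convex function agrees with its closure on the relative interior of its domain, one gets $p_\L={\rm cl}\,p_\L$ on ${\rm span}(\U)$ and $p_\L=+\infty$ elsewhere (its domain being contained in the closure of ${\rm dom}({\rm cl}\,p_\L)$), so $p_\L=\frac12\sigma^2$ everywhere. With that one line added your proof is complete, and indeed more careful than the paper's, which dismisses the conjugacy of the two variational forms as ``straightforward to check.''
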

\begin{proof} The two   functions   $y \mapsto  \frac{1}{2}\gamma_{\U \cup (-\U)} (y)^2 =   \frac{1}{2} \inf_{y = y_+ +  y_-} \big\{ \gamma^\circ_\U(y_+) + \gamma^\circ_\U(- y_-) \big\}^2$
and $x \mapsto \frac{1}{2} \max \{  \gamma_\U (x),
  \gamma_\U (-x) \}^2$ are Fenchel-conjugate  to each other.
  We thus need to show that this is the same for $x \mapsto \frac{1}{2} \inf_{ M \in \, \L} x^\top M^{-1} x$ and $y \mapsto \frac{1}{2}\sup_{ M \in \, \L} y^\top M y$, which is straightforward to check. 
    Note that $\P_{\U^\circ} \subset \P^{\circ \circ}_{\U^\circ} \subset \P_\U^\circ \cap \S_d^+$, but that the reverse inclusions are typically not true (take, e.g., the unit $\ell_2$-ball).
\end{proof}

\vspace*{.2500cm}

  \paragraph{Duality between representations.}
  For a given gauge function $\gamma_\U$, there are two possible variational representations, as maxima or minima of quadratic functions.
  The two possible variational representations are linked to each other. Indeed,
  if $\K \subset \S_d^+$ is a closed convex set  such that for all $x \in \rb^d$, $\sup_{M \in \K} x^\top M x \leqslant  \gamma_{\U \cup (-\U)} (x)^2$ (i.e., we have a lower-bound which is convex in $xx^\top$), then
  $\K \subset \P_\U^\circ$, which implies by duality $\P_\U^{\circ \circ} \subset \K^\circ \cap \S_d^+$. Thus for any $y \in \rb^d$,
  $  I_{\K^\circ \cap \S_d^+ }^\ast(yy^\top) \geqslant I_{\P_\U}^\ast(yy^\top) = \max \{ \gamma_\U^\circ(y),\gamma_\U^\circ(-y)\}^2$.
  This then implies by Fenchel duality (i.e., using arguments from the proof of Prop.~\ref{prop:eqKL}) that for all $x \in \rb^d$,
  $
    \gamma_{\U \cup (-\U)} (x)^2
  \geqslant   \inf_{M \in \K^\circ \cap \S_d^+  } x^\top M^{-1} x ,  $
  i.e., we have a concave lower-bound based on $\K^\circ \cap \S_d^+$.
  
  Similarly, if we have a concave upper-bound of $\max \{ \gamma_\U^\circ(y),\gamma_\U^\circ(-y)\}^2$  based on $\K$, we also have a convex upper-bound based on $\K^\circ \cap \S_d^+$. Note that it is not true in general that exact representations of one kind transfer to exact representation of the other kind.

  \subsection{Examples}

For the $\ell_2$-norm ball, then we may characterize exactly $\P_\U^\circ$ as $ \P_\U^\circ = \{ M \in \rb^{d \times d }, M \preccurlyeq \idm \}$. This provides an example $\K = \P_\U^\circ \cap \S_d^+ = \{ M \in \S_d^+, \   M \preccurlyeq \idm\}$, for which $\K^\circ \cap \S_d^+ = \{ M \in \S_d^+, \ \tr M \leqslant 1\}$. Another example is 
 $\K = \{ \idm \}$ with $\K^\circ \cap \S_d^+ = \{ M \in \S_d^+, \ \tr M \leqslant 1\}$. Yet another one is
  $\K = \{ M \in \S_d^+, \ \| M \|_F \leqslant 1\}$ with $\K^\circ \cap \S_d^+ = \{ M \in \S_d^+, \ \|M\|_F \leqslant 1\}$.
 
For the $\ell_1$-norm ball, we also have a representation in closed form of the largest set $\K= \P_\U^\circ  \cap \S_d^+ = \big\{
M \in \rb^{d \times d}, \ M \succcurlyeq 0, \ \| \diag(M)\|_\infty \leqslant 1
\big\}$.

Apart from these two sets, a simple (manageable in polynomial time) description of $\P_\U^\circ  \cap \S_d^+$ is not available and smaller sets are generally available. For example, for the $\ell_p$-norm, $p \in [1,\infty]$, the set $\K = \{ M \succcurlyeq 0, \ \|M\|_q \leqslant 1\}$ satisfies \eq{K}. Moreover, if $p \geqslant 2$, $\K =  \{ \Diag(\eta), \ \eta \in \rb_+^d, \| \eta \|_{p/(p-2)} \leqslant 1 \} $ also does~\cite{tal} (because $p/(p-2)=1/(1-2/p)$ and $\sup_{ \eta \geqslant 0, \ 
\| \eta\|_{p/(p-2)} \leqslant 1} \sum_i \eta_i x_i^2 = \| x \circ x\|_{p/2} $, where $\circ$ denotes the pointwise product of vectors). See additional example of diagonal representations in \mysec{diag}.

  \subsection{Diagonal representations}
\label{sec:diag}

We now consider cases where the set of matrices $\K$ and $\L$ are diagonal, i.e., all principal axes of ellipsoids are aligned with the canonical basis. For simplicity, we consider only sets $\U$ which are compact, have zero in their interior, and are invariant by sign flips of any components. The corresponding gauge functions are then \emph{absolute norms}, which are functions of the absolute values of each component~\cite{Stewart1990}. The following proposition provides several characterizations (see related work in~\cite{tal,fot}).

\begin{proposition}
\label{prop:diag}
Let $\Omega$ be an absolute norm on $\rb^d$ and $\Omega^\circ$ its dual (also absolute) norm. The following three conditions are equivalent:
\BNUM
\item[(a)] There exists a non-empty closed convex subset $\H$ of $\rb^d_+$ such that $\forall x \in \rb^d, \Omega(x)^2 = \inf_{ \eta \in \H}
x^\top \Diag(\eta)^{-1} x$.
\item[(b)] The function $t \mapsto \Omega(t^{1/2})^2$ is concave on $\rb_+^d$.
\item[(c)] The function $t \mapsto \Omega^\circ(t^{1/2})^2$ is convex  on $\rb_+^d$.
\ENUM

\end{proposition}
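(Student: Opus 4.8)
The plan is to prove the equivalence of (a), (b), and (c) by first establishing the equivalence of (b) and (c) via a direct Fenchel-conjugacy argument, and then relating (a) to (b) using the general duality between ``min of quadratics'' and ``max of quadratics'' representations developed in \mysec{L}, specialized to the diagonal setting.

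First I would handle $(b)\Leftrightarrow(c)$. The key observation is that for an absolute norm $\Omega$, the maps $t\mapsto \tfrac12\Omega(t^{1/2})^2$ and $t\mapsto\tfrac12\Omega^\circ(t^{1/2})^2$ on $\rb_+^d$ are, up to the change of variables $t_i = x_i^2$, the restrictions of the Fenchel-conjugate pair $x\mapsto\tfrac12\Omega(x)^2$ and $y\mapsto\tfrac12\Omega^\circ(y)^2$ to the nonnegative orthant. More precisely, because $\Omega$ is absolute, $\Omega(x)$ depends only on $(|x_i|)$, so $\Omega(x)^2 = g(x_1^2,\dots,x_d^2)$ for $g(t)=\Omega(t^{1/2})^2$; the same holds for $\Omega^\circ$ with some $h$. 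One then checks that $g$ and $h$ are related by $h = g^*$ in the sense of conjugacy on $\rb_+^d$ (using that $\sup_{x}\,[x^\top y - \tfrac12\Omega(x)^2] = \tfrac12\Omega^\circ(y)^2$ and that the supremum over $x$ of a function depending only on $x_i^2$, against a $y$ which we may take nonnegative, is attained with $x_iy_i\ge 0$, turning the bilinear term into $\sum_i \sqrt{t_i}\sqrt{s_i}$). Concavity of $g$ and convexity of $h$ are then dual to one another: $g$ concave on $\rb_+^d$ iff $-g$ is convex, and a careful bookkeeping of the two conjugacy operations (one being ``$\sup$'' and the other coming from $-g$) yields that this is equivalent to $h$ being convex. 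This step is essentially the scalar/diagonal shadow of the remark in \mysec{L} that ``exact representations of one kind transfer to exact representation of the other kind'' fails in general but holds here because everything is separable-after-squaring.

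Next, for $(a)\Leftrightarrow(c)$, I would invoke Prop.~\ref{prop:eqKL} and its proof. Condition (a) says precisely that $\L = \{\Diag(\eta):\eta\in\H\}$ satisfies \eq{L} (with $\U$ the unit ball of $\Omega$, so $\gamma_\U=\Omega$ is symmetric and $\gamma_\U^\circ=\Omega^\circ$). By Prop.~\ref{prop:eqKL}, such an $\L$ exists iff the largest candidate $\P^\circ_{\U^\circ}\cap\S_d^+$ works, equivalently iff $y\mapsto\sup_{M\in\L}y^\top My = \Omega(y)^2$ is representable; but more usefully, the proof of Prop.~\ref{prop:eqKL} shows (a) is equivalent to $x\mapsto\tfrac12\Omega(x)^2$ having Fenchel conjugate equal to $y\mapsto\tfrac12\inf_{\eta\in\H}\sum_i y_i^2/\eta_i$-type expression being concave in $xx^\top$; in the diagonal world this collapses to: $x\mapsto\tfrac12\Omega(x)^2$ is a concave function of $(x_1^2,\dots,x_d^2)$, which is exactly (b). The reverse direction---given (b), construct $\H$---is the part needing an actual construction: take $\H$ to be (the closure of) the set of supergradient-normalizations, i.e., for $g(t)=\Omega(t^{1/2})^2$ concave, one has $g(t) = \inf_{\eta>0}\big[\sum_i t_i/\eta_i\big]$ over an appropriate family, by the classical fact that a positively homogeneous (degree one) concave function on $\rb_+^d$ is the infimum of the linear functions $t\mapsto\sum_i t_i/\eta_i$ lying above it; one then reads off $\H = \{\eta\in\rb_+^d : \sum_i t_i/\eta_i \ge g(t)\ \forall t\ge 0\}$ and checks it is nonempty, closed, convex.

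I expect the main obstacle to be the bookkeeping in the $(b)\Leftrightarrow(c)$ conjugacy argument: one must be careful that the map $t\mapsto t^{1/2}$ is not linear, so ``$g$ concave'' and ``$h=g^*$ convex'' are not immediately dual in the naive sense---the correct statement uses that $g$ is also positively homogeneous of degree one on $\rb_+^d$, and for such functions concavity is equivalent to superadditivity, and the conjugate of a degree-one positively homogeneous concave function is the (negative) indicator of a convex set, whose ``inf-representation'' recovers (a). Threading homogeneity through at each step, and making sure the domains ($\rb_+^d$ versus all of $\rb^d$, and the distinction between $\eta>0$ and $\eta\ge0$ with the convention $x^\top\Diag(\eta)^{-1}x$ allowing $+\infty$) line up, is where the care is needed; the rest is an application of Prop.~\ref{prop:eqKL} and the general hull/polar calculus from \mysec{gauge}.
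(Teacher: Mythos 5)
Your overall architecture is in the end the paper's: the construction of $\H$ from (b), and the passage from (c) back to (a), both rest on the two ingredients the paper uses, namely that $t \mapsto \Omega(t^{1/2})^2$ and $t \mapsto \Omega^\circ(t^{1/2})^2$ are positively homogeneous on $\rb_+^d$ (so that concavity, resp.\ convexity, makes them an infimum, resp.\ supremum, of linear functions of $t=(x_1^2,\dots,x_d^2)$ indexed by a convex set), together with the duality of Prop.~\ref{prop:eqKL} between $\inf_\eta x^\top\Diag(\eta)^{-1}x$ and $\sup_\eta y^\top\Diag(\eta)y$. Your set $\H = \{\eta \in \rb^d_+ :\ \sum_i t_i/\eta_i \geqslant \Omega(t^{1/2})^2 \ \forall t \geqslant 0\}$ is exactly the paper's $-1/\C$.

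Two steps need repair. First, the opening claim that $g(t)=\Omega(t^{1/2})^2$ and $h(s)=\Omega^\circ(s^{1/2})^2$ are ``related by $h=g^*$'' under the standard pairing is false: substituting $t_i=x_i^2$, $s_i=y_i^2$ turns $x^\top y$ into $\sum_i\sqrt{t_i s_i}$, which is not bilinear, so ``$g$ concave iff its conjugate $h$ convex'' is not available, and the proposed direct proof of (b)$\Leftrightarrow$(c) does not go through as stated. The correct route --- which you in fact sketch in your final paragraph --- is to pass through (a): the conjugate of the positively homogeneous function $\pm\Omega^{(\circ)}(t^{1/2})^2$ is the indicator of a convex set, giving an inf- (resp.\ sup-) linear representation, and one then transfers between the two via the equivalence ``$\Omega(x)^2\leqslant x^\top\Diag(\zeta)x$ for all $x$ iff $\Omega^\circ(s)^2\geqslant s^\top\Diag(\zeta)^{-1}s$ for all $s$.'' This is precisely the paper's proof, so the argument should be reorganized with (a) as the hub rather than with (b)$\Leftrightarrow$(c) as a standalone conjugacy step. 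Second, convexity of your $\H$ is not immediate from its definition: $\eta\mapsto\sum_i t_i/\eta_i$ is convex in $\eta$ on the positive orthant, so $\H$ is an intersection of \emph{superlevel} sets of convex functions, which need not be convex. The fix (the paper's) is to rewrite the defining condition in the dual form $s^\top\Diag(\eta)s\leqslant\Omega^\circ(s)^2$ for all $s$, which is linear in $\eta$; this uses the same quadratic-duality equivalence as above and must be stated explicitly before you may ``read off'' that $\H$ is convex. With these two repairs your argument coincides with the paper's.
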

\begin{proof}
It is straightforward to see that (a) implies both (b) and (c).

Assume (b). The Fenchel conjugate $ t \mapsto - \Omega(t^{1/2})^2$ on $\rb_+^d$, defined as
$f(u) = \sup_{ t \in \rb_+^d} t^\top u + \Omega(t^{1/2})^2$ is the indicator function of a convex set $\C$ (because 
$t \mapsto \Omega(t^{1/2})^2$ is positively homogeneous). Moreover $-\C \subset \rb_+^d$, and $u 
\in \C$ if and only if for all $ t \in \rb_+^d$, $\Omega(t^{1/2})^2 \leqslant -u^\top t$, i.e., 
for all $w \in \rb^d$, $\Omega(w)^2 \leqslant w^\top \Diag(-u) w$, which is equivalent to,
for all $s \in \rb^d$, $\Omega^\circ(s)^2 \geqslant s^\top \Diag(-u)^{-1} s$. Thus $-1/\C = \{ - (1/u), \ u \in \C\}$ is also convex.

By Fenchel duality, we get, for all $ t \in \rb_+^d$,
   $ - \Omega(t^{1/2})^2 = \sup_{u \in \C} t^\top u$, i.e., for all $w \in \rb^d$,
$   \Omega(w)^2 = \Omega(|w|)^2 = \inf_{u \in -\C} w^\top \Diag(u) w$, which shows that we can take $\H = -1/\C$ and obtain (a).

Assume (c). Then  the Fenchel conjugate $ t \mapsto   \Omega^\circ(t^{1/2})^2$ on $\rb_+^d$, defined as
$f(u) = \sup_{ t \in \rb_+^d} t^\top u - \Omega(t^{1/2})^2$ is the indicator function of a convex set $\C$ (because 
$t \mapsto \Omega^\circ(t^{1/2})^2$ is positively homogeneous). By duality, we get, for all $ t \in \rb_+^d$,
   $   \Omega^\circ(t^{1/2})^2 = \sup_{u \in \C} t^\top u$, i.e., for all $s \in \rb^d$,
$   \Omega^\circ(s)^2 = \Omega^\circ(|s|)^2 = \sup_{u \in \C} w^\top \Diag(u) w$, which leads to (a) when computing Fenchel conjugates.
\end{proof}

\vspace*{.2500cm}

The construction above gives a way of constructing the largest set $\H \subset \rb_+^d$, such that $\forall w \in \rb^d, \Omega(w)^2 = \inf_{ \eta \in \H}
w^\top \Diag(\eta)^{-1} \eta$:   $\H$ is the set of $\eta \in \rb_+^d$ such that for all $s \in \rb^d$, $   s^\top \Diag(\eta) s  \leqslant \Omega^\circ(s)^2$, then, we have shown above that $1/\H$ is the set of $\zeta \in \rb_+^d$ such that for all $w\in \rb^d$, $   \Omega(w)^2  \leqslant w^\top \Diag(\zeta) w   $, and is also convex.
 
 \paragraph{Examples.}
Among $\ell_p$-norms, only the ones for which $p \leqslant 2$ have diagonal representations, with 
 $\H =  \{ \eta \geqslant 0, \| \eta\|_{p / (2 -p) } \leqslant 1 \}$.
Indeed,  we have for $\Omega = \| \cdot \|_p$ and $1/p+1/q=1$:
$\Omega^\circ(t^{1/2})^2 = \| t \|_{ q/2 } = \sup_{ \| s \|_{1/( 1 - 2 / q) \leqslant 1} } s^\top t$, with
$ 1/( 1 - 2 / q) = 1/ ( 1 - 2( 1 - 1/p) ) = 1/ ( 2/p-1) = p / ( 2 - p)$.

Beyond $\ell_p$-norms, the set $\H$ have also been used to define norms  with specific structured sparsity-inducing properties~\cite{micchelli2013regularizers,fot}
or optimality properties in terms of convex relaxations~\cite{submodlp}. Note moreover, that such representations are useful for optimization as they lead to simple reweighted-$\ell_2$ algorithms for all of these norms~\cite{daubechies2010iteratively,fot}.

\subsection{Matrix norms invariant by rotation}
\label{sec:rotation}

We now consider norms on $\rb^{ n \times d}$ which are invariant by right-multiplication by a rotation, i.e., which are functions of $WW^\top$, $W \in \rb^{n \times d}$. The corresponding unit ball $\U$ is invariant by right-multiplication. The following proposition may be shown using the same arguments than Prop.~\ref{prop:diag}.
\begin{proposition}
\label{prop:rot}
Let $\Omega$ be a norm on $\rb^{n \times d}$ which is invariant by right-multiplication by a rotation, and $\Omega^\circ$ its dual norm. The following three conditions are equivalent:
\BNUM
\item[(a)] There exists a non-empty convex subset $\K$ of $\S_d^+$ such that $\forall W \in \rb^{n \times d }$, $ \Omega(W)^2 = \inf_{ M \in \K }
\tr W^\top M^{-1} W$.
\item[(b)] $W \mapsto \Omega(W)^2$ is a concave function of $WW^\top$.
\item[(c)] $V \mapsto \Omega^\circ(V)^2$ is a concave function of $ V V^\top $.
\ENUM
\end{proposition}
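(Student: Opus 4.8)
The plan is to mirror the proof of Proposition~\ref{prop:diag} almost verbatim, replacing scalar reweighting by $\mathrm{Diag}(\eta)$ with matrix reweighting by $M \in \S_d^+$, and replacing the pointwise inverse by the matrix inverse (extended to singular matrices via the Schur-complement convention introduced for \eq{L}). The structural feature that makes everything go through is right-rotation invariance: $\Omega(W)$ depends only on $WW^\top$, so both sides of (a) are genuinely functions of the $n\times n$ PSD matrix $WW^\top$, and $W \mapsto \tr W^\top M^{-1} W = \tr M^{-1} WW^\top$ is linear in $WW^\top$. Thus the quadratic variational formula is an infimum of linear functions of $WW^\top$, which is exactly what concavity in $WW^\top$ encodes; this gives the implication (a)$\Rightarrow$(b) immediately, and (a)$\Rightarrow$(c) by the same duality between $\Omega$ and $\Omega^\circ$ already used in \mysec{L} (the Fenchel conjugate of $\frac12\Omega^2$ being $\frac12 (\Omega^\circ)^2$, and Prop.~\ref{prop:eqKL} showing the pair ``$\inf x^\top M^{-1} x$'' / ``$\sup y^\top M y$'' are Fenchel-conjugate squared gauges).

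For (b)$\Rightarrow$(a): working on the cone $\S_n^+$ of possible values $T = WW^\top$, the function $T \mapsto \Omega(T^{1/2})^2$ (well-defined by rotation invariance, writing $\Omega(T^{1/2})$ for the common value of $\Omega(W)$ over $W$ with $WW^\top = T$) is concave and positively homogeneous of degree one on $\S_n^+$, hence equal to $\inf_{N \in \mathcal{C}} \langle N, T\rangle$ for a suitable closed convex set $\mathcal{C} \subset \S_n$, its superdifferential-type support set. The membership condition $N \in \mathcal{C}$ unwinds to ``$\Omega(W)^2 \leqslant \tr N\, WW^\top$ for all $W$'', i.e.\ ``$\Omega(W)^2 \leqslant \tr W^\top N W$ for all $W \in \rb^{n\times d}$''. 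By the polarity/duality relation between $\Omega$ and $\Omega^\circ$ (exactly the step ``$\Omega(w)^2 \leqslant w^\top \mathrm{Diag}(-u) w$ for all $w$ $\Leftrightarrow$ $\Omega^\circ(s)^2 \geqslant s^\top \mathrm{Diag}(-u)^{-1} s$ for all $s$'' in the proof of Prop.~\ref{prop:diag}, now in matrix form using the Schur-complement definition of $s^\top N^{-1} s$), this is equivalent to $N \succcurlyeq 0$ together with $\Omega^\circ(V)^2 \geqslant \tr V^\top N^{-1} V$ for all $V$. Consequently $\mathcal{C} \subset \S_n^+$, and the set of matrix inverses $\{N^{-1} : N \in \mathcal{C}\}$ — equivalently the largest set $\mathcal{K} = \P_{\U^\circ}^\circ \cap \S_d^+$ of Prop.~\ref{prop:eqKL}, but here one only needs existence — furnishes the desired $\mathcal{K}$ with $\Omega(W)^2 = \inf_{M \in \mathcal{K}} \tr W^\top M^{-1} W$. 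The direction (c)$\Rightarrow$(a) is the same argument applied to $\Omega^\circ$, producing a $\mathcal{K}$ via $\sup$ first and then taking Fenchel conjugates, again as in Prop.~\ref{prop:diag}.

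The one genuine point requiring care — and the main obstacle — is the passage ``$\forall W,\ \Omega(W)^2 \leqslant \tr W^\top N W$'' $\Leftrightarrow$ ``$N \succcurlyeq 0$ and $\forall V,\ \Omega^\circ(V)^2 \geqslant \tr V^\top N^{-1} V$'' when $N$ is singular. In the diagonal case this is transparent because $\mathrm{Diag}(-u)^{-1}$ is handled coordinatewise with the convention $1/0 = +\infty$; in the matrix case one must instead invoke the Schur-complement definition $\tr V^\top N^{-1} V = \inf\{\tr S : \big(\begin{smallmatrix} N & V \\ V^\top & S\end{smallmatrix}\big) \succcurlyeq 0\}$ fixed just before \eq{L}, and check that the quadratic-form inequality $xx^\top \preccurlyeq N$ (equivalently $x \in \mathrm{range}(N)$ and $x^\top N^{-} x \leqslant 1$) dualizes correctly, including the edge case where $\Omega$ forces certain directions of $W$ to contribute zero. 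This is exactly the robustness built into the definition of $x^\top M^{-1} x$ in \mysec{L}, so no new idea is needed — only a careful restatement of the Prop.~\ref{prop:diag} computation with $\mathrm{Diag}(\cdot)$ replaced by general symmetric matrices and ``$1/\cdot$'' replaced by the PSD (pseudo-)inverse. I would therefore present the proof as: (i) (a)$\Rightarrow$(b),(c) trivially by linearity of $\tr W^\top M^{-1} W$ in $WW^\top$ and the $\Omega$–$\Omega^\circ$ conjugacy; (ii) (b)$\Rightarrow$(a) via concave positively-homogeneous representation on $\S_n^+$ plus the matrix dualization above; (iii) (c)$\Rightarrow$(a) identically for $\Omega^\circ$ and then Fenchel-conjugate back, noting as in Prop.~\ref{prop:eqKL} that the largest such $\mathcal{K}$ is $\P_{\U^\circ}^\circ \cap \S_d^+$.
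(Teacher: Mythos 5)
Your proof is correct and is precisely the expansion the paper intends: the paper offers no argument for Prop.~\ref{prop:rot} beyond the remark that it ``may be shown using the same arguments as Prop.~\ref{prop:diag}'', and that is exactly what you carry out, correctly isolating the one nontrivial step (the matrix dualization of ``$\Omega(W)^2\leqslant \tr W^\top N W$ for all $W$'' into ``$N\succcurlyeq 0$ and $\Omega^\circ(V)^2\geqslant \tr V^\top N^{-1}V$ for all $V$'', with the Schur-complement convention handling singular $N$). One caveat: your conjugacy argument shows that $\Omega^\circ(V)^2$ is \emph{convex} in $VV^\top$ (consistent with item (c) of Prop.~\ref{prop:diag} and with the example $\Omega=\|\cdot\|_\ast$, $\Omega^\circ(V)^2=\lambda_{\max}(VV^\top)$), so the word ``concave'' in item (c) of the statement is evidently a typo and your proof establishes the corrected statement (note also that for $\tr W^\top M^{-1}W$ to parse the set $\K$ should sit in $\S_n^+$ rather than $\S_d^+$).
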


\vspace*{.2500cm}

All norms obtained in \mysec{decomposition} for $\V$ the $\ell_2$-ball will be instances of these norms. 
Moreover, all norms which are invariant by right and left multiplication by a rotation, must be obtained as a spectral function, i.e., $\Omega(W)$ is a symmetric norm on the singular values of $W$.  Quadratic representations based on the spectrum could be derived as well using tools from~\cite{lewis2003mathematics}. Note that for Prop.~\ref{prop:rot}, the set $\K$ is then a spectral set (defined through constraints on eigenvalues).

 \section{Gauge functions and structured matrix factorizations}
 \label{sec:decomposition}
 In this section, we consider two compact (non-necessarily convex) sets $\U \subset \rb^n$ and $\V \subset \rb^d$. We will always assume that 0 is in the convex hulls of both $\U$ and $\V$.

 We consider the decomposition of matrices $X \in \rb^{n \times d}$ as a positive linear combination of matrices of the form $uv^\top$ for $u \in \U$ and $v \in \V$, i.e., $X = \sum_{m=1}^r \lambda_m u_m v_m^\top$, with a priori no constraints on the rank $r$ (it will always be less than $nd$). Finding the decomposition such that $\sum_{m=1}^r \lambda_m$ is minimum leads to a gauge function, with interesting properties.

 \subsection{Definitions}
 Let $ \U \subset \rb^n$ and $\V \subset \rb^d$ be two compact sets. We define the function $\Theta$ on $\rb^{n \times d}$    as follows
 \BEQ
 \label{eq:theta}
 \Theta(X) = \inf_{ r \geqslant 0 } \inf_{ X = \sum_{m=1}^r \lambda_m u_m v_m^\top, \ \lambda_m \in \rb_+, u_m \in \U, v_m \in \V } \sum_{m=1}^r \lambda_m .
 \EEQ
 This function is exactly the gauge function of the set $  \{ uv^\top, \ u \in \U, v\in V\}$. It is therefore convex, non-negative and positively homogeneous. Moreover, the minimization problem defining it is attained for $r \leqslant nd$.  
 There are several equivalent formulations which we are going to use, which relies on the fact that $ u_m v_m^\top = ( u_m \lambda_m) ( v_m\lambda_m^{-1})^\top$ for any $\lambda_m > 0$ (in the following expressions, we always minimize with respect to the rank and thus omit the notation $\inf_{r \geqslant 0}$).
 \begin{proposition}
 We have
 \BEAS
 \Theta(X) & = & \!\!
 \inf_{ X = \sum_{m=1}^r u_m v_m^\top }  \frac{1}{2} \sum_{m=1}^r  \big\{   \gamma_\U(u_m)^2 + \gamma_\V(v_m )^2 \big\} 
 =  \!\! \inf_{ X = \sum_{m=1}^r u_m v_m^\top }   \sum_{m=1}^r    \gamma_\U(u_m) \gamma_\V(v_m )\\[-.1cm]
 & = & \!\!
 \inf_{ X = \sum_{m=1}^r u_m v_m^\top, \ u_m \in \U  }   \sum_{m=1}^r    
\gamma_\V( v_m ) =  \inf_{ X = \sum_{m=1}^r u_m v_m^\top, \ v_m \in \V  }   \sum_{m=1}^r    
 \gamma_\U(u_m).
\EEAS
      \end{proposition}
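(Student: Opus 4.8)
The plan is to derive all four equalities from the single rescaling identity $uv^\top=(tu)(t^{-1}v)^\top$ (for $t>0$) together with three facts already available in the paper: (i) since $\U$ and $\V$ are compact with $0$ in their convex hulls, $\gamma_\U(u)=0\Leftrightarrow u=0$, $\gamma_\V(v)=0\Leftrightarrow v=0$, and $\gamma_\U(u)\leqslant1$ whenever $u\in\U$ (likewise for $\V$); (ii) by the convex-hull representation of gauge functions recalled above, applied with generating set $\U$ (whose convex hull is $\U^{\circ\circ}$), every $u\in\rb^n$ with $\gamma_\U(u)<\infty$ admits a \emph{finite} expansion $u=\sum_j\eta_j w_j$ with $w_j\in\U$, $\eta_j\geqslant0$ and $\sum_j\eta_j=\gamma_\U(u)$ (similarly for $\V$); (iii) $\Theta$ is the gauge function of $\{uv^\top:u\in\U,v\in\V\}$, i.e.\ it is given by \eq{theta}, and the infimum there is attained.

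The one substantive step is the equality $\Theta(X)=\inf\{\sum_m\gamma_\U(u_m)\gamma_\V(v_m):X=\sum_m u_m v_m^\top\}$, which I would prove by two inequalities. For ``$\leqslant$'': take any decomposition $X=\sum_m u_m v_m^\top$ with the right-hand side finite (so each $\gamma_\U(u_m),\gamma_\V(v_m)<\infty$); expanding $u_m=\sum_j\eta_j^{(m)}w_j^{(m)}$ and $v_m=\sum_k\zeta_k^{(m)}z_k^{(m)}$ as in (ii), we get $u_m v_m^\top=\sum_{j,k}\eta_j^{(m)}\zeta_k^{(m)}\,w_j^{(m)}(z_k^{(m)})^\top$, so $X$ is a nonnegative combination of elements $w_j^{(m)}(z_k^{(m)})^\top$ of $\{uv^\top:u\in\U,v\in\V\}$ with weight sum $\sum_m(\sum_j\eta_j^{(m)})(\sum_k\zeta_k^{(m)})=\sum_m\gamma_\U(u_m)\gamma_\V(v_m)$, hence $\Theta(X)\leqslant\sum_m\gamma_\U(u_m)\gamma_\V(v_m)$ by \eq{theta}. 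For ``$\geqslant$'': take a decomposition $X=\sum_m\lambda_m u_m v_m^\top$ with $u_m\in\U$, $v_m\in\V$, $\lambda_m\geqslant0$ attaining $\Theta(X)=\sum_m\lambda_m$; then $X=\sum_m(\lambda_m u_m)v_m^\top$ and $\gamma_\U(\lambda_m u_m)\gamma_\V(v_m)=\lambda_m\gamma_\U(u_m)\gamma_\V(v_m)\leqslant\lambda_m$, so the infimum is at most $\Theta(X)$.

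The remaining equalities are then bookkeeping. The first uses $ab\leqslant\frac12(a^2+b^2)$, giving $\inf\frac12\sum\{\gamma_\U(u_m)^2+\gamma_\V(v_m)^2\}\geqslant\inf\sum\gamma_\U(u_m)\gamma_\V(v_m)$; for the reverse, given a decomposition realizing (or approaching) the second infimum, discard terms with $u_m=0$ or $v_m=0$ and rescale the rest by $t_m=(\gamma_\V(v_m)/\gamma_\U(u_m))^{1/2}$, which leaves $u_m v_m^\top$ and the product $\gamma_\U(u_m)\gamma_\V(v_m)$ unchanged while forcing $\gamma_\U(t_m u_m)=\gamma_\V(t_m^{-1}v_m)$, so that $\frac12\{\gamma_\U(t_m u_m)^2+\gamma_\V(t_m^{-1}v_m)^2\}=\gamma_\U(u_m)\gamma_\V(v_m)$. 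For the third equality: restricting to decompositions with $u_m\in\U$ gives $\gamma_\U(u_m)\leqslant1$, hence by the step above $\Theta(X)\leqslant\sum_m\gamma_\U(u_m)\gamma_\V(v_m)\leqslant\sum_m\gamma_\V(v_m)$; conversely, rescaling the $\Theta$-optimal decomposition (which already has $u_m\in\U$) to $X=\sum_m u_m(\lambda_m v_m)^\top$ yields $\sum_m\gamma_\V(\lambda_m v_m)=\sum_m\lambda_m\gamma_\V(v_m)\leqslant\sum_m\lambda_m=\Theta(X)$. The fourth equality is identical with the roles of $(\U,u)$ and $(\V,v)$ exchanged.

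The computations are all elementary; the only points needing care are the degenerate cases — factors that vanish or have infinite gauge, both handled by compactness of $\U$ and $\V$ — and the observation that expanding a factor increases the number of rank-one terms, which is harmless since every infimum here is over all ranks. I do not expect a genuine obstacle: the whole content is concentrated in the inequality $\Theta(X)\leqslant\sum_m\gamma_\U(u_m)\gamma_\V(v_m)$, and that rests entirely on the convex-hull representation of gauge functions recalled earlier.
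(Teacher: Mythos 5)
Your proof is correct and takes essentially the same route the paper intends: the proposition is stated there without proof, preceded only by the remark that it rests on the rescaling $u_m v_m^\top = (\lambda_m u_m)(\lambda_m^{-1} v_m)^\top$, and your argument is a careful elaboration of exactly that idea combined with the convex-hull representation of gauge functions recalled in the paper's Section~2. The degenerate cases (zero factors, attainment, finiteness of the expansions) are handled properly, so there is nothing to add.
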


\vspace*{.2500cm}

Moreover,  $\Theta$ is a norm as soon as $\U$ and $\V$ are norm balls---in this case, these norms were studied in several settings~\cite{jameson1987summing,bach2008convex,chandrasekaran2012convex}.
 The next proposition shows that the polar of $\Theta$ has a simple form (the proof is straightforward from the gauge function interpretation). It is a matrix norm when $\U$ and $\V$ are norm balls; moreover, in all cases, since $\U$ and $\V$ are assumed compact, $\Theta^\circ$ has full domain.
 
 \begin{proposition}
The polar of $\Theta$ is equal to
 $$\Theta^\circ(Y) = \max_{ u \in \U, v \in \V } u^\top Y v
 = \max_{ u \in \U  }   \gamma_\V^\circ(Y^\top u )
 = \max_{ v \in \V } \gamma_\U^\circ( Y v ) 
 = \max_{ \gamma_\U(u) \leqslant 1, \ \gamma_\V(v) \leqslant 1}   u^\top Y v.
 $$
  \end{proposition}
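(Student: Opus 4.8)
The plan is to read $\Theta$, via \eq{theta} and the ``links with convex hulls'' formula of \mysec{gauge}, as the gauge function of the compact set $\P = \{ uv^\top : u\in\U,\ v\in\V\} \subset \rb^{n\times d}$, and then to unwind the definition of the polar. Since $\Theta = \gamma_\P = \gamma_{\P^{\circ\circ}}$, the identity $\gamma_\C^\circ = \gamma_{\C^\circ}$ recalled in \mysec{gauge} gives $\Theta^\circ = \gamma_{\P^\circ}$. Using the Frobenius pairing $\tr\big( (uv^\top)^\top Y \big) = u^\top Y v$, one reads off $\P^\circ = \{ Y\in\rb^{n\times d} : u^\top Yv\leqslant 1 \ \text{ for all } u\in\U,\, v\in\V\}$, whence
$$
\Theta^\circ(Y) = \gamma_{\P^\circ}(Y) = \inf\{\lambda\geqslant 0 : Y/\lambda\in\P^\circ\} = \sup_{u\in\U,\,v\in\V} u^\top Y v ,
$$
the supremum being attained since $(u,v)\mapsto uv^\top$ is continuous and $\U,\V$ are compact.

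The only subtlety is to check that the ``positive part'' appearing in the general polar formula $\gamma_{\C^\circ}(y) = \big(\sup_{x\in\C}x^\top y\big)_+$ is vacuous here, i.e.\ that $\sup_{u\in\U,v\in\V}u^\top Yv \geqslant 0$ for every $Y$. This is exactly where the assumption $0\in{\rm hull}(\U)$ is used: fixing $v\in\V$ and writing $0 = \sum_i\alpha_i u_i$ with $u_i\in\U$, $\alpha_i>0$, $\sum_i\alpha_i=1$, we get $\sum_i\alpha_i(u_i^\top Yv) = 0$, so some $u_i^\top Yv\geqslant 0$ and hence $\max_{u\in\U}u^\top Yv\geqslant 0$. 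The same observation (using $0\in{\rm hull}(\V)$) gives $\gamma_\V^\circ(w) = \sup_{v\in\V}v^\top w$ with no positive part, and symmetrically for $\gamma_\U^\circ$.

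The remaining equalities are then rewritings. Splitting the joint supremum, $\sup_{u\in\U,v\in\V}u^\top Yv = \sup_{u\in\U}\big(\sup_{v\in\V}v^\top(Y^\top u)\big) = \sup_{u\in\U}\gamma_\V^\circ(Y^\top u)$ by the previous paragraph, and symmetrically $=\sup_{v\in\V}\gamma_\U^\circ(Yv)$ (these are finite everywhere since $\U,\V$ are compact). For the last form, $\{u:\gamma_\U(u)\leqslant 1\} = \U^{\circ\circ}$ and $\{v:\gamma_\V(v)\leqslant 1\} = \V^{\circ\circ}$; since $u\mapsto u^\top Yv$ and $v\mapsto u^\top Yv$ are linear, maximizing the bilinear form over $\U^{\circ\circ}\times\V^{\circ\circ}$ only adds the value $0$ --- attained at $u=0$ or $v=0$, which belong to these sets by hypothesis --- relative to maximizing over $\U\times\V$, and this extra value is dominated by the nonnegativity already established. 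Hence all four expressions agree.

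I do not expect any real obstacle: the content is a direct computation from the gauge/polar dictionary of \mysec{gauge}, and the only place requiring care is the bookkeeping of the positive-part term, which the hypothesis $0\in{\rm hull}(\U)\cap{\rm hull}(\V)$ removes everywhere.
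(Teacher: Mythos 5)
Your proposal is correct and is exactly the route the paper intends: the paper's proof is simply the remark that the result ``is straightforward from the gauge function interpretation,'' i.e.\ from $\Theta=\gamma_{\P}$ with $\P=\{uv^\top: u\in\U,\ v\in\V\}$, the identity $\gamma_\C^\circ=\gamma_{\C^\circ}$, and the pairing $\tr\big((uv^\top)^\top Y\big)=u^\top Yv$. Your write-up supplies the details the paper omits --- notably the careful check, via $0\in{\rm hull}(\U)\cap{\rm hull}(\V)$, that the positive part in $\gamma_{\C^\circ}(y)=\sup_{x\in\C}(x^\top y)_+$ is vacuous, and the attainment of the suprema by compactness --- and these details are all sound.
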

    
\vspace*{.12500cm}

The gauge function $\Theta$ or its polar may not be computed in closed form in general. There are two special cases where this is possible, namely when one of the sets is  the $\ell_1$-norm ball, and when the two sets are the $\ell_2$-norm balls.

\subsection{Examples}
In this paper, we consider several sets $\U$ and $\V$ with interesting properties which are displayed in Table~\ref{tab:U}.  Note that when $\V$ is the $\ell_1$-ball, then the norm $\Theta$ may be computed in closed form as $
\Theta(X) = \sum_{i=1}^d \gamma_\U( X(:,i) )$, where $X(:,i)$ is the $i$-th column of $X$. The case where $\V$ is the $\ell_2$-ball is also specific and will be treated in \mysec{l2}. Some specific combinations of $\U$ and $\V$ are particularly interesting:

\begin{list}{\labelitemi}{\leftmargin=1.1em}
   \addtolength{\itemsep}{-.0\baselineskip}

\item[--] $\U$ and $\V$ are  unit $\ell_2$-balls: the norm $\Theta$ is the nuclear norm (sum of singular values), leading to a regularizer inducing low-rank~\cite{recht2010guaranteed}.

\item[--] $\V$ is the unit $\ell_2$-ball and $\U^\circ$ is the intersection of the unit $\ell_2$-ball and a scaled $\ell_\infty$-ball, i.e.,  $\U^\circ =\{ w \in \rb^n, \   \| w\|_2 \leqslant 1,     \| w\|_\infty \leqslant \nu \}$. The norm may $\Theta$ may then be computed in closed form as 
$\Theta(X) = \inf_{X = Y + Z } \| Y\|_\ast + \nu \sum_{i=1}^n \|Z(i,:)\|_2$, and has been used in robust versions of principal component analysis where some observations may be corrupted (through a ``low-rank + group-sparse'' model)~\cite{xu2012robust}.

\item[--] $\gamma_\V = \| \cdot \|_2$ and $\gamma_\U = \sqrt{ \nu \| \cdot \|_2^2 + (1-\nu) \| \cdot \|_1^2 }$: this is a convex relaxation of sparse coding~\cite{bach2008convex}, where a decomposition with a sparse factor $U$ and low rank (small number of columns for $U$ and $V$) is looked for.

\item[--] $\U$ and $\V$ are $\ell_\infty$-unit balls: the norm $\Theta$ has been considered as a complexity measure for sign matrices~\cite{linial2007complexity}. It is moreover related to the cut-norm~\cite{alon2006approximating}; the relaxation presented in \mysec{relaxation} is the max-norm~\cite{srebro2005rank,lee2010practical}. Note here that our conditional gradient algorithms presented in~\mysec{algorithms} significantly improves the number of rank-one factor to obtain a $\varepsilon$-approximate decomposition from $O(1/\varepsilon^2)$~\cite{alon2006approximating} to $O (\log \frac{ 1 }{\varepsilon} )$.

\item[--] $\gamma_\V = \| \cdot \|_2$ and $\gamma_\U = \| \cdot  \|_{4/3}$: we have $\gamma_\U^\circ = \| \cdot \|_4$, and the dual norm is such that  $\Theta^\circ(Y)^4 = \max_{ \|v\|_2 \leqslant 1}  \sum_{i} ( Yv)_i^4$, and corresponds to maximum kurtosis projections~\cite{hyvarinen1999fast}.

\item[--] $\U$ and $\V$ are $\ell_p$-norm balls. This is a case considered by~\cite{tal}, where approximation guarantees are derived. Values of $p$ for which dimension-independent guarantees are obtained are exactly the ones corresponding to diagonal representations in Prop.~\ref{prop:diag2}.

\item[--] $\U$ and $\V$ are $\ell_p$-norm balls intersected with the positive orthant. We obtain convex relaxations of non-negative matrix factorization. Note that for $p=\infty$, it comes with approximation guarantees in terms of the value of the gauge function $\Theta$.

\item[--] $\U = \{0,1\}^n$ and $\gamma_\V = \| \cdot \|_2$: this corresponds to decomposing the matrix $X$ with a factor $U$ with binary values. 
We will consider this example in our experiments in \mysec{experiments}.
Extensions may be considered by (a) adding also a constraint that less than $k$ components of $u$ are non-zeros (i.e., by adding a constraint $u^\top 1_n \leqslant k$), or (b) considering $\U = \{ \frac{1_A}{F(A)}, \ A \subset V, \ A \neq \varnothing \}$, for a non-decreading submodular function $F$, leading to $\gamma_\U(x) = f(x) + I_{\rb_+^n}(x)$ where $f$ is the Lov\`asz extension of~$F$~\cite{fujishige2005submodular,bach2011learning}.

\end{list}

\begin{table}
\caption{Examples of structured sets $\U$, together with a computable surrogate $\C_\U$ of $\P_\U = \{ uu^\top, \ u \in \U\}$, that will be used in \mysec{relaxation}. Non-negativity constraints $u \geqslant 0$ may be added to $\U$ with an additional constraint $M \geqslant 0$ for $\C_\U$.}
\label{tab:U}

\begin{center}
\begin{tabular}{|l|l|l|}
\hline
& $\U$ & $\C_\U$ \\
\hline
$\ell_p$-ball, $p \in [1,2]$ & $\|u\|_p \leqslant 1$ & $M \succcurlyeq 0, \| M \|_p \leqslant 1, \tr M \leqslant 1$ \\
$\ell_p$-ball, $p \in [2,+\infty]\!\!$ & $\|u\|_p \leqslant 1$ & $M \succcurlyeq 0, \| \diag(M) \|_{p/2} \leqslant 1$  \\
sparse coding & $\nu \| u\|_2^2 + (1 \!-\! \nu) \|u\|_1^2 \leqslant 1 \!\! $ & $M \succcurlyeq 0, \nu \tr M + (1 \!-\! \nu) \|M\|_1 \leqslant 1$ \\
binary coding & $u \in \{0,1\}^d$ &  $M \succcurlyeq mm ^\top, M  \geqslant 0, m = \diag(M) \leqslant 1 \!\!$  \\
\hline
\end{tabular}
\end{center}
\end{table}

  \subsection{Special case: $\gamma_\V = \| \cdot \|_2$}
  \label{sec:l2}
  Computing the polar of $\Theta$ then corresponds to a quadratic maximization problem:
  \BEQ
  \label{eq:PU}
  \Theta^\circ(Y)^2  = \max_{ u \in \U }   { u^\top YY^\top u }= \max_{ u \in \U^{\circ\circ} }   { u^\top YY^\top u }=  \max_{ \gamma_\U(u) \leqslant 1 }   { u^\top YY^\top u }.
 \EEQ

We also have a variational quadratic representation corresponding to a rotation-invariant gauge function (\mysec{rotation}); following \mysec{K}, we denote by $\P_\U$ the set $\{ uu^\top, \ u \in \U\}$. There is an explicit representation of $\Theta$ and $\Theta^\circ$ in terms of   $\P_\U^{\circ\circ}$ (the convex hull of $\P_\U \cup \{0\}$).

 \begin{proposition}
 We have, for all $X,Y \in \rb^{n \times d}$: $$ \Theta(X)^2  = \inf_{  M \in \P_\U^{\circ\circ} }      \tr X^\top M^{-1} X 
\  \mbox{ and } \  \Theta^\circ(Y)^2 = \max_{ M \in \P_\U^{\circ\circ}} \tr M YY^\top. $$
 \end{proposition}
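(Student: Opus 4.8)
The plan is to prove the polar identity directly and to derive the primal one from it by Fenchel duality. For the formula for $\Theta^\circ$: since $\P_\U^{\circ\circ}={\rm hull}(\P_\U\cup\{0\})$ and $M\mapsto\tr MYY^\top$ is linear, its maximum over $\P_\U^{\circ\circ}$ is attained at a generator, i.e.\ at $0$ or at some $uu^\top$ with $u\in\U$; as $\tr\big((uu^\top)YY^\top\big)=u^\top YY^\top u\ge 0$, this maximum equals $\max_{u\in\U}u^\top YY^\top u$, which is $\Theta^\circ(Y)^2$ by \eq{PU}. This settles the second identity, and, importantly, it exhibits $Y\mapsto\frac12\Theta^\circ(Y)^2=\sup_{M\in\P_\U^{\circ\circ}}\frac12\tr Y^\top MY$ as a supremum of quadratic forms indexed by the \emph{compact} convex set $\P_\U^{\circ\circ}$ (compact because $\U$ is compact, so $\P_\U$ is a continuous image of a compact set and its convex hull is compact in finite dimension).

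For the formula for $\Theta$, I would use that $X\mapsto\frac12\Theta(X)^2$ and $Y\mapsto\frac12\Theta^\circ(Y)^2$ are Fenchel conjugate to one another — the general property of a gauge and its polar recalled in \mysec{gauge} — and that both are closed convex (as $\U,\V$ are compact, $\Theta$ is the gauge of a compact set, hence lower semicontinuous). Plugging the previous display into $\frac12\Theta(X)^2=\sup_Y\{\tr X^\top Y-\frac12\Theta^\circ(Y)^2\}$ gives
\[
\tfrac12\,\Theta(X)^2=\sup_{Y}\ \inf_{M\in\P_\U^{\circ\circ}}\Big\{\tr X^\top Y-\tfrac12\tr Y^\top MY\Big\}.
\]
The bracket is concave in $Y$ for each fixed $M\succcurlyeq 0$ and affine in $M$ for each fixed $Y$, and $\P_\U^{\circ\circ}$ is compact convex, so Sion's minimax theorem permits exchanging $\sup_Y$ and $\inf_M$:
\[
\tfrac12\,\Theta(X)^2=\inf_{M\in\P_\U^{\circ\circ}}\ \sup_{Y}\Big\{\tr X^\top Y-\tfrac12\tr Y^\top MY\Big\}=\inf_{M\in\P_\U^{\circ\circ}}\tfrac12\,\tr X^\top M^{-1}X ,
\]
because the inner supremum is the Fenchel conjugate of the (possibly degenerate) quadratic $Y\mapsto\frac12\tr Y^\top MY$: for $M$ invertible the maximiser is $Y=M^{-1}X$, and for singular $M$ a column-wise range analysis returns exactly the value of $\frac12\tr X^\top M^{-1}X$ defined through the Schur complement in \mysec{L} (equal to $+\infty$ precisely when some column of $X$ leaves the range of $M$). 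Multiplying by $2$ yields the first identity, and the exchange additionally shows the infimum is attained.

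All the auxiliary facts here — joint convexity of $(X,M)\mapsto\tr X^\top M^{-1}X$, the conjugate-of-a-quadratic computation, the degenerate case — are exactly those already used in the proof of Proposition~\ref{prop:eqKL}, so the only step I would treat with real care is the minimax exchange, which is where compactness of $\P_\U^{\circ\circ}$, hence the standing compactness of $\U$, is genuinely needed. Alternatively, one can avoid the minimax theorem entirely by checking directly that $X\mapsto\frac12\inf_{M\in\P_\U^{\circ\circ}}\tr X^\top M^{-1}X$ is closed convex, computing its Fenchel conjugate to be $\frac12\Theta^\circ(\cdot)^2$ via the first part, and conjugating back using that $\frac12\Theta^2$ is closed.
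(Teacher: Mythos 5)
Your proposal is correct and follows essentially the same route as the paper: the polar identity is read off from \eq{PU} because the linear function $M\mapsto\tr MYY^\top$ attains its supremum over $\P_\U^{\circ\circ}$ on the generators $uu^\top$, and the primal identity is then obtained by Fenchel-conjugating $\tfrac12(\Theta^\circ)^2$ and exchanging $\sup_Y$ with $\inf_M$. Your justification of the exchange via Sion's theorem and compactness of $\P_\U^{\circ\circ}$ is in fact more explicit than the paper's one-line appeal to ``Fenchel duality,'' and your constants are the correct ones.
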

 \begin{proof}
 The representation of $\Theta^\circ$ is obvious from \eq{PU}. We then have, for any $X \in \rb^{n \times d}$:
  \BEAS
  \Theta(X)^2 &  = &  \sup_{ Y \in \rb^{n \times d} } \tr X^\top Y - 2 \Theta^\circ(Y)^2 \mbox{ like all polar pairs}, \\
   & = & \sup_{ Y \in \rb^{n \times d} } \tr X^\top Y - 2  \max_{ M \in \P_\U^{\circ\circ} }  \tr  YY^\top M \mbox{ from \eq{PU}},\\
  & = & \inf_{ M \in \P_\U^{\circ\circ} }  \sup_{ Y \in \rb^{n \times d} } \tr X^\top Y - 2   \tr  YY^\top M \mbox{ by Fenchel duality}, \\
  & = & \inf_{ M \in \P_\U^{\circ\circ} }  \tr X^\top M^{-1} X.
  \EEAS
  Note that $\Theta$ may not have full domain.
  \end{proof}

\vspace*{.2500cm}

  From the earlier representation, we get $  \displaystyle \Theta(X)  = \inf_{ M \succcurlyeq 0 }  \big\{ \frac{1}{2} \gamma_{\P_\U}(M)  + \frac{1}{2} \tr X^\top M^{-1} X \big\}$, by simply optimizing over the half line generated by $M$. We also have a representation involving all decompositions of $X$ as $UV^\top$:
 \BEQ
 \label{eq:UV2} \Theta(X)  = \min_{ X = UV^\top }  
 \frac{1}{2} \tr VV^\top + \frac{1}{2} \gamma_{\P_\U} ( UU^\top),
 \EEQ
 which is straightforward from the definitions and interpretations of $\Theta$ and  $\gamma_{\P_\U}$ as a gauge function (Prop.~\ref{prop:UU}). In the next section, we show how to compute $U$ and $V$ from the solution of a certain convex problem---note however that this does not lead to the optimal $U$ and $V$ in \eq{theta}. See more details in \mysec{algorithms}.

\subsection{General case}
We now consider all possible cases, beyond $\gamma_\V = \| \cdot \|_2$ (where $\gamma_{\P_\V}(M) = \tr M $). We only assume that $\U$ and $\V$ are compact and contain zero in their hulls.
  It is tempting to consider an extension of \eq{UV2}, by considering
\BEQ  
\label{eq:tilde} 
\widetilde{\Theta} (X) =  \! \min_{ X = UV^\top }   \!
 \frac{1}{2}\big[ \gamma_{\P_\V}(VV^\top) +  \gamma_{\P_\U} ( UU^\top) \big]
 =  \! \min_{ X = UV^\top }   \! \!
\sqrt{\gamma_{\P_\V}(VV^\top)  \gamma_{\P_\U} ( UU^\top)} .  \!\EEQ
 However, when neither $\U$ nor $\V$ are unit $\ell_2$-balls, this only provides a lower-bound to $\Theta$, because the optimal decompositions of $UU^\top$ and $VV^\top$ for respectively $\gamma_{\P_\U}$ and $\gamma_{\P_\V}$ may not lead to an optimal decomposition of $X$. Since all our further relaxations in \mysec{relaxation} will work directly on $\widetilde{\Theta}$, we study precisely the properties of $\widetilde{\Theta}$. 
 We first show that \eq{tilde} defines a gauge function and give a novel expression of its polar, as a maximum of weighted nuclear norms (note that the function
 $ (M,N) \mapsto \| M^{1/2} Y N^{1/2} \|_\ast$ is concave on $\S_n^+ \times \S_d^+$).
 
 \begin{proposition}
 \label{prop:tilde0}
 Let $\widetilde{\Theta}$ be defined in \eq{tilde}. Then, $\widetilde{\Theta}$ is a gauge function (convex, positively homogeneous, and non-negative), and its polar has the expression:
 \BEQ
 \forall Y \in \rb^{n \times d}, \ \widetilde{\Theta}^\circ(Y) =  \sup_{ M \in \P_\U^{\circ\circ},   \ N  \in \P_\V^{\circ\circ} } \| M^{1/2} Y N^{1/2} \|_\ast.
 \EEQ
 Moreover, for all $X, Y \in \rb^{n \times d}$,
$\Theta(X) \geqslant \widetilde{\Theta}(X)$ and  $\Theta^\circ(Y) \leqslant \widetilde{\Theta}^\circ(Y)$.
 \end{proposition}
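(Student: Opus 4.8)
The plan is to verify the three assertions in order: that $\widetilde\Theta$ is a gauge function, that its polar has the stated form, and that the inequalities $\Theta \geqslant \widetilde\Theta$ and $\Theta^\circ \leqslant \widetilde\Theta^\circ$ hold. For the gauge property, I would first note that $\widetilde\Theta$ is manifestly non-negative (it is the infimum of a non-negative quantity) and positively homogeneous: replacing $X$ by $\lambda X$ (for $\lambda > 0$) and rescaling $U \mapsto \sqrt\lambda\, U$, $V \mapsto \sqrt\lambda\, V$ multiplies the objective $\tfrac12[\gamma_{\P_\V}(VV^\top) + \gamma_{\P_\U}(UU^\top)]$ by $\lambda$, using that $\gamma_{\P_\U}$ and $\gamma_{\P_\V}$ are themselves positively homogeneous. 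Convexity is the only nontrivial part: I would establish it by showing that the sublevel set $\{X : \widetilde\Theta(X) \leqslant 1\}$ is convex, or more directly, by recognizing $\widetilde\Theta$ as a gauge of the (not necessarily convex) set $\{ UV^\top : \gamma_{\P_\U}(UU^\top) \leqslant 1,\ \gamma_{\P_\V}(VV^\top) \leqslant 1\}$ — the equality of the two expressions in \eq{tilde} (arithmetic-mean/geometric-mean plus the rescaling freedom $U \mapsto tU$, $V \mapsto t^{-1}V$) shows $\widetilde\Theta$ equals $\inf\{\lambda \geqslant 0 : X/\lambda \in \widehat{\mathcal C}\}$ for that set $\widehat{\mathcal C}$, and any such function is a gauge once one checks it is lower semicontinuous / that $\widehat{\mathcal C}$ can be taken closed. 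The cleanest route, though, is to compute the polar first and then invoke that the polar of any function is a gauge, and that $\widetilde\Theta$ coincides with $\widetilde\Theta^{\circ\circ}$; so I would prioritize the polar computation.

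For the polar, I would compute directly from the definition $\widetilde\Theta^\circ(Y) = \sup_X \frac{\operatorname{tr} X^\top Y}{\widetilde\Theta(X)}$, equivalently $\sup\{\operatorname{tr} X^\top Y : \widetilde\Theta(X) \leqslant 1\}$. Parametrizing $X = UV^\top$ and using the scaling freedom, $\widetilde\Theta(X) \leqslant 1$ is implied by $\gamma_{\P_\U}(UU^\top) \leqslant 1$ and $\gamma_{\P_\V}(VV^\top) \leqslant 1$, so $\widetilde\Theta^\circ(Y) = \sup\{ \operatorname{tr} V U^\top Y : \gamma_{\P_\U}(UU^\top) \leqslant 1,\ \gamma_{\P_\V}(VV^\top) \leqslant 1 \}$. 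Now Prop.~\ref{prop:UU} identifies $\{M \in \S_n^+ : \gamma_{\P_\U}(M) \leqslant 1\}$ with $\P_\U^{\circ\circ}$, and similarly for $\V$, and writing $M = UU^\top$, $N = VV^\top$ the supremum of $\operatorname{tr} U^\top Y V$ over all factorizations with fixed Gram matrices $M, N$ is exactly $\|M^{1/2} Y N^{1/2}\|_\ast$ — this is the standard variational characterization of the nuclear norm, $\|A\|_\ast = \max\{\operatorname{tr} P^\top A Q : P^\top P \preccurlyeq I,\ Q^\top Q \preccurlyeq I\}$, applied with $A = M^{1/2} Y N^{1/2}$ and $U = M^{1/2}P$, $V = N^{1/2}Q$ (being slightly careful when $M$ or $N$ is singular, where one restricts to the range). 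Taking the supremum over $M \in \P_\U^{\circ\circ}$, $N \in \P_\V^{\circ\circ}$ gives the claimed formula; concavity of $(M,N) \mapsto \|M^{1/2} Y N^{1/2}\|_\ast$ is noted but not needed for the identity itself.

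For the final inequalities, $\Theta(X) \geqslant \widetilde\Theta(X)$ is the observation already flagged in the text: given any optimal (or near-optimal) decomposition $X = \sum_m \lambda_m u_m v_m^\top$ achieving $\Theta(X)$, set $U$ to have columns $\sqrt{\lambda_m}\, u_m$ and $V$ to have columns $\sqrt{\lambda_m}\, v_m$; then $UU^\top = \sum_m \lambda_m u_m u_m^\top$ is a valid decomposition certifying $\gamma_{\P_\U}(UU^\top) \leqslant \sum_m \lambda_m$ (since each $u_m \in \U$), and likewise $\gamma_{\P_\V}(VV^\top) \leqslant \sum_m \lambda_m$, so the objective in \eq{tilde} is at most $\sum_m \lambda_m = \Theta(X)$; taking the infimum gives $\widetilde\Theta(X) \leqslant \Theta(X)$. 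The polar inequality $\Theta^\circ(Y) \leqslant \widetilde\Theta^\circ(Y)$ then follows formally, since polarity reverses inequalities between gauge functions: $\Theta \geqslant \widetilde\Theta$ pointwise implies $\sup_X \frac{\operatorname{tr} X^\top Y}{\Theta(X)} \leqslant \sup_X \frac{\operatorname{tr} X^\top Y}{\widetilde\Theta(X)}$. The main obstacle I anticipate is the rank-deficient case in the nuclear-norm step — making the variational identity $\sup\{\operatorname{tr} U^\top Y V : UU^\top = M,\ VV^\top = N\} = \|M^{1/2} Y N^{1/2}\|_\ast$ fully rigorous when $M$ or $N$ is singular, and confirming that the $\operatorname{tr}$-feasible $U, V$ attaining it correspond to points where the defining infimum for $\gamma_{\P_\U}$ is actually attained (Prop.~\ref{prop:UU} guarantees this with $r \leqslant d(d+1)/2$). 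Everything else is routine manipulation of gauges and Fenchel duality already developed in Sections~\ref{sec:gauge}--\ref{sec:quadratic}.
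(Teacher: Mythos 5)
Your proposal is correct and follows essentially the same route as the paper's proof: the polar is computed by parametrizing over the Gram matrices $M=UU^\top\in\P_\U^{\circ\circ}$, $N=VV^\top\in\P_\V^{\circ\circ}$ (using that these are exactly the unit balls of $\gamma_{\P_\U}$, $\gamma_{\P_\V}$) and then invoking $\sup_{UU^\top=\idm,\,VV^\top=\idm}\tr Z^\top UV^\top=\|Z\|_\ast$, while the inequality $\Theta\geqslant\widetilde\Theta$ comes from feeding an optimal decomposition of $X$ into Prop.~\ref{prop:UU} and the polar inequality follows by order reversal. The only difference is cosmetic: you flag the rank-deficient case and the bipolar route to convexity explicitly, where the paper treats both as straightforward.
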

  \begin{proof}
  The three properties of $\widetilde{\Theta}$ are straightforward. We  have, for $Y \in \rb^{n \times d}$:
  \BEAS
\!\!\!  \widetilde{\Theta}^\circ(Y) \!& = &\!\!\! \sup_{  \gamma_{\P_\V}(VV^\top)  \gamma_{\P_\U} ( UU^\top) \leqslant 1 } \tr Y^\top U V^\top
= \sup_{  \gamma_{\P_\V}(VV^\top) \leqslant 1, \   \gamma_{\P_\U} ( UU^\top) \leqslant 1 } \tr Y^\top U V^\top \\
  & = &\!\!\! \sup_{ M \in \P_\U^{\circ\circ},   \ N \in \P_\V^{\circ\circ} } \   \sup_{  UU^\top =M, \ VV^\top = N} \tr Y^\top U V^\top \\
  & = &\!\!\! \sup_{ M \in \P_\U^{\circ\circ},   \ N \in \P_\V^{\circ\circ} } \   \sup_{  UU^\top = \idm, \ VV^\top = \idm} \tr Y^\top M^{1/2} U V^\top N^{1/2} =  \!\!\! \sup_{ M \in \P_\U^{\circ\circ},   \ N  \in \P_\V^{\circ\circ} } \| M^{1/2} Y N^{1/2} \|_\ast,
  \EEAS
  because $\sup_{ UU^\top = \idm, \ VV^\top = \idm} \tr Z^\top UV^\top = \| Z\|_\ast$.
  Finally, we have for some $u_m,v_m$, $\Theta(X)  = \frac{1}{2} \sum_{m} \big\{ \gamma_\U(u_m)^2 + \gamma_\V(v_m)^2 \big\} $. Thus $\Theta(X) \geqslant \widetilde{\Theta}(X)$ because of Prop.~\ref{prop:UU}, which implies the reverse inequality for polars.
  \end{proof}
  
\vspace*{.2500cm}

 We may now give several equivalent expressions for $\widetilde{\Theta}$ and $\widetilde{\Theta}^\circ$ which are obtained using usual convex duality.
 
\begin{proposition}
\label{prop:tilde}
 Let $\widetilde{\Theta}$ be defined in \eq{tilde}. We have:
\BEA
\nonumber \widetilde{\Theta}(X) & = &   \inf_{M \in \S_n, \ N \in \S_d }   \frac{1}{2}\gamma_{ \P_\U} (M) + \frac{1}{2} \gamma_{ \P_\V} (N)  \mbox{ s.t. } 
  \Big( \! \begin{array}{cc} M & X \\ X^\top & N \end{array} \! \Big) \succcurlyeq 0 \\
\nonumber  & = &  \sup_{ Q \in \P_\U^{\circ}, \ S \in \P_\V^{\circ}, \ Z \in \rb^{n \times d}}
\frac{1}{2}
\tr 
\Big( \!\begin{array}{cc} Q  & Z \\ Z^\top & S \end{array} \!\Big) 
\Big( \!\begin{array}{cc} 0  & X \\ X^\top  & 0  \end{array} \!\Big) 
\mbox{ s.t. } \bigg( \begin{array}{cc} Q  & Z \\ Z^\top & S \end{array} \bigg) \succcurlyeq 0,
\\
\nonumber \widetilde{\Theta}^\circ(Y) & = &   \inf_{Q \in \S_n, \ S \in \S_d}   \frac{1}{2}\gamma_{ \P_\U^\circ} (Q) + \frac{1}{2} \gamma_{ \P_\V^\circ} (S)  \mbox{ s.t. } 
  \Big( \!\begin{array}{cc} Q & Y \\ Y^\top & S \end{array} \!\Big) \succcurlyeq 0 \\
\nonumber   & = &  \sup_{ M \in \P_\U^{\circ\circ}, \ N \in \P_\V^{\circ\circ}, \ Z \in \rb^{n \times d}}
\frac{1}{2}
\tr 
\Big( \!\begin{array}{cc} M  & Z \\ Z^\top & N \end{array} \!\Big) 
\Big( \!\begin{array}{cc} 0  & Y \\ Y^\top  & 0  \end{array} \!\Big) 
\mbox{ s.t. } \Big( \! \begin{array}{cc} M  & Z \\ Z^\top & N  \end{array} \!\Big) \succcurlyeq 0.
  \EEA
\end{proposition}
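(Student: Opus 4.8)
The plan is to establish the four displayed identities by a chain of convex-duality manipulations, starting from the ``primal'' characterization of $\widetilde\Theta$ that already follows from \eq{tilde} and Prop.~\ref{prop:UU}, and then dualizing to get the ``sup'' forms; the formulas for $\widetilde\Theta^\circ$ come either by repeating the argument with $\P_\U^\circ,\P_\V^\circ$ in place of $\P_\U^{\circ\circ},\P_\V^{\circ\circ}$, or simply by polarity together with Prop.~\ref{prop:tilde0}. First I would rewrite the minimization over factorizations $X=UV^\top$ in \eq{tilde} as a minimization over the Gram matrices $M=UU^\top$ and $N=VV^\top$: the constraint that there exist $U,V$ with $X=UV^\top$, $UU^\top=M$, $VV^\top=N$ is, by a standard Schur-complement / factorization fact, exactly $\big(\begin{smallmatrix} M & X\\ X^\top & N\end{smallmatrix}\big)\succcurlyeq 0$ (this is the same fact used implicitly in \mysec{L} to define $x^\top M^{-1}x$ via a semidefinite constraint). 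Substituting $\gamma_{\P_\U}(M)$ for the $U$-part of the objective (Prop.~\ref{prop:UU}) and $\gamma_{\P_\V}(N)$ for the $V$-part, and noting that one may enlarge the domain of $M,N$ from $\S_n^+\times\S_d^+$ to all of $\S_n\times\S_d$ since the positive-semidefiniteness of the block matrix already forces $M\succcurlyeq0$, $N\succcurlyeq0$, yields the first line.

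Next, to obtain the second line, I would dualize the SDP in the first line. The Lagrangian multiplier for the block constraint $\big(\begin{smallmatrix} M & X\\ X^\top & N\end{smallmatrix}\big)\succcurlyeq 0$ is a positive semidefinite matrix which I write in block form as $\big(\begin{smallmatrix} Q & Z\\ Z^\top & S\end{smallmatrix}\big)\succcurlyeq0$. Taking the inner product $\tr\big[\big(\begin{smallmatrix} Q & Z\\ Z^\top & S\end{smallmatrix}\big)\big(\begin{smallmatrix} M & X\\ X^\top & N\end{smallmatrix}\big)\big] = \tr QM + \tr SN + 2\tr Z^\top X$, the minimization over $M$ of $\frac12\gamma_{\P_\U}(M) - \frac12\tr QM$ is finite (and equals $0$) precisely when $Q\in\P_\U^\circ$, since $\gamma_{\P_\U}^\ast = I_{\P_\U^\circ}$ (the Fenchel conjugate of a gauge is the indicator of the polar, from \mysec{gauge}); similarly the minimization over $N$ forces $S\in\P_\V^\circ$. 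What remains is $\sup\{\tr Z^\top X : Q\in\P_\U^\circ,\ S\in\P_\V^\circ,\ \big(\begin{smallmatrix} Q & Z\\ Z^\top & S\end{smallmatrix}\big)\succcurlyeq0\}$, which after rewriting $\tr Z^\top X = \frac12\tr\big[\big(\begin{smallmatrix} Q & Z\\ Z^\top & S\end{smallmatrix}\big)\big(\begin{smallmatrix} 0 & X\\ X^\top & 0\end{smallmatrix}\big)\big]$ is exactly the second displayed line. Strong duality holds because the primal SDP is convex and strictly feasible (for instance take $M = t\,\idm_n$, $N = t\,\idm_d$ with $t$ large, assuming $X\in{\rm dom}(\widetilde\Theta)$; the boundary/infinite cases are handled by homogeneity and closedness of the gauge functions). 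The two formulas for $\widetilde\Theta^\circ$ follow identically: either run the same derivation starting from the first $\widetilde\Theta^\circ$ line (which itself is the ``$\P^\circ$-version'' of the $\widetilde\Theta$ representation, obtained by the same Schur-complement argument), or, more economically, observe that Prop.~\ref{prop:tilde0} already gives $\widetilde\Theta^\circ(Y) = \sup_{M\in\P_\U^{\circ\circ},\,N\in\P_\V^{\circ\circ}}\|M^{1/2}YN^{1/2}\|_\ast$ and rewrite $\|M^{1/2}YN^{1/2}\|_\ast = \sup\{\tr Z^\top Y : \big(\begin{smallmatrix} M & Z\\ Z^\top & N\end{smallmatrix}\big)\succcurlyeq0\}$, which is the variational/SDP form of the nuclear norm with the weights absorbed — this gives the last line, and dualizing it back gives the third.

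The main obstacle I anticipate is not any single algebraic step but rather the careful bookkeeping of domains and the justification of strong duality when $\widetilde\Theta$ or $\widetilde\Theta^\circ$ is not finite everywhere and when $M$ or $N$ is singular. Concretely: (i) the identification ``$\exists\,U,V:\,X=UV^\top,\,UU^\top=M,\,VV^\top=N \iff \big(\begin{smallmatrix} M & X\\ X^\top & N\end{smallmatrix}\big)\succcurlyeq0$'' must be stated so it covers rank-deficient $M,N$ — this is where one uses the convention for $x^\top M^{-1}x$ introduced in \mysec{L}, extended blockwise; (ii) interchanging $\inf_M$/$\inf_N$ with $\sup_Y$ (equivalently, invoking Sion's minimax theorem or Fenchel–Rockafellar) requires the usual constraint-qualification, which I would supply via strict feasibility of the block LMI together with compactness of $\U,\V$ (so that the gauges $\gamma_{\P_\U},\gamma_{\P_\V}$ are proper closed convex with the relevant cones having nonempty interior in their spans). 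Once these qualifications are in place the four identities are the four corners of a single Fenchel-duality square, and the computation is the routine one sketched above.
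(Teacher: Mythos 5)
Your proposal is correct and takes essentially the same route as the paper: the four expressions are linked by Lagrangian/polar duality for the semidefinite cone, and the chain is anchored by identifying one of them with a known quantity---the paper anchors at the last line via the identity $\max_Z \big\{\tr Z^\top Y : \left(\begin{array}{cc} M & Z \\ Z^\top & N\end{array}\right) \succcurlyeq 0\big\} = \|M^{1/2} Y N^{1/2}\|_\ast$ together with Prop.~\ref{prop:tilde0}, which is precisely your ``more economical'' alternative. Your additional direct verification of the first line through the Gram-matrix equivalence, and your explicit attention to strong duality and degenerate domains, only supply detail that the paper leaves implicit.
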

\begin{proof}
The four expressions are equivalent by taking polars or using Lagrangian duality for the semi-definite cone. The only element left to prove is that any of these four statements are correct.
We have: 
$$ \max_{   Z \in \rb^{n \times d} }
\tr Z^\top Y 
\mbox{ s.t. } \Big( \! \begin{array}{cc} M  & Z \\ Z^\top & N  \end{array} \!\Big) \succcurlyeq 0 =  \max_{   \| T\|_{\rm op} \leqslant 1 } \tr Y^\top M^{1/2}  T N  ^{1/2}
=  \| M^{1/2} Y N^{1/2} \|_\ast,
$$
which leads to desired result from Prop.~\ref{prop:tilde0}.
 Note that we may obtain directly a relationship between $\widetilde{\Theta}^\circ$ and $\Theta^\circ$ as follows:
\BEAS
\! \! \Theta^\circ(Y) & = &  \! \max_{u \in \U, \ v \in \V } v^\top Y^\top u 
=  \max_{u \in \U, \ v \in \V }  \frac{1}{2}
\tr 
\Big( \!\begin{array}{cc} uu^\top  & uv^\top \\ vu^\top & uu^\top  \end{array} \!\Big) 
\Big( \!\begin{array}{cc} 0  & Y \\ Y^\top  & 0  \end{array}\! \Big) 
\\
& = & \!  \max_{ M \in \P_\U, \ N \in \P_\V}
\frac{1}{2}
\tr 
\Big( \!\begin{array}{cc} M  & Z \\ Z^\top & N  \end{array} \!\Big) 
\Big(\! \begin{array}{cc} 0  & Y \\ Y^\top  & 0  \end{array} \!\Big) 
\mbox{ s.t. } \Big( \!\begin{array}{cc} M  & Z \\ Z^\top &  N \end{array}\! \Big) \succcurlyeq 0
\\
& \leqslant &\!   \max_{ M \in \P_\U^{\circ\circ}, \ N \in \P_\V^{\circ\circ}}
\frac{1}{2}
\tr 
\Big( \!\begin{array}{cc} M  & Z \\ Z^\top & N \end{array} \!\Big) 
\Big( \!\begin{array}{cc} 0  & Y \\ Y^\top  & 0  \end{array} \!\Big) 
\mbox{ s.t.  } \Big( \!\begin{array}{cc} M  & Z \\ Z^\top & N  \end{array}\! \Big) \succcurlyeq 0 = \widetilde{\Theta}^\circ(Y).
   \EEAS
 \end{proof}

\vspace*{.2500cm}

Note that the representations of $\widetilde{\Theta}$ and $\widetilde{\Theta}^\circ$ have the same form, replacing $\P_\U$ by $\P_\U^{\circ} \cap \S_d^+$. We thus get that for all $X \in \rb^{n \times d}$:
\BEQ
 {\Theta}(X) \geqslant \widetilde{\Theta}(X) =  \sup_{ Q \in \P_\U^{\circ} \cap \S_n^+,   \ S \in \P_\V^{\circ} \cap \S_d^+} \| Q^{1/2} Y S^{1/2} \|_\ast,
\EEQ
that is a variational formulation of $\widetilde{\Theta}$ through nuclear norms (note again that $\P_\U^\circ$ and $\P_\V^\circ$ may not be compact and $\widetilde{\Theta}(X)$ potentially infinite). This is to be contrasted with the following representation:
$$
\!\! 
\Theta^\circ(Y)^2 = \!\! \sup_{ M \in \P_\U, \ N \in \P_\V} \!\!  \tr M Y N Y^\top = \!\!  \sup_{ M \in \P_\U^{\circ\circ}, \ N \in \P_\V^{\circ\circ}} \!\!  \tr M Y N Y^\top =  \!\!  \sup_{ M \in \P_\U, \ N \in \P_\V} \!\!  \| M^{1/2} Y N^{1/2} \|_F^2 ,
$$
which leads to
\BEAS
\Theta(X)^2 & = & \sup_{ Y \in \rb^{n \times d}} \inf_{ M \in \P_\U^{\circ\circ}, \ N \in \P_\V^{\circ\circ}} \tr X^\top Y  - 2\tr M Y N Y^\top \\
& \leqslant & \inf_{ M \in \P_\U^{\circ\circ}, \ N \in \P_\V^{\circ\circ}} \sup_{ Y \in \rb^{n \times d}}  \tr X^\top Y  - 2 \tr M Y N Y^\top 
\mbox{ by weak duality,}\\
& = & \inf_{ M \in \P_\U^{\circ\circ}, \ N \in \P_\V^{\circ\circ}}  \tr M^{-1} X N^{-1} Y^\top, \\
\Theta(X) & \leqslant & \inf_{ M \in \P_\U^{\circ\circ}, \ N \in \P_\V^{\circ\circ}}  \sqrt{ \tr M^{-1} X N^{-1} Y^\top}
= \inf_{ M \in \P_\U^{\circ\circ}, \ N \in \P_\V^{\circ\circ}}  \| M^{-1/2} X N^{-1/2} \|_F,
\EEAS
which is a variational upper-bound as a weighted Frobenius norm, which is not computable as a convex program. Instead, alternate minimization could be used, each of the steps being doable as a convex program.

\paragraph{Relationships between $Q$, $S$, $M$, $N$, $U$ and $V$.} 
\label{sec:QAB}
Given the primal/dual solutions $(M,N)$, $(Q,S)$ of the dual convex optimization problems defining $\widetilde{\Theta}$ in Prop.~\ref{prop:tilde}, we have the following optimality conditions:
Assume $Q = AA^\top$ with $A$ full rank (i.e., $A^\top A$ invertible), and $S=BB^\top$ with $B$ full rank. Then an optimal $Z$ is  $Z = A G H^\top B^\top$, with $A^\top X B = G \Diag(s) H^\top$ is the singular value decomposition of $A^\top X B$. We have 
$$ \Big( \! \begin{array}{cc} M & -X \\ -X^\top & N \end{array} \! \Big) \succcurlyeq 0 \mbox{ and }   \Big( \! \begin{array}{cc} M & -X \\ -X^\top & N \end{array} \! \Big)  \Big(\!  \begin{array}{cc} AA^\top & AGH^\top B^\top \\ B HG^\top A ^\top & BB^\top \end{array} \! \Big) = 0,$$
 which implies (since $A$ and $B$ have full rank):
$$  \Big( \! \begin{array}{cc} M & -X \\ -X^\top & N \end{array} \! \Big)  \Big( \! \begin{array}{cc} A & AGH^\top  \\ B VU^\top   & B  \end{array} \! \Big) = 0.$$
   This leads to $A^\top MA = A^\top X B HG^\top = G \Diag(s) G^\top$. We then take as candidates $U =  (AA^\top)^{\dagger}  A G \Diag(s)^{1/2}$ and
   $V =  (BB^\top)^{\dagger} B H \Diag(s)^{1/2}$, leading to
   $
   UU^\top =M
   $ and $VV^\top = N$. Moreover, $X = UV^\top$ and
   $
\widetilde{\Theta}(X) =      \| A^\top X B\|_\ast = \tr \Diag(s) $. Finally, by optimality of $Q$, we have
$\gamma_{\P_\U}(UU^\top) = \tr UU^\top Q = \tr A^\top M A = \tr \Diag(s)$ and 
$\gamma_{\P_\V}(VV^\top) = \tr VV^\top S = \tr B^\top N B = \tr \Diag(s)$, and thus $(U,V)$ is an optimal decomposition for
\eq{tilde}.
  
  \paragraph{Validity for other sets than $\P_\U$ and $\P_\V$.} The duality properties presented in Prop.~\ref{prop:tilde} are valid for all convex sets of positive semi-definite matrices that contain zero (and in particular for the ones in \mysec{relaxation}). This will be used in \mysec{relaxation} to get computable approximations (with guarantees).

    \subsection{Non-convex approaches}
    \label{sec:nonconvex}
    
Finding approximations of the gauge function~$\Theta$ or its polar $\Theta^\circ$ has been tackled  thoroughly through non-convex approaches, which may only find stationary points of the associated optimization problems. In this paper, we briefly review two approaches based on alternating minimization.

   \paragraph{Computing polar function through a generalized power method.}
 The task is to find  maximizers $(u,v) \in \U \times \V$ of $u^\top Y v$, for a certain $Y \in \rb^{n \times d}$.      The generalized power method iterates the recursion $ u \in \arg\max_{u \in \U} u^\top Yv$ and $ v \in \arg\max_{v \in \V} u^\top Yv$ (i.e., alternate maximization), started from a certain $u$ (typically random, or obtained from  the relaxation   in \mysec{relaxation}). See an analysis in the context of sparse principal component analysis in~\cite{journee2010generalized}.
   
Each iteration increases $u^\top Y v$, and hence the values are converging. However, the iterates themselves do not converge in general, and when they do, may not converge to a global maximizer. There are certain situations where they do: when $\U$ and $\V$ are $\ell_2$-balls (in this case, this is the usual power method~\cite{golub}), or when $\U$ and $\V$ are $\ell_p$-balls and $Y$ has only non-negative elements~\cite{boyd1974power}.
   
   In \mysec{algorithms}, obtaining approximate maximizers $(u,v)$ will be key for the generalized conditional gradient algorithm and simplicial methods.   In our experiments in \mysec{simulations}, the power method leads to better values once initialized from results of the convex relaxations presented in \mysec{relaxation}.

   \paragraph{Alternating optimization.}
   We consider $r \leqslant np$ elements and minimize either the sum 
  $
  \sum_{m=1}^r \gamma_\U(u_m) \gamma_\V(v_m) 
   $ or
   $
  \sum_{m=1}^r \big\{ \frac{1}{2}\gamma_\U(u_m)^2 + \frac{1}{2} \gamma_\V(v_m)^2 \big\}
   $, subject to $X = \sum_{m=1}^r u_m v_m^\top$. This may be done by alternating minimization, with no guarantees, beyond decreasing the value of the upper-bound on the norm.
   
  Note that in many applications, what needs to be solved is an approximation problem of the form $\min_{Z \in \rb^{n \times d}} \frac{1}{2} \| X - Z\|_F^2 + \lambda \Theta(Z)$, which may be tackled with similar algorithms. In fact, in our simulations in \mysec{simulations}, we solve this problem as it  leads to empirically easier optimization problems and more stable results.
  
\paragraph{Non-convex optimization without local minima.}
As all optimization problems on positive semi-definite matrices, computing the lower-bound $\widetilde{\Theta}(X)$ through Prop.~\ref{prop:tilde} may be done using a low-rank representation of the matrix $\Big( \!\begin{array}{cc} M \! & X \\ X^\top 
\! & N \end{array} \!\Big) = 
\Big( \! \begin{array}{c} U \\ V \end{array} \!\Big) 
\Big( \!\begin{array}{c} U \\ V \end{array}\! \Big)^\top
$. As shown by~\cite{burer2003nonlinear}, if the number of columns of $U$ is greater than the rank of any solution of the semi-definite program, then the non-convex optimization problem has no local minima. However, this result should still be taken with caution in our context: (a) the non-convex problem has no local minima but still have many stationary points and care should still be taken when using iterative algorithms (see, e.g.,~\cite{journee2010low}, for approaches based on trust-region  methods and manifold-based optimization); moreover, (b) the result only applies to $\widetilde{\Theta}(X)$ and not to $ {\Theta}(X)$, (c) even in the situations where $\widetilde{\Theta}(X) = \Theta(X)$ (e.g., when of the two sets is an  unit $\ell_2$-ball), such an approach requires to be able to compute $\gamma_{\P_\U}$, which is not possible in many cases (the max-norm considered in~\cite{lee2010practical} corresponds to the convex relaxation $\gamma_{\C_\U}$ of $\gamma_{\P_\U}$ presented in \mysec{relaxation}); (d) such local-search non-convex techniques cannot be used for finding a decomposition of $M$ as $UU^\top$ where all gauge functions $\gamma_\U$ of columns of $U$ are small (which can however be done using the convex optimization techniques presented in \mysec{algorithms}).

Nevertheless, in practice, alternating optimization techniques, although they come with no guarantees, still perform well when they can be applied (see \mysec{simulations}).

\section{Computable convex relaxations of decomposition norms}
\label{sec:relaxation}

In the previous section, we have given several representations of the gauge function $\Theta$ and its polar~$\Theta^\circ$. They rely on the convex hull $\P_\U^{\circ\circ}$ of the union of $\{0\}$ and
 $\P_\U = \{ uu^\top, \ u \in \U\}$.  Thus the algorithms may have polynomial time if this convex hull may be described in polynomial time. This is notably the case where $\U$ is the unit $\ell_2$-ball, in which case  $\P_\U^{\circ\circ} = \{ M \in \S_n^+, \ \tr M \leqslant 1\}$. We are not aware of any other interesting case for which we have such semi-definite representations.
 
Before describing computable relaxations, we first refine the computability requirement which is weaker than having a good representation of $\P_\U^{\circ\circ}$ or its polar $\P_\U^\circ$. We start with a simple lemma:
\begin{lemma}
\label{lemma:P}
Let $\mathcal{A},\mathcal{B} \subset \S_n^+$. The following statements are equivalent:

(a) $\forall M \in \S_n^+$, $ \sup_{ N \in \mathcal{A} } \tr MN \leqslant \sup_{ N \in \mathcal{B} } \tr MN$,

(b) $\mathcal{A}^{\circ \circ} - \S_n^+ \subset \mathcal{B}^{\circ \circ} - \S_n^+$,

(c) $\mathcal{A}^\circ \cap \S_n^+ \supset \mathcal{B}^\circ \cap \S_n^+$.

Moreover, if  $\mathcal{A}$ is composed of rank-one matrices, then they are also equivalent to

(d) $\forall x \in \rb^n$, $ \sup_{ N \in \mathcal{A} } x^\top N x \leqslant \sup_{ N \in \mathcal{B} } \tr x^\top N x$.

\end{lemma}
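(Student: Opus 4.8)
\emph{Proof plan.} The plan is to prove the cycle $(a)\Rightarrow(c)\Rightarrow(b)\Rightarrow(a)$, and then, under the rank-one hypothesis, to add $(a)\Rightarrow(d)\Rightarrow(c)$. For $(a)\Rightarrow(c)$ I would simply plug in: if $Q\in\mathcal{B}^\circ\cap\S_n^+$ then $\sup_{N\in\mathcal{B}}\tr QN\leqslant1$, so applying $(a)$ with $M=Q\succcurlyeq0$ gives $\sup_{N\in\mathcal{A}}\tr QN\leqslant1$, i.e.\ $Q\in\mathcal{A}^\circ$, whence $\mathcal{A}^\circ\cap\S_n^+\supseteq\mathcal{B}^\circ\cap\S_n^+$. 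For $(c)\Rightarrow(b)$ I would take polars: using the identity $(\C\cap\D)^\circ={\rm hull}(\C^\circ\cup\D^\circ)$ from \mysec{gauge}, the self-duality $(\S_n^+)^\circ=-\S_n^+$ of the PSD cone, and the fact that ${\rm hull}(\mathcal{A}^{\circ\circ}\cup(-\S_n^+))=\mathcal{A}^{\circ\circ}-\S_n^+$ (since $0\in\mathcal{A}^{\circ\circ}$ and $-\S_n^+$ is a convex cone), one obtains $(\mathcal{A}^\circ\cap\S_n^+)^\circ=\overline{\mathcal{A}^{\circ\circ}-\S_n^+}$ and likewise for $\mathcal{B}$; polarity reverses inclusions between closed convex sets containing the origin, so $(c)$ turns into $(b)$ — read, as usual, with the two sets in $(b)$ replaced by their closures, the Minkowski difference $\mathcal{A}^{\circ\circ}-\S_n^+$ not being closed in general. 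For $(b)\Rightarrow(a)$ I would use that for $M\succcurlyeq0$ one has $\tr MN\geqslant0$ whenever $N\succcurlyeq0$; hence for any $\mathcal{C}\subseteq\S_n^+$, $\sup_{N\in\mathcal{C}}\tr MN=\sup_{N\in\mathcal{C}^{\circ\circ}}\tr MN=\sup_{N\in\mathcal{C}^{\circ\circ}-\S_n^+}\tr MN$ (passing to the hull, adjoining $0$, subtracting PSD matrices, and taking closures all leave the supremum unchanged when $M\succcurlyeq0$), and then monotonicity of the supremum under the inclusion $(b)$ applied with $\mathcal{C}=\mathcal{A},\mathcal{B}$ gives $(a)$.

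For the equivalence with $(d)$, write $\mathcal{A}=\{x_ix_i^\top:i\in I\}$, so that $\sup_{N\in\mathcal{A}}x^\top Nx=\sup_i(x_i^\top x)^2$ and $\sup_{N\in\mathcal{A}}\tr MN=\sup_i x_i^\top Mx_i$. Then $(a)\Rightarrow(d)$ is immediate by taking $M=xx^\top\succcurlyeq0$ in $(a)$, and uses nothing about ranks. The content is $(d)\Rightarrow(c)$, which I would prove directly: fix $Q\in\mathcal{B}^\circ\cap\S_n^+$, factor $Q=RR^\top$, and set $C_N:=R^\top NR\succcurlyeq0$ for $N\in\mathcal{B}$, so $\tr C_N=\tr NQ=\langle N,Q\rangle\leqslant1$. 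Applying $(d)$ to the transformed direction $y=Rz$ gives $\sup_i(x_i^\top Rz)^2\leqslant\sup_{N\in\mathcal{B}}z^\top C_Nz$ for all $z$; fixing $i$, specializing to $z=R^\top x_i$ (so $y=Qx_i$), and keeping only the $i$-th term on the left yields
$$\big(x_i^\top Qx_i\big)^2\ \leqslant\ \sup_{N\in\mathcal{B}}(R^\top x_i)^\top C_N(R^\top x_i)\ \leqslant\ \Big(\sup_{N\in\mathcal{B}}\lambda_{\max}(C_N)\Big)\|R^\top x_i\|_2^2\ \leqslant\ \Big(\sup_{N\in\mathcal{B}}\tr C_N\Big)\|R^\top x_i\|_2^2\ \leqslant\ \|R^\top x_i\|_2^2\ =\ x_i^\top Qx_i ,$$
so $x_i^\top Qx_i\leqslant1$ for every $i$, i.e.\ $Q\in\mathcal{A}^\circ$; this gives $(c)$, and the cycle above then yields $(d)\Rightarrow(a)$.

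The delicate step is precisely $(d)\Rightarrow(c)$. The naive route — decomposing a general $M\succcurlyeq0$ into rank-one pieces and summing the rank-one inequalities provided by $(d)$ — does \emph{not} work, because $M\mapsto\sup_{N\in\mathcal{B}}\tr MN$ is subadditive, not superadditive. The argument sketched above sidesteps this by choosing the test directions $y=Rz$ adapted to the particular $Q\in\mathcal{B}^\circ$ one wants to control, and then exploiting the normalization $\tr C_N\leqslant1$ together with $\lambda_{\max}\leqslant\tr$ on PSD matrices; this is exactly where the rank-one structure of $\mathcal{A}$ is genuinely used. (For general $\mathcal{A}$ the implication fails: with $\mathcal{A}=\{\idm\}$ and $\mathcal{B}=\{vv^\top:\|v\|_2=1\}$, condition $(d)$ holds with equality while $(a)$ would read $\tr M\leqslant\|M\|_{\rm op}$.) The remaining points are routine: the elementary support-function identities invoked in $(c)\Rightarrow(b)$ and $(b)\Rightarrow(a)$, and the degenerate cases where $\mathcal{A}$ or $\mathcal{B}$ is empty.
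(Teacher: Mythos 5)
Your proof is correct, and it matches the paper's strategy on every step except the last one. The equivalence of (a), (b), (c) is handled in the paper exactly as you do, via polarity of the PSD cone and the identity $(\mathcal{A}^\circ \cap \S_n^+)^\circ = {\rm hull}(\mathcal{A}^{\circ\circ} \cup (-\S_n^+)) = \mathcal{A}^{\circ\circ} - \S_n^+$ (the paper is even more cavalier than you about the missing closures). For the rank-one part, the paper proves $(d)\Rightarrow(a)$ directly rather than $(d)\Rightarrow(c)$: it writes an arbitrary $M = YY^\top$, uses $\tr(M uu^\top) = \sup_{\|v\|_2\leqslant 1}(u^\top Y v)^2$, applies (d) to each direction $x = Yv$, exchanges the two suprema to get $\sup_{N\in\mathcal{B}}\lambda_{\max}(Y^\top N Y)$, and concludes with $\lambda_{\max}\leqslant\tr$ on $\S_n^+$. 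Your route instead fixes one $Q = RR^\top\in\mathcal{B}^\circ\cap\S_n^+$, tests (d) only along the single direction $z = R^\top x_i$ per atom, and closes with the self-bounding inequality $(x_i^\top Q x_i)^2 \leqslant x_i^\top Q x_i$. The two arguments use the same essential ingredients (factoring the PSD test matrix, probing (d) along directions in the range of that factor, and $\lambda_{\max}\leqslant\tr$), so the difference is cosmetic rather than structural; if anything, your version makes it slightly more transparent where the normalization $\tr C_N\leqslant 1$ coming from $Q\in\mathcal{B}^\circ$ is used, and your counterexample $\mathcal{A}=\{\idm\}$, $\mathcal{B}=\{vv^\top:\|v\|_2=1\}$ is a worthwhile addition showing the rank-one hypothesis cannot be dropped.
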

 \begin{proof}
 (a) is equivalent to (c) by simple properties of gauge functions. (b) is equivalent to (c) because of polar calculus and $(\S_n^+)^\circ = - \S_n^+$, which imply
 $(\mathcal{A} ^\circ \cap \S_n^+)^\circ = {\rm hull}( \mathcal{A} ^{\circ \circ} \cup ( - \S_n^+) ) = 
  \mathcal{A} ^{\circ \circ}   - \S_n^+$.
  (a) trivially implies (d). We only need to show (d) implies (a). Assume (d) is true and let $M = YY^\top \in \S_n^+$, we have
 $$
 \sup_{ N \in \mathcal{A} } \tr MN
 = \sup_{ uu^\top \in \mathcal{A} } \ \sup_{\|v\|_2 \leqslant 1} (u^\top Y v )^2
\leqslant  \sup_{\|v\|_2 \leqslant 1} \ \sup_{ M \in \mathcal{B} }   \tr Y vv^\top Y^\top M
\leqslant  \sup_{ M \in \mathcal{B} }   \tr  Y^\top M Y, $$
hence the result.
 \end{proof}

\vspace*{.2500cm}

Given the representation of $\widetilde{\Theta}$ in Prop.~\ref{prop:tilde},
 a key consequence of the previous lemma is that in the representation of $\widetilde{\Theta}(X)$
 and    $\widetilde{\Theta}^\circ(Y)$  in \mysec{decomposition}, we may replace $\P_{\U}^{\circ \circ}$ by any set $\C_\U$ such that 
 $$
 \forall M \in \S_n^+, \sup_{u \in \U} u^\top M u  = \sup_{ N \in \mathcal{C}_\U } \tr MN.
 $$
 This allows us to treat the case where $\U$ is the unit $\ell_1$-ball, for which
 $\forall M \in \S_n^+$, $\max_{u \in \U} u^\top M u = \| \diag(M) \|_\infty = \| M \|_\infty$. We may thus take
$\C_\U =   \{ M \in \S_n^+, \ \| M\|_1 \leqslant 1\}$. Note however that it is  not true that $\P_\U^{\circ\circ} = \{ M \in \S_n^+, \ \| M\|_1 \leqslant 1\}$ (counter-examples may be found in~\cite{pcacounter}).

For all other cases, we need computable  convex approximations $\C_\U$ of $\P_\U^{\circ\circ}$ such that for
 $$
 \forall M \in \S_n^+, \sup_{u \in \U} u^\top M u  \leqslant \sup_{ N \in \mathcal{C}_\U } \tr MN
 $$
 (if we impose the property above for all symmetric matrices $M$, it is equivalent to $\P_\U^{\circ\circ} \subset \C_\U$).
 Given Lemma~\ref{lemma:P}, this is equivalent to
 \BEQ
 \label{eq:CU}
 \forall y \in \rb^n, \ \max \{ \gamma_\U^\circ(y),\gamma_\U^\circ(-y)\}^2 = \sup_{ u\in \U} (u^\top y)^2 \leqslant 
  \sup_{ N \in \mathcal{C}_\U } y^\top N y = \gamma_{\C_\U}^\circ(yy^\top),
 \EEQ
 that is, we need an \emph{upper-bound} on $\max \{ \gamma_\U^\circ(y),\gamma_\U^\circ(-y)\}^2$, which is convex in $yy^\top$. Note that given two sets $\C_\U$ that satisfy \eq{CU}, their intersection also does and the relaxation is then always tighter. We can therefore use several properties of the set~$\U$, that may then be combined together:

\begin{list}{\labelitemi}{\leftmargin=1.1em}
   \addtolength{\itemsep}{-.0\baselineskip}

\item[--] \emph{Representation of $\gamma_\U$ as minima of quadratic functions} (\mysec{L}): 
this corresponds to the case where $\gamma_\U^\circ$ is represented as a maximum of quadratic functions, i.e., \eq{CU} is an equality. In this situation, the set $\C_\U$ is used directly. For example, when $\U$ it the unit $\ell_p$-ball, we may choose $\C_\U = \{ M \in \S_n^+, \ \| M \|_p \leqslant 1\}$.

\item[--] \emph{Representation of $\gamma_\U$ as maxima of quadratic functions} (\mysec{K}): as shown in \mysec{L}, if
$\gamma_{\U \cap (-\U)}(x)^2 = \sup_{M \in \K_\U} x^\top M x$, then \eq{CU} is satisfied for $\C_\U = \K_\U^\circ \cap \S_n^+$.
Note that reduced representations may also be useful. For example, when $\gamma_\U^\circ$ has a diagonal representation outlined in \mysec{diag}, we may choose $\K_\U$ to be of the form $\{ \Diag(\eta), \ \eta \in \H_\U \}$, then $\K_\U^\circ \cap \S_n^+ = \{ M \in \S_n^+, \diag(M) \in \H_\U^\circ\}$ may be considered as a set $\C_\U$. For example, when $\U$ is the $\ell_p$-norm unit ball, for $p\geqslant 2$, we obtain $\C_\U = \{ M \in \S_n^+, \ \| \diag(M)\|_{p/2} \leqslant 1 \}$.

\item[--] \emph{Positivity}: if $\U \subset \rb_+^n$, then $\P_\U^{\circ \circ} \subset \{ M \in \S_n^+, \ M \geqslant 0\}
$.

\item[--] \emph{Linear inequalities}: any matrix $M$ such that $\max_{ \gamma_\U(u) \leqslant 1 } u^\top M u$ may be computed or efficiently upper-bounded by $h$ adds another constraint of the form $\tr M  U \leqslant h$. Typically, $M = \idm$ is a good candidate, as this defines the equivalence between the gauge function $\gamma_\U$ and $\| \cdot \|_2$.

\end{list}

In Table~\ref{tab:U}, we present relaxations for the examples we consider in this paper.
For all relaxations we consider (and also with positivity constraints), we have: $\C_\U \cap \{ M, \ {\rm rank}(M)=1\} = \P_\U$. This implies that for all
$u \in {\rm dom}(\gamma_\U)$, $\gamma_{\P_\U}(uu^\top) = \gamma_{\C_\U}(uu^\top) = \min \{ \gamma_\U(u),\gamma_\U(-u) \}^2$.
Moreover,  for all relaxations in  Table~\ref{tab:U} (without positivity constraints), we have equality in \eq{CU}, a property that will be useful when deriving performance guarantees in the next sections~(Prop.~\ref{prop:nd}).

\subsection{Performance guarantees}
\label{sec:guarantees}
Our computable relaxations, will be based on replacing the convex sets $\P_\U^{\circ \circ}$ and $\P_\V^{\circ \circ}$ by  surrogates $\C_\U \subset \S_n^+$ and $\C_\V \subset \S_d^+$ such that $\C_\U - \S_n^+ \subset \P_\U - \S_n^+$ and $\C_\V - \S_n^+ \subset \P_\V - \S_n^+$ (see Lemma~\ref{lemma:P}). This leads to the relaxations, for $X,Y \in \rb^{n \times d}$:
\BEA
 \Theta (X) & \geqslant &  \inf_{M \in \S_n^+, \ N \in \S_d^+}   \frac{1}{2}\gamma_{ \C_\U} (M) + \frac{1}{2} \gamma_{ \C_\V} (N)  \mbox{ s.t. } 
  \Big(\! \begin{array}{cc} M & X \\ X^\top & N \end{array}\! \Big) \succcurlyeq 0 \\
  & = & \sup_{ Q \in \C_\U^\circ \cap \S_n^+, \ S \in \C_\V^\circ \cap \S_d^+ } \| Q^{1/2} X S^{1/2}\|_\ast, \\
\label{eq:relaxTT}
\Theta^\circ (Y) & \leqslant &    \sup_{ M \in \C_\U , \ N \in \C_\V  } \| M^{1/2} Y N^{1/2}\|_\ast,
\EEA
which specializes to  the following expressions when $\V =\{ v \in \rb^d, \  \| v\|_2 \leqslant 1\}$:
\BEA
\Theta (X) & \geqslant &  \inf_{M \in \S_n^+ }   \ \frac{1}{2}\gamma_{ \C_\U} (M) + \frac{1}{2} \tr X^\top M^{-1} X \\
  & = & \sup_{ Q \in \C_\U^\circ  \cap \S_n^+} \| Q^{1/2} X \|_\ast, \\
\label{eq:relax2}
\Theta^\circ (Y) & \leqslant &    \sup_{ M \in \C_\U  } \sqrt{\tr  Y^\top M Y}.
\EEA
Because the bounds are also polar to each other (from Prop.~\ref{prop:tilde}), if we have a performance guarantee for $\Theta^\circ$, i.e.,
\BEQ
\label{eq:rhotilde}
 \forall Y \in \rb^{n \times d}, \ \Theta^\circ (Y)  \leqslant \sup_{ M \in \C_\U , \ N \in \C_\V  } \| M^{1/2} Y N^{1/2}\|_\ast \leqslant \kappa  \Theta^\circ (Y),
 \EEQ
for some $\kappa \geqslant 1$, 
we immediately get
\BEQ
\label{eq:rho}
 \forall X \in \rb^{n \times d}, \ \Theta (X)  \geqslant \sup_{ Q \in \C_\U^\circ \cap \S_n^+ , \ S \in \C_\V^\circ \cap \S_d^+  } \| Q^{1/2} X S^{1/2}\|_\ast \geqslant \frac{1}{\kappa}  \Theta (X).
 \EEQ

We now consider two situations, where we do have such performance guarantees like \eq{rhotilde} and \eq{rho}. The first situation corresponds to constraints  on only diagonal elements of matrices (such as for $\U$ and $\V$ equal to $\ell_p$-norm balls for $p \geqslant 2$), and leads to dimension-independent guarantees; it was derived by~\cite{nesterov1998semidefinite}:
\begin{proposition}
\label{prop:diag2}
Assume $C_\U = \{ M \in \S_n^+ , \diag(M) \in \H_\U \}$ and $C_\V = \{ M \in \S_d^+ , \diag(M) \in \H_\V \}$ for some convex sets $\H_U \subset \rb^n_+$ and $\H_\V \subset \rb^d_+$, and that $\C_\U \cap \{ M, \ {\rm rank}(M)=1\} = \P_\U$, $\C_\V \cap \{ M, \ {\rm rank}(M)=1\} = \P_\V$.
Then \eq{rhotilde} and \eq{rho} are valid for $\kappa =  1 / ( 2 \sqrt{3}/\pi - 2 /3) < 2.3 $. If moreover, $\V$ is the unit $\ell_2$-ball, they are also valid for $\kappa = \displaystyle \sqrt{ {\pi}/{2}} < 1.3$.
\end{proposition}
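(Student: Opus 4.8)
The plan is to follow Nesterov's randomized rounding of the semidefinite relaxation, and to prove only the upper bound \eqref{eq:rhotilde} on $\Theta^\circ$: the matching lower bound \eqref{eq:rho} on $\Theta$ then comes for free, since the two relaxations are polar to one another (the implication displayed right after \eqref{eq:rhotilde}, itself a consequence of Prop.~\ref{prop:tilde}). Fix $Y\in\rb^{n\times d}$.

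\emph{Setup.} Write $R(Y)=\sup_{M\in\C_\U,N\in\C_\V}\|M^{1/2}YN^{1/2}\|_\ast$; by the SDP form of Prop.~\ref{prop:tilde} (valid for any convex sets of positive semidefinite matrices containing $0$), $R(Y)$ is the optimal value of $\max\tr(Z^\top Y)$ over $Z\in\rb^{n\times d}$, $M\in\C_\U$, $N\in\C_\V$ with $\Big(\!\begin{array}{cc} M & Z\\ Z^\top & N\end{array}\!\Big)\succcurlyeq 0$. I would fix an optimal triple $(M,Z,N)$ and factorize $\Big(\!\begin{array}{cc} M & Z\\ Z^\top & N\end{array}\!\Big)=\Phi^\top\Phi$; this produces vectors $\phi_1,\dots,\phi_n$, $\psi_1,\dots,\psi_d$ with $\|\phi_i\|_2^2=M_{ii}$, $\|\psi_j\|_2^2=N_{jj}$, $\phi_i^\top\psi_j=Z_{ij}$. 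The hypotheses $\C_\U=\{M\succcurlyeq 0:\diag(M)\in\H_\U\}$ and $\C_\U\cap\{\mathrm{rank}=1\}=\P_\U$ say exactly that $\U=\{u:u\circ u\in\H_\U\}$ (and similarly for $\V$), which is what makes the rounding below feasible.

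\emph{Rounding.} Draw a standard Gaussian $g$ in the ambient space and set $\hat u_i=\|\phi_i\|\,\mathrm{sign}(\phi_i^\top g)$, $\hat v_j=\|\psi_j\|\,\mathrm{sign}(\psi_j^\top g)$. Then $\hat u\circ\hat u=\diag(M)\in\H_\U$ and $\hat v\circ\hat v=\diag(N)\in\H_\V$, so $(\hat u,\hat v)\in\U\times\V$ almost surely and $\mathbb E[\hat u^\top Y\hat v]\leqslant\Theta^\circ(Y)$. By the Grothendieck sign identity, $\mathbb E[\hat u^\top Y\hat v]=\frac2\pi\sum_{i,j}Y_{ij}\|\phi_i\|\|\psi_j\|\arcsin\rho_{ij}$ with $\rho_{ij}=Z_{ij}/(\|\phi_i\|\|\psi_j\|)$. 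Splitting $\arcsin\rho_{ij}=\rho_{ij}+(\arcsin\rho_{ij}-\rho_{ij})$, and using that $\arcsin t-t$ has a power series with nonnegative coefficients while the matrix of all the pairwise correlations of the $\phi_i/\|\phi_i\|$ and $\psi_j/\|\psi_j\|$ is positive semidefinite, the correction matrix $\widehat Z$ with entries $\|\phi_i\|\|\psi_j\|(\arcsin\rho_{ij}-\rho_{ij})$ is the off-diagonal block of a positive semidefinite matrix whose $\U$-- and $\V$--diagonal blocks have diagonals $(\frac\pi2-1)\diag(M)$ and $(\frac\pi2-1)\diag(N)$; dividing through by $\frac\pi2-1$ produces a feasible point of the SDP defining $R(Y)$, and flipping the sign of its $\V$-block keeps it feasible. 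Hence $|\tr(\widehat Z^\top Y)|\leqslant(\frac\pi2-1)R(Y)$, and since $\tr(Z^\top Y)=R(Y)$ we get $\mathbb E[\hat u^\top Y\hat v]=\frac2\pi\big(R(Y)+\tr(\widehat Z^\top Y)\big)\geqslant(\frac4\pi-1)R(Y)$, i.e.\ $R(Y)\leqslant\frac{\pi}{4-\pi}\Theta^\circ(Y)$ already with the plain sign rounding. To sharpen the constant to $\kappa=1/(2\sqrt 3/\pi-2/3)<2.3$ one replaces $\mathrm{sign}$ on the $\V$-side by an optimally chosen bounded odd rounding function and runs the same bookkeeping, grouping the higher-order (Hermite) terms of the resulting correction according to their signs. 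Determining this optimal rounding nonlinearity, and hence the exact constant, is the step I expect to be the main obstacle; everything else is routine.

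\emph{The $\ell_2$ case.} When $\V$ is the unit $\ell_2$-ball, $\Theta^\circ(Y)^2=\max_{u\in\U}u^\top YY^\top u$ and the relaxation of $\Theta^\circ(Y)^2$ is $\sup_{M\in\C_\U}\tr(Y^\top MY)$, so no $\V$-side rounding is needed. With $M\in\C_\U$ optimal, $M=\Phi^\top\Phi$, and $\hat u_i=\|\phi_i\|\,\mathrm{sign}(\phi_i^\top g)\in\U$, the correction matrix with entries $\|\phi_i\|\|\phi_j\|(\arcsin\rho_{ij}-\rho_{ij})$ is itself positive semidefinite (a nonnegative combination of Hadamard powers of a correlation matrix), so its pairing against $YY^\top\succcurlyeq 0$ is nonnegative; hence $\mathbb E[\hat u^\top YY^\top\hat u]\geqslant\frac2\pi\tr(Y^\top MY)$, and since the left side is $\leqslant\Theta^\circ(Y)^2$ we obtain $\tr(Y^\top MY)\leqslant\frac\pi2\Theta^\circ(Y)^2$; taking the supremum over $M$ and a square root gives $\kappa=\sqrt{\pi/2}$. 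Finally \eqref{eq:rho} follows from \eqref{eq:rhotilde} by the polarity noted at the outset, which finishes both assertions.
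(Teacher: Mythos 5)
Your proposal follows the same route as the paper: Gaussian sign rounding of the semidefinite solution combined with the $\arcsin$ identity and the Schur-product positivity of Hadamard powers of a correlation matrix, with \eq{rho} deduced from \eq{rhotilde} by polarity. For the case where $\V$ is the unit $\ell_2$-ball your argument is essentially identical to the paper's (the paper writes $u = D^{1/2}\sign(v)$ with $v \sim N(0, D^{-1/2}MD^{-1/2})$ and uses $\arcsin[C] \succcurlyeq C$ against the positive semidefinite matrix $D^{1/2}YY^\top D^{1/2}$, which is exactly your Hadamard-power observation), so that half is fine. For the general bilinear case the paper does not actually give a proof either --- it states that ``the general case is also based on sampling and normalizing Gaussian random variables'' and refers the reader to Nesterov's paper for the constant $1/(2\sqrt{3}/\pi - 2/3)$. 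You go further than the paper here: your Grothendieck-style bookkeeping, bounding the correction block $\widehat Z$ by exhibiting $(\pi/2-1)^{-1}\widehat Z$ as the off-diagonal block of a feasible point of the relaxation, is a complete and correct self-contained argument, but it only yields $\kappa = \pi/(4-\pi) \approx 3.66$, and you correctly identify that closing the gap to $2.3$ requires the optimized rounding nonlinearity of Nesterov's analysis, which you (like the paper) do not carry out. So relative to what the paper itself proves there is no real gap; just be aware that the stated constant for the general case is inherited from the cited reference in both treatments, and that your weaker constant is the one your write-up actually establishes.
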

\begin{proof}
We only consider the case where $\V$ is the unit disk, which gives the main idea of the proof of~\cite{nesterov1998semidefinite}. We consider a maximizer $M$ of \eq{relax2} and a vector $u =  D^{1/2} \sign(v)$ where $v$ is a random sample from a normal distribution with mean $0$ and covariance matrix $D^{-1/2} M D^{-1/2} $, with $D = \Diag(\diag(M))$.
We have, using standard arguments from~\cite{nesterov1998semidefinite,goemans1995improved}, $ uu^\top \in \C_\U \cap \{ {\rm rank}(M)=1\} = \P_\U$ and
\BEAS
\Theta^\circ(Y)^2 & \geqslant & \mathbb{E} \big[ u^\top YY^\top u \big]=
\mathbb{E} \tr D^{1/2} YY^\top  D^{1/2} \sign(v) \sign(v)^\top \\
& = & \frac{2}{\pi} \tr  D^{1/2} YY^\top D^{1/2} {\rm arcsin}  \big[D^{-1/2} M  D^{- 1/2} \big]\\
& \geqslant &  \frac{2}{\pi} \tr  D^{1/2} YY^\top  D^{1/2}    \big[
D^{-1/2} M  D^{- 1/2}\big] =  \frac{2}{\pi} \Tr Y^\top M Y  =  \frac{2}{\pi}   \sup_{ M \in \C_\U  }  {\tr  Y^\top M Y}.
\EEAS
The general case is also based on sampling and normalizing Gaussian random variables (see more details in~\cite{nesterov1998semidefinite}).
\end{proof}

\vspace*{.2500cm}

The other situation corresponds to situations where  the gauge functions $\gamma_\U$, $\gamma_\V$ and their polars have certain \emph{exact} quadratic representations (for example, for   $\U$ and $\V$ equal to $\ell_p$-norm balls for any $p \in [1,\infty]$).
\begin{proposition}
\label{prop:nd}
Assume that for all $w \in {\rm span}(\U)$ and $z \in {\rm span}(\V)$, we have the representations $\max\{ \gamma_\U^\circ(w),\gamma_\U^\circ(-w)\}^2 = \gamma_{\C_\U}^\circ(ww^\top)$ and $\max\{ \gamma_\V^\circ(z),\gamma_\V^\circ(-z)\}^2 = \gamma_{\C_\V}^\circ(zz^\top)$.
Then \eq{rhotilde} and \eq{rho} are valid for $\kappa = \min\{n,d\} $. If moreover, $\V$ is the unit disk, they are also valid for $\kappa = \sqrt{ \min\{n,d\}}$.
\end{proposition}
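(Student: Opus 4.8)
The plan is to establish the second inequality in \eq{rhotilde}, namely
\[
\sup_{M \in \C_\U,\ N \in \C_\V} \| M^{1/2} Y N^{1/2} \|_\ast \ \leqslant\ \kappa\, \Theta^\circ(Y),
\]
since the first inequality in \eq{rhotilde} is precisely the relaxation \eq{relaxTT} (valid here by Lemma~\ref{lemma:P}, the exact-representation hypotheses providing the containment needed for the relaxation to be a valid upper bound on $\Theta^\circ$), and \eq{rho} then follows from \eq{rhotilde} by taking polars, exactly as recorded above (the two bounds being polar to one another, cf.\ Prop.~\ref{prop:tilde}). Recall also $\Theta^\circ(Y) = \max_{u\in\U,\ v\in\V} u^\top Y v$, so $\Theta^\circ(Y)^2 = \sup_{u\in\U,\ v\in\V}(u^\top Y v)^2$.

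The crux is the operator-norm estimate: for every $M\in\C_\U$, $N\in\C_\V$ and $Y\in\rb^{n\times d}$,
\[
\| M^{1/2} Y N^{1/2} \|_{\rm op} \ \leqslant\ \Theta^\circ(Y).
\]
I would prove this by a short chain of Cauchy--Schwarz inequalities interleaved with the two quadratic representations. Writing $\| M^{1/2} Y N^{1/2} \|_{\rm op} = \sup_{\|a\|_2\leqslant1,\ \|b\|_2\leqslant1} a^\top M^{1/2} Y N^{1/2} b$ and fixing such $a,b$: Cauchy--Schwarz in $a$ gives $a^\top M^{1/2} Y N^{1/2} b \leqslant \| M^{1/2} Y N^{1/2} b\|_2$, and squaring, $\| M^{1/2} Y N^{1/2} b\|_2^2 = w^\top M w$ with $w := Y N^{1/2} b$. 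Since $M\succcurlyeq 0$ and $\C_\U\subset\S_n^+$, the hypothesis on $\C_\U$ yields $w^\top M w \leqslant \gamma_{\C_\U}^\circ(ww^\top) = \max\{\gamma_\U^\circ(w),\gamma_\U^\circ(-w)\}^2 = \sup_{u\in\U}(u^\top w)^2$, the last identity holding because $\U$ is compact with $0\in{\rm hull}(\U)$. For each fixed $u\in\U$, Cauchy--Schwarz in $b$ gives $(u^\top Y N^{1/2} b)^2 \leqslant \| N^{1/2} Y^\top u\|_2^2 = z^\top N z$ with $z := Y^\top u$, and the hypothesis on $\C_\V$ turns this into at most $\sup_{v\in\V}(v^\top z)^2$; taking the supremum over $u\in\U$ too, this is $\sup_{u\in\U,\ v\in\V}(u^\top Y v)^2 = \Theta^\circ(Y)^2$. (One may harmlessly assume $\U$ spans $\rb^n$ and $\V$ spans $\rb^d$, restricting to those subspaces otherwise, so that the representations apply to the vectors $w$ and $z$.) Since $M^{1/2} Y N^{1/2}$ is $n\times d$, it has rank at most $\min\{n,d\}$, so $\| M^{1/2} Y N^{1/2}\|_\ast \leqslant \min\{n,d\}\,\| M^{1/2} Y N^{1/2}\|_{\rm op} \leqslant \min\{n,d\}\,\Theta^\circ(Y)$; taking the supremum over $M\in\C_\U$, $N\in\C_\V$ gives \eq{rhotilde}, and hence \eq{rho}, with $\kappa = \min\{n,d\}$.

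For the sharper constant when $\V$ is the unit disk, I would work instead from the specialized form \eq{relax2}, so that it suffices to show $\sup_{M\in\C_\U}\tr Y^\top M Y \leqslant \min\{n,d\}\,\Theta^\circ(Y)^2$, where now $\Theta^\circ(Y)^2 = \sup_{u\in\U} u^\top YY^\top u$. Diagonalising $YY^\top = \sum_{i=1}^{\min\{n,d\}} \lambda_i w_i w_i^\top$ with $\lambda_i\geqslant 0$ and $\{w_i\}$ orthonormal (padding with zero eigenvalues), the hypothesis on $\C_\U$ gives $\sup_{M\in\C_\U}\tr M YY^\top \leqslant \sum_i \lambda_i \sup_{M\in\C_\U} w_i^\top M w_i = \sum_i \lambda_i \sup_{u\in\U}(u^\top w_i)^2$; and for each $i$, $\lambda_i \sup_{u\in\U}(u^\top w_i)^2 = \sup_{u\in\U}\lambda_i(u^\top w_i)^2 \leqslant \sup_{u\in\U}\sum_j \lambda_j (u^\top w_j)^2 = \sup_{u\in\U} u^\top YY^\top u = \Theta^\circ(Y)^2$, so the whole sum is at most $\min\{n,d\}\,\Theta^\circ(Y)^2$. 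Taking square roots and then the supremum over $M$ yields \eq{rhotilde}, and hence \eq{rho}, with $\kappa = \sqrt{\min\{n,d\}}$.

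The step I expect to need the most care is the ordering inside the Cauchy--Schwarz chain for the operator-norm bound: one must bound $w^\top M w$ by $\sup_{u\in\U}(u^\top w)^2$ via the $\C_\U$-representation \emph{before} fixing $u$ and applying Cauchy--Schwarz in $b$ together with the $\C_\V$-representation to $z = Y^\top u$; swapping those two moves breaks the argument. The only other subtlety is the domain question (the representations are hypothesised on ${\rm span}(\U)$ and ${\rm span}(\V)$), dispatched at the outset by the reduction to those subspaces; everything else is routine gauge/polar bookkeeping from the preceding sections.
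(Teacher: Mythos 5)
Your proof is correct, and it is essentially the deterministic argument that the paper itself records at the end of its proof (``Note that we may also obtain the same results without randomization''): the exact quadratic representations identify $\Theta^\circ(Y)$ with $\sup_{M \in \C_\U, N \in \C_\V} \| M^{1/2} Y N^{1/2}\|_{\rm op}$ (resp.\ $\sup_{M}\lambda_{\max}(Y^\top M Y)^{1/2}$ when $\V$ is the unit disk), while the relaxation computes the nuclear norm (resp.\ the trace), and the rank bound $\min\{n,d\}$ controls the ratio. Your Cauchy--Schwarz chain and the eigendecomposition of $YY^\top$ are just slightly more hands-on ways of verifying the operator-norm identity and the trace-versus-top-eigenvalue bound. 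The one genuine difference is that the paper's \emph{primary} proof is probabilistic: it samples $u$ (or $(u,v)$) from a Gaussian with covariance given by the optimal $M$ (or the optimal block matrix), and uses Jensen's inequality together with the concavity of $M \mapsto \inf_{N \in \C_\U}\tr M N^{-1}$ and the bound $G_\U(M) \leqslant {\rm rank}(M)$ to show the expected Rayleigh quotient beats $\tr Y^\top M Y / \min\{n,d\}$. That route yields the same constants but additionally produces \emph{candidate vectors} by randomized rounding, which the paper then reuses algorithmically (as initializations for the power method and in the conditional-gradient oracle); your deterministic version only certifies the approximation ratio, though the paper's own deterministic remark shows how to extract $(u,v)$ from the top eigenvector as well.
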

\begin{proof}
Our assumptions imply by duality that for all $w \in {\rm span}(\U)$,   $\gamma_{\U \cup (-\U)}(w)^2 = \inf_{ N \in \C_\U} w^\top N^{-1} w = G_\U(ww^\top)$, with $G_\U(M) = \inf_{ N \in \C_\U} M N^{-1} $, which is defined as, with $M = WW^\top \in \C_\U$, the minimum of $\tr B$ s.t.~$\Big(\! \begin{array}{cc} A  & W \\ W^\top & N  \end{array} \!\Big) \succcurlyeq 0$.
By taking $N = WW^\top$ and $A = \idm$, we obtain that $G_\U(M)  \leqslant {\rm rank}(M)$.

We first consider the case where $\V$ is the unit disk. We consider a solution $M$ of \eq{relax2} and a vector $u$ which is random sample from a normal distribution with mean $0$ and covariance matrix $ M$. By definition of $\C_\U$, $u$ must belong to ${\rm span}(\U)$. Let $r \leqslant \min\{n,d\}$ be the rank of $M$.
We have: $ \mathbb{E} [ u^\top  YY^\top u ] = \tr Y^\top M Y$. Moreover, by Jensen's inequality (and concavity of $G_\U$):
$$
\mathbb{E} \gamma_{{\U \cup (-\U)}}(u)^2 =    \mathbb{E} G_\U(uu^\top)  \leqslant  G_\U(  \mathbb{E} uu^\top)
\leqslant r \leqslant \min\{n,d\}.
$$
 Thus for $\alpha =  \frac{1}{\min\{n,d\}}   \sup_{ N \in \C_\U  }  {\tr  Y^\top N Y} =  \frac{1}{\min\{n,d\}}  \tr Y^\top M Y $, we have
$
\mathbb{E} \big[  u^\top YY^\top u - \alpha \gamma_{\U \cup (-\U)}(u)^2 \big] \geqslant  0.
$
Therefore, since $\gamma_\U$ is not uniformly equal to zero (because we have assumed $\U$ compact), there exists $u$ such that    $\gamma_\U(u) >0 $ and $u^\top YY^\top u - \alpha \gamma_\U(u)^2 \geqslant 0$, hence the claim.

For the general case, we sample $(u,v)$ from a normal distribution with mean zero and covariance matrix $
\Big( \! \begin{array}{cc} M  & Z \\ Z^\top & N  \end{array} \! \Big) $. We then have $\mathbb{E}[ u^\top Y v ]= \tr Z^\top Y$, and
$\mathbb{E} \gamma_{\U \cup (-\U)}(u)^2  \leqslant  \min\{n,d\}$, $\mathbb{E} \gamma_{\V \cup (-\V)}(v)^2  \leqslant  \min\{n,d\}$. We have,
with $\rho$ equal to the relaxed value obtained in \eq{relaxTT}:
$$
\mathbb{E} \bigg[
u^\top Y v - \frac{\rho}{ \min\{n,d\}} \big( \frac{1}{2} \gamma_{\U \cup (-\U)}(u)^2 + \frac{1}{2} \gamma_{\V \cup (-\V)}(v)^2 \big) 
\bigg] \geqslant 0.
$$
Thus, there exists $u,v$ such that $\gamma_\U(u) >0 $,  $\gamma_\V(v) >0 $ and $u^\top Y v - \frac{\rho}{ \min\{n,d\}} \big( \frac{1}{2} \gamma_\U(u)^2 + \frac{1}{2} \gamma_\V(v)^2 \big)$. This implies that  \eq{rhotilde} is valid for $\kappa =  \min\{n,d\}$.

Note that we may also obtain the same results without randomization. When $\gamma_\V = \| \cdot \|_2$, we have:
$$
\Theta^\circ(Y)^2 = \max_{ \|v \|_2  \leqslant 1 } \gamma_\U^\circ(Yv)^2 = 
\max_{ \|v \|_2  \leqslant 1 } \max_{ M \in \C_\U} v^\top Y^\top M Y v = \max_{M \in \C_\U} \lambda_{\max}(Y^\top M Y ).
$$
Thus by taking $v$ the largest eigenvector of $Y^\top M Y $ and $u \in \argmax_{u \in \U} u^\top Y v$, we have
$$
(u^\top Y v)^2 \geqslant v^\top Y^\top M Y  v  = \lambda_{\max}(Y^\top   M Y)  \geqslant 
\frac{1}{\min\{n,d\}}\tr Y^\top M Y.
$$
In the general case, we simply use
\BEAS
\Theta^\circ(Y)^2 & = & \max_{ v \in \V} \max_{M \in \C_\U} v^\top Y^\top M Y v  =
\max_{M \in \C_\U} \max_{N \in \C_\V} \  \max_{ v^\top N^{-1} v \leqslant 1 }  v^\top Y^\top M Y v  \\
& =  &
 \max_{ M \in \C_\U, \ N \in \C_\V} \lambda_{\max}(N^{1/2} Y^\top M Y N^{1/2})
\leqslant  \max_{ M \in \C_\U, \ N \in \C_\V} \| N^{1/2} Y^\top M ^{1/2} \|_{\ast}^2,
\EEAS
with an approximation ratio of $\min\{n,p\}$ between the operator norm and the nuclear norm.
\end{proof}

\vspace*{.2500cm}

The bound from the previous proposition is dimension-dependent. However, this is a non-trivial result. For example, with non-negativity constraints, it is not easy to have any guarantee (and our bounds do not). Indeed, For $\V$ being the $\ell_2$-norm and $\U$ the positive $\ell_1$-norm, then computing the polar corresponds to maximizing a quadratic form with positive constraints, which is notoriously difficult~\cite{berman2003completely}.

The two previous propositions lead to candidates for vectors $u$ and $v$, through  sampling from normal distributions with mean zero and covariance matrix equal to $M$ (or a normalized version thereof). Note that other possibilies are available, like taking a largest eigenvector  (such as done in the proof of Prop.~\ref{prop:nd}) and running the power method from \mysec{nonconvex} starting from any of the above candidates.

\paragraph{Approximation of $\gamma_{\P_\U}$.} Interestingly, computing $\gamma_{\P_\U}$, the gauge function considered in Prop.~\ref{prop:UU} which corresponds to factorizing positive semi-definite matrices with elements $uu^\top$, $u \in \U$, is harder than  computing the norm $\Theta$ for $\V$ equal to the unit $\ell_2$-ball. Indeed, the polar of $\gamma_{\P_\U}$ is defined by $\gamma_{\P_\U}^\circ(M) = \sup_{ u \in \U} (u^\top M u)_+ = 
\big( \sup_{ u \in \U}  u^\top M u \big)_+ $, and is finite for \emph{all} symmetric matrices, and in order to have an approximation ratio for $\gamma_{\P_\U}$ we need an approximation ratio for its polar (and not only for its polar restricted to $\S_n^+$). In the two types of guarantees above, the one based one diagonal variational representations may not be easily extended (note that this would notably imply that we would have a polynomial test of complete positivity, which is unlikely). However, if we have a variational formulation of the polar $\gamma_\U^\circ$, then the bound of Prop.~\ref{prop:nd} still applies (the two proof techniques may be easily extended). Note that we now impose that $\P_\U \subset \C_\U$.

\begin{proposition}
\label{prop:ndUU}
Assume that $\P_\U \subset \C_\U$, and that for all $w \in \rb^n$, we have
$\gamma_{\C_\U}^\circ(ww^\top) = \max\{ \gamma_\U^\circ(w),\gamma_\U^\circ(-w)\}^2 $. Then, for all $M \in \S_n^+$, $\gamma_{\P_\U}(M) \geqslant 
\gamma_{\C_\U}(M) \geqslant \frac{1}{n} \gamma_{\P_\U}(M)$.
\end{proposition}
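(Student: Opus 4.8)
The plan is to treat the two inequalities separately. The left one, $\gamma_{\P_\U}(M)\geqslant\gamma_{\C_\U}(M)$, is immediate: $\P_\U\subset\C_\U$ gives $\P_\U^{\circ\circ}\subset\C_\U^{\circ\circ}$, and a larger closed convex set containing the origin has a pointwise smaller gauge.

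For the right inequality I would pass to polars. By polar calculus, $\gamma_{\C_\U}(M)\geqslant\frac{1}{n}\gamma_{\P_\U}(M)$ for all $M$ is equivalent to $\C_\U^{\circ\circ}\subset n\,\P_\U^{\circ\circ}$, hence (taking polars) to $\gamma_{\C_\U}^\circ(N)\leqslant n\,\gamma_{\P_\U}^\circ(N)$ for all symmetric $N$. Now $\gamma_{\P_\U}^\circ(N)=\sup_{u\in\U}(u^\top Nu)_+$ (the computation of Prop.~\ref{prop:UU}, since $\P_\U^\circ=\{N:\forall u\in\U,\ u^\top Nu\leqslant1\}$), and likewise $\gamma_{\C_\U}^\circ(N)=\sup_{M\in\C_\U}(\tr MN)_+$; both are nonnegative because $0$ lies in the relevant hulls. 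So, fixing a symmetric $N$ with $s:=\sup_{M\in\C_\U}\tr MN>0$ (the case $\gamma_{\C_\U}^\circ(N)=0$ being trivial), it suffices to exhibit some $u\in\U$ with $u^\top Nu\geqslant s/n$.

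To produce such a $u$, I would mimic the proof of Prop.~\ref{prop:nd}. Let $M^\star\in\C_\U$ attain $\tr M^\star N=s$, and draw $u$ from a normal distribution with mean zero and covariance $M^\star$; then $u\in{\rm span}(\U)$ almost surely and $\mathbb{E}[u^\top Nu]=\tr M^\star N=s$. The hypothesis $\gamma_{\C_\U}^\circ(ww^\top)=\max\{\gamma_\U^\circ(w),\gamma_\U^\circ(-w)\}^2$ dualizes, exactly as in the proof of Prop.~\ref{prop:nd} (Fenchel duality, cf.~Prop.~\ref{prop:eqKL}), to $\gamma_{\U\cup(-\U)}(w)^2=\inf_{N'\in\C_\U}w^\top N'^{-1}w=:G_\U(ww^\top)$, where $G_\U$ is concave on $\S_n^+$ and $G_\U(M^\star)\leqslant{\rm rank}(M^\star)\leqslant n$ (take $N'=M^\star$). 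Hence, by Jensen, $\mathbb{E}\,\gamma_{\U\cup(-\U)}(u)^2=\mathbb{E}\,G_\U(uu^\top)\leqslant G_\U(M^\star)\leqslant n$, so $\mathbb{E}\big[u^\top Nu-\frac{s}{n}\gamma_{\U\cup(-\U)}(u)^2\big]\geqslant0$. Therefore some realization $u$ satisfies $u^\top Nu\geqslant\frac{s}{n}\gamma_\U(u)^2$ with $\gamma_\U(u)>0$, and after normalizing, $u/\gamma_\U(u)\in\U$ witnesses $\gamma_{\P_\U}^\circ(N)\geqslant s/n=\frac{1}{n}\gamma_{\C_\U}^\circ(N)$, as needed. (A deterministic variant works too: spectrally decompose $M^\star=\sum_{k=1}^{r}\sigma_k w_kw_k^\top$ with $r={\rm rank}(M^\star)\leqslant n$, pick an index with $\sigma_k w_k^\top N w_k\geqslant\frac{1}{r}\tr M^\star N$, and use $\gamma_{\U\cup(-\U)}(w_k)^2\leqslant w_k^\top (M^\star)^{-1}w_k=\sigma_k^{-1}$.) The extra hypothesis $\P_\U\subset\C_\U$ is exactly what makes the rank-one identity the \emph{exact} variational representation of $(\gamma_\U^\circ)^2$, so that $\gamma_{\C_\U}^\circ$ and $\gamma_{\P_\U}^\circ$ differ only through the semidefinite rank that is controlled above.

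I expect the delicate point to be the final normalization step, i.e.~passing from a vector with $u^\top Nu\geqslant\frac{s}{n}\gamma_{\U\cup(-\U)}(u)^2$ to an element of $\U$ proper: this is clean precisely when $\U$ is centrally symmetric, so that $\gamma_{\U\cup(-\U)}=\gamma_\U$ and $\U$ is its own $\gamma_\U$-unit ball — which is the case for all the instances where the proposition applies (the $\ell_p$-balls and the sparse-coding set, i.e.~the relaxations of Table~\ref{tab:U} \emph{without} positivity constraints, for which the hypothesis holds). Under genuine asymmetry (e.g.~positivity constraints) one only controls $\gamma_{\U\cup(-\U)}$, and there the statement should not be expected: $\gamma_{\P_\U}$ can be $+\infty$ (failure of complete positivity) while $\gamma_{\C_\U}$ is finite — consistent with the hypothesis itself failing for those relaxations.
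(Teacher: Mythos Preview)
Your proposal is correct and follows exactly the approach the paper intends: the paper does not give a detailed proof but simply states that both the randomized and the deterministic techniques of Prop.~\ref{prop:nd} ``may be easily extended,'' and you carry out both extensions explicitly via the polar inequality $\gamma_{\C_\U}^\circ(N)\leqslant n\,\gamma_{\P_\U}^\circ(N)$. Your discussion of the central-symmetry requirement for the final normalization step (so that $\gamma_{\U\cup(-\U)}=\gamma_\U$ and hence $\gamma_{\P_\U}(uu^\top)=\gamma_{\U\cup(-\U)}(u)^2$) is a useful caveat that the paper leaves implicit.
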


\vspace*{.2500cm}

\subsection{Random sampling}
\label{sec:random}
An interesting alternative to semidefinite relaxations is to consider random sampling of vectors $u$ and $v$ and approximate $\Theta^\circ(Y)$ by the maximum of $u^\top Y v$ over these samples. This approach was considered by~\cite{obozinski-joint} in the context of the nuclear norm. In this section, for simplicity, we consider the case where $\V$ is the unit $\ell_2$-ball and $\U^\circ$  is compact (i.e., ${\rm dom}(\gamma_\U) = \rb^n$). In this context, only vectors~$u$ need to be sampled. We provide a positive result that if   sufficiently many vectors $u$ are sampled, then we have provable approximations of $\Theta^\circ$ and hence $\Theta$ (though with weak dimension-dependent ratios), as well as a negative result showing that in order to obtain an arbitrarily tight bound, an exponential number of samples is needed.

\paragraph{Upper bounds on $\Theta^\circ$.}
The following proposition provides an approximation ratio with high probability.
\begin{proposition}
\label{prop:rand}
Assume $\U$ and $\U^\circ$ are compact, and $\V $ is the unit $\ell_2$-ball. 
Consider $r$ independent and identically distributed samples $w_i \in \rb^n$ sampled from a standard normal distribution, and $u_i = w_i / \gamma_\U(w_i)$. If $r \geqslant 4n$, then, with probability greater than $1-e^{-r/50}$, we have, with $\displaystyle \kappa = 4  {\sqrt{n}}\Big(  {\max_{\Theta(Z)\leqslant 1} \|Z\|_F }  \Big) \Big( \max_{ \|u \|_2 \leqslant 1} \gamma_\U(u)
\Big)
 $:
\BEQ
\label{eq:random} \forall Y \in \rb^{n \times d}, \ 
 \Theta^\circ(Y)^2
\geqslant 
\max_{ i \in \{1,\dots,r\}} u_i^\top YY^\top u_i \geqslant \frac{1}{\kappa^2} \Theta^\circ(Y)^2.
\EEQ
\end{proposition}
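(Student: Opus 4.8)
The plan is to dispatch the upper inequality $\Theta^\circ(Y)^2 \ge \max_{i} u_i^\top YY^\top u_i$ immediately, and to obtain the lower inequality from a single high-probability event on the sample that is valid simultaneously for all $Y$. For the upper bound, $\gamma_\U$ is positively homogeneous, so $\gamma_\U(u_i) = \gamma_\U(w_i)/\gamma_\U(w_i) = 1$ for every $i$ (the ratio is well defined almost surely, since $\U$ compact makes $\gamma_\U$ positive off the origin and $\U^\circ$ compact makes it finite everywhere); hence each $u_i$ lies in $\{u:\gamma_\U(u)\le1\}$ and \eq{PU} gives $u_i^\top YY^\top u_i \le \max_{\gamma_\U(u)\le1} u^\top YY^\top u = \Theta^\circ(Y)^2$.

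For the lower bound I would first reduce to a spectral statement about $\sum_{i=1}^r u_iu_i^\top$ that does not involve $Y$. Bounding the maximum by the average, $\max_i u_i^\top YY^\top u_i \ge \frac1r\sum_i u_i^\top YY^\top u_i = \frac1r \tr\big(Y^\top(\sum_i u_iu_i^\top)Y\big) \ge \frac1r\lambda_{\min}(\sum_i u_iu_i^\top)\|Y\|_F^2$. Writing $u_i = w_i/\gamma_\U(w_i)$ and $\theta_i = w_i/\|w_i\|_2$, and using $\gamma_\U(w_i)\le c\,\|w_i\|_2$ with $c = \max_{\|u\|_2\le1}\gamma_\U(u)$, one gets $u_iu_i^\top \succcurlyeq c^{-2}\theta_i\theta_i^\top$, hence $\lambda_{\min}(\sum_i u_iu_i^\top) \ge c^{-2}\lambda_{\min}(\sum_i\theta_i\theta_i^\top)$. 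Finally polarity between $\Theta$ and $\Theta^\circ$ gives $\|Y\|_F \ge \rho^{-1}\Theta^\circ(Y)$ with $\rho = \max_{\Theta(Z)\le1}\|Z\|_F$: indeed $\{Z:\Theta(Z)\le1\}\subseteq\{Z:\|Z\|_F\le\rho\}$, and taking polars reverses the inclusion, giving $\{Y:\|Y\|_F\le\rho^{-1}\}\subseteq\{Y:\Theta^\circ(Y)\le1\}$, i.e.\ $\Theta^\circ(Y)\le\rho\|Y\|_F$. Combining, for every $Y$, $\max_i u_i^\top YY^\top u_i \ge \frac{\lambda_{\min}(\sum_i\theta_i\theta_i^\top)}{r\,c^2\rho^2}\,\Theta^\circ(Y)^2$. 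It therefore suffices to show that when $r\ge4n$, with probability at least $1-e^{-r/50}$, $\lambda_{\min}(\sum_{i=1}^r\theta_i\theta_i^\top)\ge \frac{r}{16n}$ for $\theta_1,\dots,\theta_r$ i.i.d.\ uniform on the sphere $S^{n-1}$; this yields exactly $\max_i u_i^\top YY^\top u_i \ge (16n\,c^2\rho^2)^{-1}\Theta^\circ(Y)^2 = \kappa^{-2}\Theta^\circ(Y)^2$, uniformly in $Y$, since the only randomness enters through the $Y$-free event above (no union bound over $Y$ is needed, and $r\ge4n$ is used only here).

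The main obstacle is this spectral estimate. As $\mathbb{E}[\theta_i\theta_i^\top] = n^{-1}\idm$, what is asked is merely a constant-factor lower deviation of $\lambda_{\min}(\sum_i\theta_i\theta_i^\top)$ below its mean $r/n$; the delicate point is a failure probability exponentially small in $r$ with no dimensional prefactor — a plain matrix Chernoff bound carries a factor $n$ in front of the exponential, which is fatal when $r$ may equal $4n$. I would instead control $\sigma_{\min}$ of the $r\times n$ matrix $W$ with rows $\sqrt n\,\theta_i^\top$, whose rows are isotropic and sub-gaussian with a universal norm: either invoke the standard non-asymptotic bound $\sigma_{\min}(W)\ge\sqrt r-C\sqrt n-Ct$ with probability $\ge1-2e^{-ct^2}$ (then take $t$ a small multiple of $\sqrt r$ and use $r\ge4n$), or reprove it via an $\varepsilon$-net over $S^{n-1}$: for a fixed unit $x$ the variables $n(\theta_i^\top x)^2$ are i.i.d.\ with mean $1$ and sub-exponential, so Bernstein's inequality shows that $|\,r^{-1}\sum_{i=1}^r(n(\theta_i^\top x)^2-1)\,|$ exceeds $15/16$ with probability at most $2e^{-c'r}$, and a union bound over a net of cardinality $e^{O(n)}$, absorbed by $r\ge4n$, keeps the total failure probability below $e^{-r/50}$; on the good event $\lambda_{\min}(n\sum_i\theta_i\theta_i^\top)=\sigma_{\min}(W)^2\ge r/16$. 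Propagating the numerical constants through this chain is what fixes the values $4$, $50$ and the factor $4\sqrt n$ appearing in $\kappa$.
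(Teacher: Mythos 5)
Your upper bound and the deterministic reduction of the lower bound to a $Y$-free spectral event are both correct, and structurally they parallel the paper: the paper likewise reduces to exhibiting weights $\eta \in \rb_+^r$ with $1_r^\top \eta = 1$ and $\lambda_{\min}\big(\sum_i \eta_i u_i u_i^\top\big)$ bounded below, and uses the same two equivalence constants $c = \max_{\|u\|_2 \leqslant 1}\gamma_\U(u)$ and $\rho = \max_{\Theta(Z)\leqslant 1}\|Z\|_F$. The gap is in the probabilistic core. By taking uniform weights $\eta_i = 1/r$ you are led to the claim that $\lambda_{\min}\big(\sum_i \theta_i\theta_i^\top\big) \geqslant r/(16n)$ with probability $1-e^{-r/50}$ as soon as $r \geqslant 4n$, for $\theta_i$ uniform on the sphere, and you defer this to ``standard'' sub-gaussian machinery. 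But the generic bound $\sigma_{\min} \geqslant \sqrt{r} - C\sqrt{n} - Ct$ has an unspecified universal constant $C$, and at the boundary case $r = 4n$ it is vacuous unless $C < 2$; the sharp constant $C=1$ that makes $r \geqslant 4n$ work is a specifically Gaussian fact (Davidson--Szarek), not a sub-gaussian one. Likewise your Bernstein-plus-net variant must absorb a union bound over a net of cardinality roughly $e^{2.2 n}$ while still leaving $e^{-r/50}$, which at $r = 4n$ forces a per-point exponent far larger than Bernstein yields for a deviation of $15/16$ from a mean-one variable of variance about $2$. So the specific constants $4$, $16$ and $50$ are not obtained by your route as written; this step is a genuine missing piece, not mere bookkeeping.

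The paper sidesteps this by choosing the weights $\eta_i \propto \gamma_\U(w_i)^2$ instead of $1/r$: then $\sum_i \eta_i u_i u_i^\top \propto \sum_i w_i w_i^\top = WW^\top$ and the normalization satisfies $\sum_i \gamma_\U(w_i)^2 \leqslant c^2 \|W\|_F^2$, so the whole event becomes $\sigma_{\min}(W) \geqslant \|W\|_F/(4\sqrt{n})$ for the raw Gaussian matrix $W$. This is a single statement about the function $f(W) = \sigma_{\min}(W) - \|W\|_F/(4\sqrt{n})$, which is Lipschitz with constant at most $5/4$, whose expectation is at least $\sqrt{r} - \sqrt{n} - \sqrt{r}/4 \geqslant \sqrt{r}/4$ when $r \geqslant 4n$ by the Davidson--Szarek bound, and Gaussian concentration then gives exactly $\exp\big(-(\sqrt{r}/4)^2/(2(5/4)^2)\big) = e^{-r/50}$. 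If you want to keep your normalized-direction formulation you would have to either re-derive a sharp-constant bound for spherical rows or weaken the constants in the statement; otherwise, switching to the paper's weighting closes the gap with no other change to your argument.
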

\begin{proof}
A sufficient condition for \eq{random} is that 
$\forall Y \in \rb^{n \times d}, 
\max_{ i \in \{1,\dots,r\}} u_i^\top YY^\top u_i \geqslant  \big( \frac{1}{\kappa^2}{\max_{\Theta(Z)\leqslant 1} \tr Z^\top Z }  \big) \tr Y^\top Y $. A sufficient condition is that
 there exists $\eta \in \rb^r_+$ such that $1_r^\top \eta = 1$ and 
$
\lambda_{\min}\Big( \sum_{i=1}^r \eta_i u_i u_i^\top \Big) \geqslant \big( \frac{1}{\kappa^2}{\max_{\Theta(Z)\leqslant 1} \tr Z^\top Z }  \big)
$. Thus, by choosing $\eta_i \propto \gamma_\U(w_i)^2$, a sufficient condition is, for $\rho = 4 \sqrt{n}$:
$$
\lambda_{\min} \Big( \sum_{i=1}^r w_i w_i^\top \Big) \geqslant
\big( \frac{1}{\kappa^2}{\max_{\Theta(Z)\leqslant 1} \tr Z^\top Z }  \big) \Big( \max_{ \|u \|_2 \leqslant 1} \gamma_\U(u)^2 
\Big)
\sum_{i=1}^r \|w_i\|_2^2 = \frac{1}{\rho^2} \sum_{i=1}^r \|w_i\|_2^2,
$$
i.e.,   with $W = [w_1,\dots,w_r] \in \rb^{n \times r}$, $ f(W) = \sqrt{ \lambda_{\min}( WW^\top) } -  \frac{1}{\rho} \|W\|_F$.
The function $f$ is Lipschitz-continuous, i.e., such that $f(W')-f(W) \leqslant ( 1 + \frac{1}{4} ) \| W - W'\|_F$. Moreover, using~\cite[Theorem II.13]{davidson2001local},
$\mathbb{E} f(W) \geqslant \sqrt{r} - \sqrt{n} - \frac{1}{\rho} \sqrt{ rn}
= \sqrt{r} \big( 1 - \sqrt{n/r} - \frac{\sqrt{n}}{\rho} ) \geqslant \sqrt{r}/4$,
if $r \geqslant 4 n$ and $\rho = 4 \sqrt{n}$. Thus, by concentration of Lipschitz-continuous functions of standard Gaussian variables, we have
$
\mathbb{P} \big(
f(W) \leqslant 0
\big) \leqslant \exp \big( -  \frac{(\sqrt{r}/4)^2}{2 (5/4)^2} \big) = \exp(-r/50)
$, hence the result.
\end{proof}

\vspace*{.2500cm}

Note that the result above is rather weak as on top of the scaling in $\sqrt{n}$, the equivalence constants between $\Theta$ and $\| \cdot \|_F$ and between $\gamma_\U$ and $\| \cdot \|_2$ may be large as well (though finite because $\U^\circ$ is compact).

\paragraph{Upper bounds on approximation performance.}
We now show that even in the simplest case, (${\rm rank}(Y)=1$ and both $\U$ and $\V$ equal to  unit $\ell_2$-ball), the approximation ratio has to be dependent on dimensions. Using the same notation as Prop.~\ref{prop:rand}, we have,  using Lemma~\ref{lemma:beta} in Appendix~\ref{app:beta}, for any $(u,v) \in \rb^n \times \rb^d$, and $Y = uv^\top$
$\mathbb{E} \big[ \max_{ i \in \{1,\dots,r\}} u_i^\top YY^\top u_i  \big]\leqslant    \frac{4 \log r + 16}{n} \Theta^\circ(Y)^2$.
Thus, using random sampling may not give good approximation ratios, even in the simplest case, unless the number $r$ of samples is exponential in $n$.

\section{Obtaining decompositions from convex relaxations}
\label{sec:algorithms}
\label{sec:algorithm}
We have presented above semidefinite relaxations of $\Theta$ and $\Theta^\circ$. Much of earlier work~\cite{nesterov1998semidefinite,tal} and the previous section has been dedicated to obtaining for a certain $Y \in \rb^{n \times d}$, pairs $(u,v) \in \U \times \V$ such that $u^\top Y v \approx \Theta^\circ(Y)$. In this section, we focus on obtaining \emph{explicit} decomposition of a certain $X \in \rb^{n \times d}$ as  $X = \sum_{m=1}^r u_m v_m^\top $ such that $\sum_{m=1}^r \gamma_\U(u_m) \gamma_\V(v_m)$ is equal or better than the value of the relaxation, and thus getting a good approximation of $\Theta(X)$. We consider three main approaches.

\subsection{Singular value decomposition}
The first possibility is to  follow \mysec{QAB} and use solutions $Q,S$ of  
$$\sup_{Q \in \C_\U^\circ \cap \S_n^+, \ S \in \C_\V^\circ \cap \S_d^+ } \| Q^{1/2} X S^{1/2}\|_\ast$$
and decompose $Q = AA^\top$ with $A$ full rank (i.e., $A^\top A$ invertible), and $S=BB^\top$ with $B$ full rank. Then  $U =  (AA^\top)^{-1}  A G \Diag(s)^{1/2}$ and
   $V =  (BB^\top)^{-1} B H \Diag(s)^{1/2}$, are such that $X = UV^\top$ and $\frac{1}{2} \gamma_{\C_\U}(UU^\top) + \frac{1}{2} \gamma_{\C_\V}(VV^\top)$ is minimal (and equal to the value of the relaxation). However, for any orthogonal matrix $R$, $(UR,VR)$ is also such a pair, and   it is not possible to obtain a decomposition of $X$ such that the gauge functions $\gamma_\U$ and $\gamma_\V$ of all columns of $U$ and $V$ are small.

\subsection{Conditional gradient algorithms}
We may also find decompositions by approximately solving the following convex optimization problem (which is a generalized basis pursuit~\cite{chen} problem):
\BEQ
\label{eq:prox}
\min_{Z \in \rb^{n \times d} } \frac{1}{2  } \| X - Z \|_F^2 + \lambda \Theta(Z),
\EEQ
for $\lambda$ small enough, using convex optimization techniques that only access $\Theta$ through  computing $\Theta^\circ(Y) = \sup_{(u,v) \in \U \times \V } u^\top Y v$, and the associated minimizers. This is exactly what generalized conditional gradient algorithms can do~\cite{SGCG,zaidcg,zhang2012accelerated}. However, we need an algorithm which is robust to obtaining only approximate maximizers $(u,v)$, with potentially \emph{multiplicative} approximation guarantees for the computation of $\Theta^\circ$.

We consider the following algorithm started from $Z_0=0$, which iterates the following recursion, for $\rho_t = 2/(t+1)$, $t \geqslant 1$:
\BEAS
& (a)  \ \ \ & (u_{t-1},v_{t-1}) \in \arg\max_{u \in \U, \ v \in \V}  u^\top( X - Z_{t-1}) v  \\
& (b)  \ \ \ & \alpha_t = \arg\min_{\alpha \geqslant 0} \frac{1}{2} \big\| X - (1-\rho_t) Z_{t-1} - \rho_t \alpha  u_{t-1} v_{t-1}^\top \big\|_F^2 + \rho_t \lambda \alpha \\
& (c) \ \ \ & Z_t = (1-\rho_t) Z_{t-1} - \rho_t \alpha_t   u_{t-1} v_{t-1}^\top.
\EEAS
In Appendix~\ref{app:A}, we show that if we can find only approximate maximizers $(u,v)$ with approximation ratio $\kappa \geqslant 1$, then, if $X_\lambda$ is the unique solution of \eq{prox}, then we have
\BEAS
 \frac{1}{2} \| Z_t - X_\lambda \|_F^2
 & \leqslant  & \frac{1}{2} \| X - Z_t \|_F^2 + \lambda \Theta(Z_t) - 
\frac{1}{2} \| X - X_\lambda\|_F^2 - \lambda \Theta(X_\lambda) \\
& \leqslant & 
   \frac{2}{(t+1)}  \max  \{ 4 , \kappa^2  \} \Theta(X)^2  \max_{u \in \U } \|u\|_2^2 \max_{v \in \V } \|v\|_2^2 + \lambda ( \kappa - 1 ) \Theta(X),
\EEAS
and $Z_t$ is a positive linear combination of matrices $ u_{s-1} v_{s-1}^\top $, $s \leqslant t$, with a sum of coefficients which is less than $\Theta(Z_t)$. The previous inequality implies that
$
\Theta(Z_t) \leqslant \kappa \Theta(X) + O ( 1/(\lambda t) ).
$
Thus, when $\lambda$ is small enough and $t$ is large enough, we obtain an approximation of $\Theta(X)$ with approximation ratio which converges to a value less than $\kappa$.

\paragraph{Approaching $X$ from finite combinations of rank-one factors.}
When $\lambda=0$ in the algorithm above, then, every $X \in \rb^{n \times d}$ may be approximated up to distance $\varepsilon$ with a positive linear combination of $O( 1/ \sqrt{\varepsilon})$ rank-one factors, even if maximizing $u^\top Y v$ may only be done approximately for all $Y \in \rb^{n \times d}$. Moreover, the sum of coefficients is bounded by $\kappa \Theta(X) + O( \varepsilon)$.

Finding such decomposition by greedily and iteratively adding factors has been studied thoroughly in signal processing~\cite{mallat1993matching} and statistics~\cite{barron2008approximation}. In particular, if we assume that $\Theta$ is a norm, with our set of assumptions, the matching pursuit algorithm of~\cite{mallat1993matching} may obtain an $\varepsilon$-approximation of $X$ with $O( \log \frac{1}{\varepsilon})$ rank-one factors, however, while the norm of the coefficients is bounded, it is not related to the decomposition gauge function $\Theta(X)$. In Appendix~\ref{app:linear}, we show how optimizing over the scalar $\rho$ in the algorithm above leads to a similar result, while the sum of coefficients converge to a value which is less than $\kappa \Theta(X) $.
 
\subsection{Simplicial methods}
Conditional gradient algorithms to solve \eq{prox} may be extended by simply replacing step (b), which is the minimization over a half-line, by the minimization with respect to the cone generated by the already obtained rank-one matrices, i.e., 
\BEQ
\label{eq:b}
\beta^t = \arg\min_{ \beta \in \rb^{t-1} } \frac{1}{2} \Big\|
X - \sum_{s=1}^t \beta_s u_{s-1} v_{s-1}^\top 
\Big\|_F^2 + \lambda \sum_{s=1}^t \beta_s.
\EEQ
This approach is sometimes referred to as fully corrective~\cite{jaggi}
and is an instance of a simplicial method (see, e.g.,~\cite{bertsekas2011unifying}); typically, it requires much fewer iterations while the cost of each iteration is higher. Note that when the algorithm stops (i.e., there is no further progress in reducing the cost function), then we have solved \eq{prox} up to $\lambda(\kappa - 1) \Omega(x_\ast)$. An inbetween alternative is to optimize only over $\alpha$ and $\rho$. Note that the bound derived above also applies to these two extensions, which typically converge much quicker (see  examples in \mysec{simulations}).

Note that when $(u,v)$ is obtained from randomized rounding, the algorithm is related to what is proposed by~\cite{obozinski-joint}, which uses \emph{non-adaptive} random sampling for $u_{s-1} v_{s-1}^\top$.
Moreover, the algorithm may be accelerated by only storing only vectors $u_1,\dots,u_{t-1}$, and replacing the subproblem in \eq{b} by
\BEQ
\label{eq:v}
(v_0^t,\dots,v_{t-1}^t) = \arg\min_{ (v_0,\dots,v_{t-1}) \in \rb^{d \times t} } \frac{1}{2} \Big\|
X - \sum_{s=1}^t u_{s-1} v_{s-1}^\top
\Big\|_F^2 + \lambda \sum_{s=1}^t \gamma_\V(v_{s-1}).
\EEQ

\section{Simulations}
\label{sec:experiments}
\label{sec:simulations}
 
 In this section, we provide illustrations of the convex relaxations presented in the paper. We consider $\V$ the unit $\ell_2$-ball and $\U  = \{0,1\}^d$.
     We use $\C_\U = \{ M \in \S_n, \ M \succcurlyeq \diag(M) \diag(M)^\top, \ M \geqslant 0 , \  \diag(M) \leqslant 1 \}$, for which we may find an approximation guarantee of $\sqrt{ \pi/2}$ as follows: for $M \succcurlyeq 0$, maximizing $x^\top M x$ with respect to $x \in \{0,1\}^n$ may be done by maximizing $y^\top \Big( \! \begin{array}{cc} M & M1_n \\ 1_n^\top M & 1 \end{array} \! \Big) y$ with respect to $y \in \{-1,1\}^{n+1}$ such that $y_{n+1}=1$, which can be done using the usual semi-definite relaxation~\cite{nesterov1998semidefinite,goemans1995improved}, with an approximation ratio of $\pi/2$, and thus the gauge function $\Theta$ and its polar may be computed with an approximation ratio of~$\sqrt{\pi/2}$.

     \paragraph{Computation of polar gauge function.} We first compared several approaches to estimating
     $\Theta^\circ(Y) = \max_{ u \in \U,  \ v\in \V} u^\top Y v = \max_{ u \in \{0,1\}^n} \| Y^\top u\|_2$, for $Y$ a random matrix with independent and identically distributed components from a normal distribution with mean zero and variance one. We consider several strategies: (a) random sampling of $u$,  then  running the power method to convergence,  (b) sampling from the solution of the relaxed semi-definite program, with and without running the power method, and (c) taking the non-randomized approach described in the proof of Prop.~\ref{prop:nd}. In \myfig{pm}, we can see that (a) the performance of the semidefinite-relaxation, even without the power method, is typically much better than the guarantee,  (b) that sampling from the relaxed solution and then running the power method outperforms random initiatializations, and (c) the non-randomized rounding based on eigenvectors has a less stable behavior and sometimes performs  better.
     
     In \myfig{pm}, we report averaged value of the randomized rounding procedures. If we take the best values over more than a thousand samples, the obtained values of $u^\top Y v$ of all three schemes happens to be very close, with a slight advantage to the initializations of the power methods from the convex relaxation.
     
     \begin{figure}
     
     \vspace*{.34cm}

     \begin{center}
     \includegraphics[scale=.45]{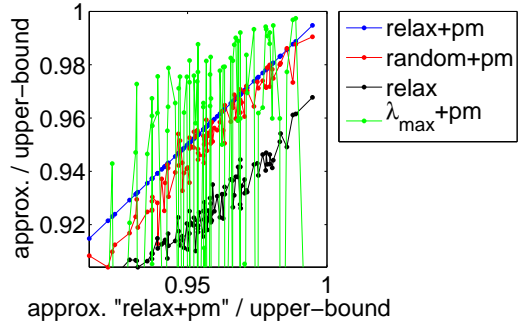}
     \end{center}
     
     \vspace*{-.34cm}
     
     \caption{Approximating the polar gauge function $\Theta^\circ(Y)$ for $Y$ standard random Gaussian matrix with $n=64$ and $d=32$, for 100 samples. For each sample, we compute the relaxed value (upper-bound) based on the semidefinite program and compute several approximations. The samples are ordered so that the performance of the randomized rounding followed by the power method (``relax+pm'') is increasing.}
     \label{fig:pm}
     \end{figure}

     \paragraph{Finding decomposition.}
     We aim to solve the problem in \eq{prox} for $\lambda=10^{-4}$, and consider four approaches: (a) using the semidefinite relaxation to obtain a lower-bound (with no explicit decomposition), (b) using the conditional gradient algorithm, (c) using a simplicial method (the regular version based on \eq{b} or the one adapted to storing only values of $u$ in \eq{v}), (d) using alternating optimization and (e) random sampling (\mysec{random}). In \myfig{prox}, we compare these algorithms in two situations, one where the relaxation is tight (left: $n=32$, $d=1$) and one where it is not tight (right: $n=32$, $d=16$). The simplicial algorithms are the faster to converge, with a clear advantage to the one that stores only the vectors $u$. The random selection procedure starts slow but eventually catches up, but never reaches the objective function of adaptive methods.
     
     Finally, in \myfig{dec}, we compare the result of using the simplicial method to obtain a decomposition to a simple alternating optimization method. We see that the convex relaxation outperforms significantly the non-convex approach.

     \begin{figure}
     
     \vspace*{.34cm}

     \begin{center}
     \includegraphics[scale=.45]{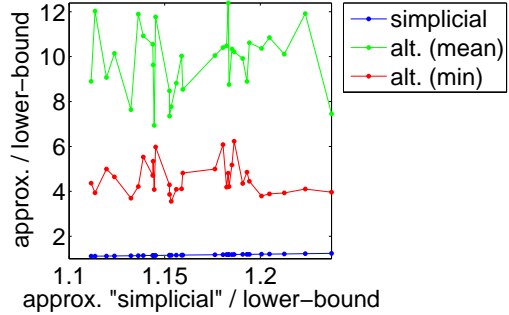}
     \end{center}
     
     \vspace*{-.34cm}
     
     \caption{Approximating the  gauge function $\Theta(X)$ for $X$ standard random Gaussian matrix with $n=32$ and $d=16$, for 32 samples. For each sample, we compute the relaxed value (lower-bound) based on the semidefinite program and compute   several approximations. }
     \label{fig:dec}
     \end{figure}

  \begin{figure}
  
  \vspace*{.34cm}

     \begin{center}
     \includegraphics[scale=.45]{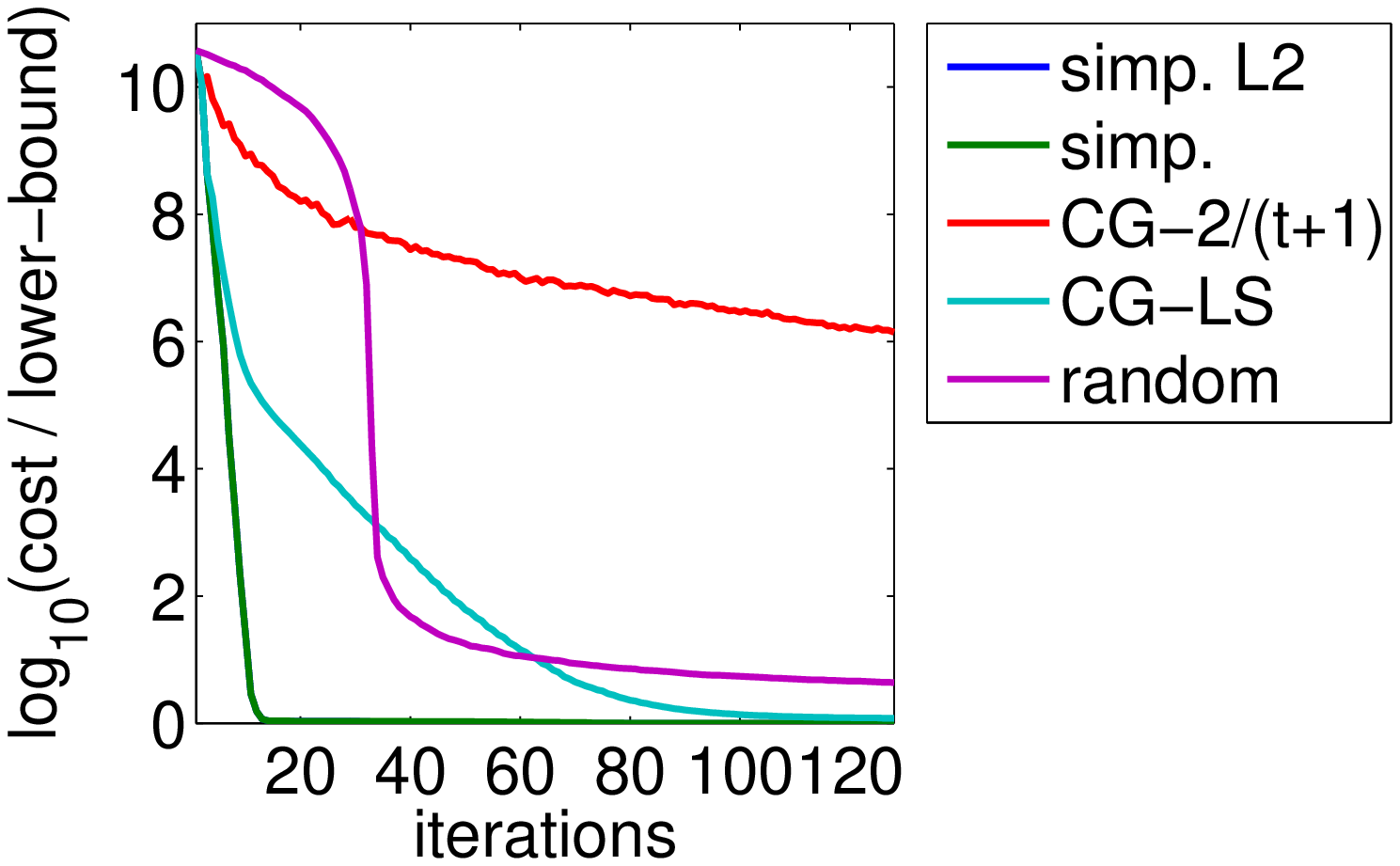} \hspace*{.25cm}
     \includegraphics[scale=.45]{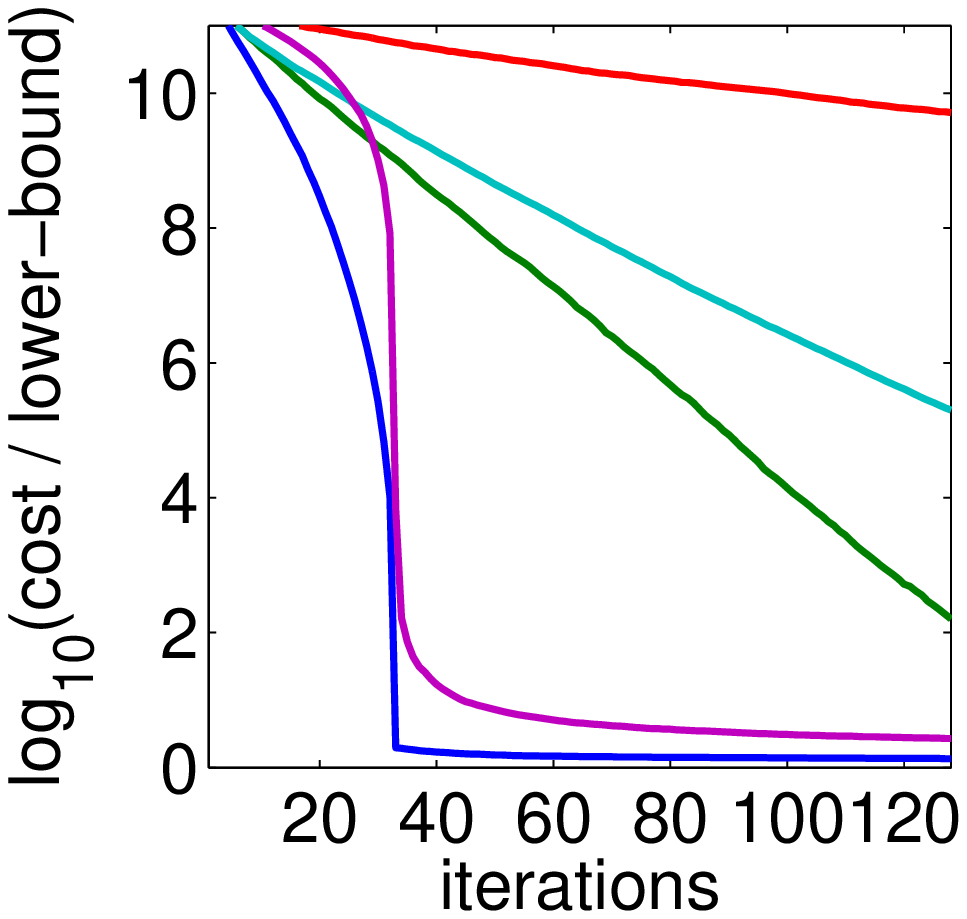}
     \end{center}
     
     \vspace*{-.34cm}
     
     \caption{Solving the proximal problem for two random matrices $Y$, with $n=32$ and $d=1$ (left) and $d=16$ (right).  Comparison of several algorithms. See text for details.}
     \label{fig:prox}
     \end{figure}

     \section{Conclusion}
     We have presented a general framework for structured matrix decompositions based on gauge functions and semi-definite programming. Emphasis was put on situations where the rank-one factors belong to potentially non-convex and non-centrally symmetric sets. A series of algorithms and relaxations have been presented for a variety of structures. 

Our limited experiments have focused on $\{0,1\}$-valued factors. It would be worth studying more precisely recovery guarantees and potentially more scalable algorithms for such cases, in particular for additional constraints such as cardinality or submodularity. Similarly, our general framework leads to altenatives to non-convex algorithms for non-negative matrix factorization, for which it may be possible to show tightness under certain assumptions similar to~\cite{arora2012computing}.

Moreover, it would be interesting to see if techniques designed to adaptively reduce the rank for nuclear-norm penalized problems~\cite{zhang2010analysis} may be extended to our setting as well in order to enforce a smaller number of factors. Finally, the non-convex approaches (power method and alternating minimization) tend to work well in practice, and finding sufficient conditions or random instances (see, e.g.,~\cite{deshpande2013finding} for such work for sparse principal component analysis) where they provably behave well would provide valuable additional insights into the problem of structured matrix factorizations.

\section*{Acknowledgments}
This  work was supported by the European Research Council (SIERRA project 239993). The author thanks Guillaume Obozinski and Alexandre d'Aspremont for fruitful discussions related to this work.

 \appendix
 
 \section{Generalized conditional gradient with approximate oracle}
 \label{app:A}
 We consider a convex function $f$ on $\rb^d$, with $L$-Lipschitz continuous gradients, a gauge function $\gamma_\C$ on $\rb^d$, with $\C$ a (potentially non convex) closed set that contains the origin, and the following optimization problem $\min_{x \in \rb^d}  f(x) + \lambda \gamma_\C(x)$. We assume that (a) the global minimum $x_\ast$ is attained, (b) that we know a bound on $\gamma_\C(x_\ast)$, i.e., $\gamma_\C(x_\ast) \leqslant \omega$ and (c) we may maximize $x^\top y$ with respect to $x \in \C$ approximately, i.e., obtain $\tilde{x} \in \C$   such that
 $\gamma_\C^\circ(y) \geqslant \tilde{x}^\top y \geqslant \frac{1}{\kappa} \gamma_\C^\circ(y)$, for a fixed $\kappa \geqslant 1$. We consider the following algorithm, started from any $x_0$ such that $\gamma_\C(x_0) \leqslant \kappa \omega$, for $t \geqslant 1$, and $\rho_t = 2 / (t+1)$:
 \BEAS
 (a) \ \ & &  \tilde{x}_{t-1}  \mbox{ approximate maximizer  of } -f'(x_{t-1})^\top x  \mbox{ with respect to } x \in \C,  \\
 (b) \ \ & & \alpha_t  \in   \arg \min_{ \alpha \in [ 0 , \kappa \omega] } f\big(
  ( 1 - \rho_t  ) x_{t-1} +  \rho_t \alpha \tilde{x}_{t-1}
 \big)
   + \lambda \rho_t  \alpha ,
  \\
  \mbox{ or } (b') \ \ & & \alpha_t  \in   \arg \min_{ \alpha \in [ 0 , \kappa \omega] } -   f'(x_{t-1})^\top \big(
  x_{t-1} -  \alpha \tilde{x}_{t-1}
 \big) + \frac{L \rho_t}{2}  \big\|
  x_{t-1} -  \alpha \tilde{x}_{t-1}
 \big\|_2^2 + 
    \lambda  \alpha ,
  \\
  (c) \ \ & & x_t = ( 1 - \rho_t  ) x_{t-1} +  \rho_t \alpha_t \tilde{x}_{t-1}.
 \EEAS
 The algorithm generates a sequence of vectors $\tilde{x}_{t-1} \in \C$ and a vector 
 $$x_{t} =   \sum_{u=1}^t \bigg[
 \prod_{s=u+1}^t ( 1 - \rho_s)
 \bigg] \rho_u \alpha_u \tilde{x}_{u-1} 
 =   \sum_{u=1}^t \frac{ 2 u  }{t(t+1)}  \alpha_u \tilde{x}_{u-1}.
 $$
 Note that since $\tilde{\alpha}_{t-1} \in [0, \kappa \omega]$ for all $t$, we always have $\gamma_\C(x_{t}) \leqslant \kappa \omega$.
 We now show that
 \BEAS
 f(x_{t}) + \lambda \gamma_\C(x_t) & \leqslant & f(x_{t}) + \lambda \sum_{u=1}^t \frac{ 2 u  }{t(t+1)}  \alpha_u \\
 & \leqslant &  f(x_\ast) + \lambda \gamma_\C(x_\ast) + \frac{2 L \omega^2 \kappa^2 \max_{u \in \U}\| u\|^2 }{t+1} + \lambda (\kappa-1) \gamma_\C(x_\ast) .
 \EEAS
 Let $g_t = f(x_{t}) + \lambda \sum_{u=1}^t \frac{ 2 u  }{t(t+1)} \alpha_u$.
 We have, for any $\alpha \in [0,\kappa \omega]$, and $ \tilde{x} \in {\rm hull}(\C)$:
 \BEAS
 \!\! g_t & = & f(x_{t}) + (1-\rho_t) \big[ g_{t-1} - f(x_{t-1}) \big] + \lambda \alpha_t \rho_t  \\
 &   =  & f\big(
  ( 1 - \rho_t  ) x_{t-1} + \rho_t \alpha_t \tilde{x}_{t-1}
 \big) + (1-\rho_t) \big[ g_{t-1} - f(x_{t-1}) \big] + \lambda \alpha_t \rho_t  \\
  &   \leqslant  & f\big(x_{t-1})  - \rho_t  f'(x_{t-1})^\top \big(
  x_{t-1} -  \alpha \tilde{x}_{t-1}
 \big) + \frac{L \rho_t^2}{2}  \big\|
  x_{t-1} -  \alpha \tilde{x}_{t-1}
 \big\|_2^2  + (1-\rho_t) \big[ g_{t-1} - f(x_{t-1}) \big] + \lambda \alpha \rho_t  \\
 & & \mbox{ by definition of $\alpha_t$}, \\
 &   \leqslant  & f\big(x_{t-1})  - \rho_t  f'(x_{t-1})^\top \big(
  x_{t-1} -  \alpha \tilde{x}_{t-1}
 \big) + \frac{L \rho_t^2}{2}  \kappa^2 \omega^2 \max_{u \in \U}\| u\|^2  + (1-\rho_t) \big[ g_{t-1} - f(x_{t-1}) \big] + \lambda \alpha  \rho_t  
\\
 &   \leqslant  & f\big(x_{t-1})  - \rho_t  f'(x_{t-1})^\top \big(
  x_{t-1} -  \frac{\alpha}{\kappa} \tilde{x}
 \big) + \frac{L \rho_t^2}{2}  \kappa^2 \omega^2 \max_{u \in \U}\| u\|^2  + (1-\rho_t) \big[ g_{t-1} - f(x_{t-1}) \big] + \lambda \alpha  \rho_t  
\\
& & \mbox{ using the approximate optimality of } \tilde{x}_{t-1}, \\
 &   \leqslant  & f\big(x_{t-1})  - \rho_t  f'(x_{t-1})^\top \big(
  x_{t-1} -  x_\ast
 \big) + \frac{L \rho_t^2}{2}  \kappa^2 \omega^2 \max_{u \in \U}\| u\|^2  + (1-\rho_t) \big[ g_{t-1} - f(x_{t-1}) \big] + \lambda \kappa \gamma_\C(x_\ast) \rho_t  
\\
& & \mbox{ using } \alpha = \kappa \gamma_\C(x^\ast) \mbox{ and } \tilde{x} = x_\ast / \gamma_\C(x_\ast) \in {\rm hull}(\C), \\
&   \leqslant  & f\big(x_{t-1})  - \rho_t  [ f(x_{t-1}) - f(x_\ast) ] + \frac{L \rho_t^2}{2}  \kappa^2 \omega^2 \max_{u \in \U}\| u\|^2  + (1-\rho_t) \big[ g_{t-1} - f(x_{t-1}) \big] + \lambda \kappa \gamma_\C(x_\ast) \rho_t  
\\
&   =  & (1 - \rho_t) g_{t-1} + \rho_t f(x_\ast)  + \frac{L \rho_t^2}{2}  \kappa^2 \omega^2 \max_{u \in \U}\| u\|^2    +\lambda  \kappa \gamma_\C(x_\ast) \rho_t   ,
\EEAS
leading to
\BEAS
g_t - f(x_\ast) - \lambda\gamma_\C(x_\ast)
& \leqslant & (1 - \rho_t)  \big[ g_{t-1} - f(x_\ast) - \lambda\gamma_\C(x_\ast)
\big] + \frac{L \rho_t^2}{2}  \kappa^2 \omega^2 \max_{u \in \U }\| u\|^2 + \lambda (\kappa-1) \gamma_\C(x_\ast) \rho_t .
 \EEAS
 This implies by recursion (see~\cite{SGCG} for details) that
 $$ g_t - f(x_\ast) - \lambda\gamma_\C(x_\ast)
\leqslant \frac{2L}{(t+1)}  \kappa^2 \omega^2 \max_{u \in \U }\| u\|^2 + \lambda (\kappa-1) \gamma_\C(x_\ast) .
$$

 If moreover, $f$ is $\mu$-strongly convex and the line search is performed exactly (and without a bound on $\alpha$), then we may show a different bound. Up to the oracle with multiplicative approximation guarantees, the algorithm is then the same as what is proposed by~\cite{zaidcg,zhang2012accelerated}, but our results use a slightly different set of assumptions.
 
 First, we show that $f(x_t)$ remains bounded. Indeed, we have $f(x_t) \leqslant f((1-\rho_t)x_{t-1}) \leqslant ( 1-\rho_t) f(x_{t-1}) + \rho_t f(0)$, which leads to $f(x_t) \leqslant f(0)$ for all $t \geqslant 1$. This implies that $f(0) \geqslant f(0) + x_t^\top f'(0) + \frac{\mu}{2} \| x_t\|_2^2 \geqslant - \|x_t\|_2 \| f'(0) \|_2+ \frac{\mu}{2} \| x_t\|_2^2   $, leading to $\| x_t \|_2 \leqslant 2 \| f'(0)\|_2/ \mu$.
 
We may then derive a different recursion, leading to 
 $$ g_t - f(x_\ast) - \lambda\gamma_\C(x_\ast)
\leqslant \frac{2L}{(t+1)}  \max \bigg\{ \frac{2 \| f'(0)\|_2}{\mu} , \kappa  \omega  \max_{u \in \C}\| u\| \bigg\}^2 + \lambda (\kappa-1) \gamma_\C(x_\ast) .
$$

When applied to  $f(x) = \frac{1}{2} \| x - y\|_2^2$, we obtain a decaying factor of
$$ \frac{2}{(t+1)}  \max  \{ 4 , \kappa^2  \} \gamma_\C(y)^2  \max_{u \in \C}\| u\|^2 .$$
In this case, our algorithm is strongly related to the relaxed greedy algorithm of~\cite{barron2008approximation} and our analysis provides an explicit link between these algorithms and basis pursuit~\cite{chen}.

 \subsection{Linearly convergent conditional gradient for strongly convex functions with inexact oracle}
 \label{app:linear}
 We now assume that $f$ is $\mu$-strongly convex and that $f$ has a global minimum attained at $x_\ast$ such that $\gamma_\C(x_\ast) \leqslant  \omega$, and that we have a $\kappa$-approximate oracle for maximizing linear functions on $\C$. We consider the algorithm:
  \BEAS
 (a) \ \ & &  \tilde{x}_{t-1}  \mbox{ approximate maximizer  of } -f'(x_{t-1})^\top x  \mbox{ with respect to } x \in \C,  \\
 (b) \ \ & & (\alpha_t,\rho_t)  \in   \arg \min_{ \alpha \in [ 0 , \kappa \omega + \kappa\varepsilon] , \rho \in [0,1] } f\big(
  ( 1 - \rho ) x_{t-1} +  \rho \alpha \tilde{x}_{t-1}
 \big)
    ,
  \\
  \mbox{ or } (b') \ \ & &  (\alpha_t,\rho_t)   \in   \arg \min_{ \alpha \in [ 0 , \kappa \omega  + \kappa\varepsilon]  , \rho \in [0,1] } -   f'(x_{t-1})^\top \big(
  x_{t-1} -  \alpha \tilde{x}_{t-1}
 \big) + \frac{L \rho}{2}  \big\|
  x_{t-1} -  \alpha \tilde{x}_{t-1}
 \big\|_2^2 
     ,
  \\
  (c) \ \ & & x_t = ( 1 - \rho_t  ) x_{t-1} +  \rho_t \alpha_t \tilde{x}_{t-1}.
 \EEAS
The algorithm generates a sequence of  vectors $\tilde{x}_{t-1} \in \C$ and a vector 
 $$x_{t} =   \sum_{u=1}^t \bigg[
 \prod_{s=u+1}^t ( 1 - \rho_s)
 \bigg] \rho_u \alpha_u \tilde{x}_{u-1} 
 $$ 
 such that $ \sum_{u=1}^t \bigg[
 \prod_{s=u+1}^t ( 1 - \rho_s)
 \bigg] \rho_u \alpha_u \leqslant \kappa \omega  + \kappa\varepsilon$. We have, for any $\alpha \in [0,\kappa \omega + \kappa \varepsilon]$, $\rho \in [0,1]$, and $ \tilde{x} \in {\rm hull}(\C)$:
\BEAS
 f(x_t)  
  &   \leqslant  & f (x_{t-1})  - \rho  f'(x_{t-1})^\top \big(
  x_{t-1} -  \alpha \tilde{x}_{t-1}
 \big) + \frac{L \rho^2}{2}  \big\|
  x_{t-1} -  \alpha \tilde{x}_{t-1}
 \big\|_2^2    \\
 & & \mbox{ by definition of $\alpha_t$ and $\rho_t$}, \\
 &   \leqslant  & f\big(x_{t-1})  - \rho  f'(x_{t-1})^\top \big(
  x_{t-1} -  \alpha \tilde{x}_{t-1}
 \big) + \frac{L \rho^2}{2} \kappa^2 (   \omega  + \varepsilon)^2 \max_{u \in \C}\| u\|^2   
\\
 &   \leqslant  & f\big(x_{t-1})  - \rho  f'(x_{t-1})^\top \big(
  x_{t-1} -  \frac{\alpha}{\kappa} \tilde{x}
 \big) + \frac{L \rho^2}{2}  \kappa^2 (    \omega  + \varepsilon)^2 \max_{u \in \C}\| u\|^2  
\\
& & \mbox{ using the approximate optimality of } \tilde{x}_{t-1}, \\
 &   \leqslant  & f\big(x_{t-1})  - \rho  f'(x_{t-1})^\top \big(
  x_{t-1} -  x_\ast + \frac{ \varepsilon}{\gamma_\C(x_{t-1} - x_\ast)}(x_{t-1}-x_\ast)
 \big) + \frac{L \rho^2}{2}  \kappa^2 (   \omega  + \varepsilon)^2 \max_{u \in \C}\| u\|^2   \\
& & \mbox{ using } \alpha = \kappa \gamma_\C(\tilde{x}) \mbox{ and } \tilde{x} \propto x_\ast - \frac{ \varepsilon}{\gamma_\C(x_{t-1} - x_\ast)}(x_{t-1}-x_\ast) ,\\
&   \leqslant  & f\big(x_{t-1})  - \rho  [ f(x_{t-1}) - f(x_\ast) ] 
\Big(
1 + \frac{\varepsilon}{\gamma_\C(x_{t-1} - x_\ast)}
\Big)+ \frac{L \rho^2}{2}  \kappa^2 \omega^2 \max_{u \in \C}\| u\|^2   .
\EEAS
Moreover, we have 
$\displaystyle  f (x_{t-1})  - f(x_\ast) \geqslant \frac{\mu}{2} \| x_{t-1} - x_\ast\|_2^2 \geqslant \frac{\mu}{2} \frac{ \gamma_\C(x_{t-1}-x_\ast)^2 }{ \max_{\|u\|_2=1} \gamma_\C(u)^2}$, leading to, with $\Delta_t = f (x_{t-1})  - f(x_\ast) $,
\BEAS
\Delta_t & \leqslant &  \Delta_{t-1} - \rho \bigg[ \Delta_{t-1} + \frac{  \varepsilon  \sqrt{\mu} \Delta_{t-1}^{1/2}} {\sqrt{2} \max_{\|u\|_2=1} \gamma_\C(u)}
\bigg] + \frac{L \rho^2}{2}  \kappa^2 \omega^2 \max_{u \in \C}\| u\|^2    .
\EEAS
If $ \displaystyle \bigg[  \Delta_{t-1}  + \frac{  \varepsilon  \sqrt{\mu} \Delta_{t-1}^{1/2}} {\sqrt{2} \max_{\|u\|_2=1} \gamma_\C(u)}
\bigg]  \frac{1}{ L \kappa^2 \omega^2 \max_{u \in \C}\| u\|^2} < 1$, we have a minimizer $\rho \in [0,1)$, and
\BEAS
\Delta_t & \leqslant & \Delta_t - \frac{1}{2} \bigg[  \frac{  \varepsilon  \sqrt{\mu} \Delta_{t-1}^{1/2}} {\sqrt{2} \max_{\|u\|_2=1} \gamma_\C(u)}
\bigg]^2 \frac{1}{ L \kappa^2 \omega^2 \max_{u \in \C}\| u\|^2} \\
& \leqslant & \Delta_{t-1} \bigg( 1 - \frac{   \varepsilon^2 \mu}{ 4 L \kappa^2 \omega^2} \frac{1}{
\max_{\|u\|_2=1} \gamma_\C(u)^2 \times  \max_{u \in \C}\| u\|^2} \bigg) .
\EEAS
Otherwise, we have
\BEAS
\Delta_t & \leqslant & \Delta_{t-1} - \frac{1}{2}  \bigg[ \Delta_{t-1} + \frac{  \varepsilon  \sqrt{\mu} \Delta_{t-1}^{1/2}} {\sqrt{2} \max_{\|u\|_2=1} \Omega(u)}
\bigg] 
\leqslant \frac{1}{2} \Delta_{t-1}
\EEAS

Thus, with $\displaystyle \tau = \min \bigg\{
\frac{1}{2}, \frac{   \varepsilon^2 }{4  \kappa^2 \omega^2} \frac{\mu}{L} \frac{1}{
\max_{\|u\|_2=1} \gamma_\C(u) \times  \max_{u \in \C}\| u\|^2}  \bigg\}
$, we have $\Delta_t \leqslant ( 1- \tau) \Delta_{t-1}$, and hence a linear convergence rate.

\section{Maximum of beta random variables}
\label{app:beta}
Given $u \in \rb^n$ such that $\|u\|_2=1$, our goal is to upper-bound $\displaystyle \mathbb{E} \big[ \max_{ i \in \{1,\dots,r\}} \frac{ (w_i^\top u)^2}{w_i^\top w_i}  \big]$, for $w_i$ sampled i.i.d~from a standard normal distribution. Using the representation of Beta random variables as ratios of independent Gamma variables, each $\frac{ (w_i^\top u)^2}{w_i^\top w_i}$ is Beta-distributed with parameters $(\frac{1}{2}, \frac{n-1}{2})$. The following lemma provides a bound on the expectation of maxima of independent Beta variables.

\begin{lemma}
\label{lemma:beta} Let $X_i$, $i=1,\dots,r$, be $r$ i.i.d.~Beta random variables with parameter $(\frac{1}{2},\frac{n-1}{2})$, $n \geqslant 2$.
Then $\displaystyle \mathbb{E} \big[ \max_{ i \in \{1,\dots,r\}}  X_i \big] \leqslant \frac{4 \log r + 16}{n}.$
\end{lemma}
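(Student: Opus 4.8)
The plan is to reduce the expected maximum to a tail bound for a single variable. Since $0 \leqslant X_i \leqslant 1$, one has $\mathbb{E}[\max_i X_i] = \int_0^1 \mathbb{P}(\max_i X_i > t)\, dt$, and the union bound gives $\mathbb{P}(\max_i X_i > t) \leqslant \min\{1, r\,\mathbb{P}(X_1 > t)\}$. Everything then hinges on a sufficiently sharp tail estimate for $X_1 \sim \mathrm{Beta}(1/2,(n-1)/2)$.

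The first step is to show $\mathbb{P}(X_1 > t) \leqslant (1-t)^{(n-1)/2} \leqslant e^{-(n-1)t/2}$ for $t \in (0,1)$. The cleanest route uses the standard representation $X_1 \stackrel{d}{=} g^2/(g^2+h)$ with $g \sim N(0,1)$ and $h \sim \chi^2_{n-1}$ independent (this is precisely $(w^\top u)^2/\|w\|_2^2$ decomposed along $u$ and the orthogonal complement of $u$). Then $\{X_1 > t\} = \{g^2 > \frac{t}{1-t}h\}$; conditioning on $h$ and using the Gaussian tail bound $\mathbb{P}(g^2 > s) = 2\bar\Phi(\sqrt{s}) \leqslant e^{-s/2}$ yields $\mathbb{P}(X_1 > t \mid h) \leqslant e^{-\frac{t}{2(1-t)} h}$; taking expectations and inserting the moment generating function $\mathbb{E}\,e^{-\lambda h} = (1+2\lambda)^{-(n-1)/2}$ at $\lambda = \frac{t}{2(1-t)}$ collapses the bound to $\big(1+\frac{t}{1-t}\big)^{-(n-1)/2} = (1-t)^{(n-1)/2}$, and $\log(1-t) \leqslant -t$ finishes it.

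The second step is to integrate. Put $t_0 = 2(\log r)/(n-1)$, chosen so that $r e^{-(n-1)t_0/2} = 1$. Bounding $\mathbb{P}(\max_i X_i > t)$ by $1$ on $[0,t_0]$ and by $r e^{-(n-1)t/2}$ on $[t_0,1]$, and extending the latter integral to $+\infty$, gives
\[
\mathbb{E}[\max_i X_i] \;\leqslant\; t_0 + r\int_{t_0}^{\infty} e^{-(n-1)t/2}\, dt \;=\; t_0 + \frac{2}{n-1} \;=\; \frac{2\log r + 2}{n-1},
\]
and this also holds trivially when $t_0 \geqslant 1$ (the left side is then at most $1 \leqslant t_0$). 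Finally, a one-line check shows $\frac{2\log r + 2}{n-1} \leqslant \frac{4\log r + 16}{n}$ for all $n \geqslant 2$, $r \geqslant 1$: cross-multiplying reduces it to $0 \leqslant 2(\log r)(n-2) + 14n - 16$, which is clear.

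I do not expect a genuine obstacle. The one point requiring care is the Gaussian tail constant: one really needs $\bar\Phi(x) \leqslant \frac12 e^{-x^2/2}$ for all $x \geqslant 0$ (proved by noting that $\frac12 e^{-x^2/2} - \bar\Phi(x)$ vanishes at $x=0$ and at $x=+\infty$ while its derivative changes sign only once), rather than the plain Chernoff bound $\bar\Phi(x) \leqslant e^{-x^2/2}$, which would leave an extra additive $\log 2$ inside the bracket — still harmless here, but less tidy. An alternative to the $(g,h)$ representation is to estimate the incomplete beta integral directly, using $B(1/2,(n-1)/2) \geqslant \sqrt{2\pi/n}$ from Gautschi's inequality for $\Gamma((n-1)/2)/\Gamma(n/2)$; this works as well but introduces a spurious $t^{-1/2}$ factor, so I would keep to the probabilistic argument. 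In every version the (generous) constant $16$ absorbs the accumulated slack comfortably.
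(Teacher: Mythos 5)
Your proof is correct, and it takes a genuinely different route from the paper's. The paper works entirely through moments: it computes $\mathbb{E}X^k = \prod_{j=0}^{k-1}\frac{1+2j}{n+2j}$, splits the exponential series at $k=\lfloor n/2\rfloor$ to bound the moment generating function by $\mathbb{E}e^{tX}\leqslant \frac{1}{1-2t/n}+2^{1/2-n/2}e^{t}$ for $t\leqslant n/2$, and then invokes the soft maximal inequality $\mathbb{E}[\max_i X_i]\leqslant \frac{\log r+\mathbb{E}e^{tX}}{t}$ with $t=n/4$, which delivers $\frac{4\log r+16}{n}$ directly. You instead exploit the probabilistic origin of the Beta variable as $g^2/(g^2+h)$ with $g\sim N(0,1)$ and $h\sim\chi^2_{n-1}$ (which is exactly the quantity $(w^\top u)^2/\|w\|_2^2$ that motivates the lemma in \mysec{random}), derive the clean tail bound $\mathbb{P}(X_1>t)\leqslant(1-t)^{(n-1)/2}\leqslant e^{-(n-1)t/2}$, and integrate the union-bounded tail. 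All the individual steps check out: the identity $\{X_1>t\}=\{g^2>\tfrac{t}{1-t}h\}$, the refined Gaussian tail $\bar\Phi(x)\leqslant\tfrac12 e^{-x^2/2}$, the $\chi^2$ Laplace transform $(1+2\lambda)^{-(n-1)/2}$ collapsing to $(1-t)^{(n-1)/2}$, the choice $t_0=2(\log r)/(n-1)$, and the final comparison $\frac{2\log r+2}{n-1}\leqslant\frac{4\log r+16}{n}$ reducing to $0\leqslant 2(\log r)(n-2)+14n-16$ for $n\geqslant 2$. What your approach buys is a strictly sharper intermediate bound, $\frac{2\log r+2}{n-1}$, at the cost of needing the exact distributional representation and the slightly finer Gaussian tail constant; the paper's moment computation is more self-contained (it never leaves the Beta density) but is looser, absorbing its slack into the generous constant $16$. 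Either argument establishes the stated lemma.
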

\begin{proof}
We have $\mathbb{E} X^k = \prod_{j=0}^{k-1} \frac{1 + 2j}{n+2j}$. For all $k>0$, we have
$
\mathbb{E} X^k \leqslant \prod_{j=0}^{k-1} \frac{2 + 2j}{n} = \frac{2^k k!}{n^k}
$, while for $k \geqslant \lfloor n/2 \rfloor $, we have
$
\mathbb{E} X^k \leqslant \prod_{j=0}^{ \lfloor n/2 \rfloor - 1} \frac{1 + 2j}{n+2j} \leqslant 2^{- \lfloor n/2 \rfloor }
$. This leads to, for $t \leqslant n/2$,
$$ \mathbb{E} e^{t X} \leqslant \sum_{k=0}^{ \lfloor n/2 \rfloor }  \big( \frac{2 t}{n} \big)^k + 2^{- \lfloor n/2 \rfloor } \sum_{k =  \lfloor n/2 \rfloor +1}^ \infty \frac{t^k}{k!} \leqslant \frac{1}{1-2t/n} +   2^{1/2-n/2} e^t.
$$
Using standard results from probability (see, e.g.,~\cite{boucheron2013concentration}), we get, with $t = n/4$:
$$
\mathbb{E} \big[ \max_{ i \in \{1,\dots,r\}}  X_i \big]
\leqslant \frac{\log r  + \mathbb{E} e^{t X} }{t}
\leqslant \frac{\log r + 2 + 2^{1/2-n/2} e^{n/4}}{n/4} \leqslant \frac{ 4\log r +  16 }{n}.
$$

\end{proof}
   \bibliographystyle{plain}
\bibliography{decomposition}

\end{document}